\newtheorem{definition}{Definition}
\newtheorem{theorem}{Theorem}
\newtheorem{lemma}[theorem]{Lemma}
\newtheorem{claim}[theorem]{Claim}
\newcommand{\vwo}{\vw^\ast}
\newcommand{\bto}{\vwo}
\newcommand{\btt}{\hat\vw^t}
\newcommand{\vwn}{\vw^+}
\newcommand{\xtt}{\hat\vx^t}
\newcommand{\xto}{\vx^{t,\ast}}
\newcommand{\tvw}{\tilde\vw}
\newcommand{\hvw}{\hat\vw}
\newcommand{\ols}{\textsf{OLS}\xspace}
\newcommand{\irls}{\textsf{IRLS}\xspace}
\newcommand{\stir}{\textsf{STIR}\xspace}
\newcommand{\girls}{\textsf{STIR-GD}\xspace}
\newcommand{\torrent}{\textsf{TORRENT}\xspace}
\newcommand{\tgd}{\textsf{TORRENT-GD}\xspace}
\newcommand{\wucb}{\textsf{WUCB-Lin}\xspace}
\newcommand{\rucb}{\textsf{RUCB-Lin}\xspace}
\newcommand{\<}{\leftarrow}
\newcommand{\newreptheorem}[2]{\newtheorem*{rep@#1}{\rep@title}
\newenvironment{rep#1}[1]{\def\rep@title{#2 \ref*{##1}}\begin{rep@#1}}{\end{rep@#1}}
}
\newcounter{tmpcounter}
\newcounter{modelcounter}
\title{Globally-convergent Iteratively Reweighted Least Squares\\for Robust Regression Problems}
\author{Bhaskar Mukhoty$^\dagger$ \and Govind Gopakumar$^\dagger$\thanks{Work done as a master's student at IIT Kanpur} \and Prateek Jain$^\ddagger$ \and Purushottam Kar$^\dagger$\\$^\dagger$IIT Kanpur\\$^\ddagger$Microsoft Research India\\\texttt{\{bhaskarm,govindg,purushot\}@cse.iitk.ac.in}, \texttt{prajain@microsoft.com}}
\begin{document}

\maketitle

\begin{abstract}
We provide the first global model recovery results for the \irls (iteratively reweighted least squares) heuristic for robust regression problems. \irls is known to offer excellent performance, despite bad initializations and data corruption, for several parameter estimation problems. Existing analyses of \irls frequently require careful initialization, thus offering only local convergence guarantees. We remedy this by proposing augmentations to the basic \irls routine that not only offer guaranteed global recovery, but in practice also outperform state-of-the-art algorithms for robust regression. Our routines are more immune to hyperparameter misspecification in basic regression tasks, as well as applied tasks such as linear-armed bandit problems. Our theoretical analyses rely on a novel extension of the notions of strong convexity and smoothness to \emph{weighted strong convexity and smoothness}, and establishing that sub-Gaussian designs offer bounded \emph{weighted condition numbers}. These notions may be useful in analyzing other algorithms as well.
\end{abstract}

\section{Introduction}
\label{sec:intro}
Suppose there exists an unknown gold model $\vwo$ and we are given $n$ data points $(\vx_i,y_i)_{i=1}^n$ with $d$-dimensional covariates $\vx_i \in \bR^d$ and the real-valued responses $y_i$ generated as $y_i = \vx_i^\top\vwo$. However, for an unknown set of $k < n$ data points $i_1, \ldots i_k$, the responses get corrupted i.e. we instead receive $y_{i_j} = \vx_{i_j}^\top\vwo + b_{i_j}$ where $b_{i_j} \in \bR$ is the corruption. Given the complete set of clean and corrupted data points $(\vx_i,y_i)_{i=1}^n$, can we recover the gold model $\vwo$?

This is the classical robust regression problem that has become increasingly relevant to machine learning and statistical estimation techniques which frequently encounter situations where data is not trustworthy. Works exist in settings where test data is corrupted in order to fool a model that was learnt on clean data \cite{GoodfellowMP2018}, as well as the more challenging setting, on which we focus, where the training data presented to the algorithm is itself corrupted \cite{CandesLW2009, ChenCM2013, FengXMY2014}.

We will seek to offer reliable model recovery despite the presence of (possibly maliciously) corrupted data in the training set. Settings which present corrupted data to learning algorithms include relatively innocuous instances of erasures and missing data, improperly or mistakenly attributed data, transient or temporary changes in user-behavior patterns, as well as deliberate and malicious attempts to derail recommendation systems and other decision-making systems using malware, click-bots and other fraudulent techniques.

Despite being a well established field, given the early seminal contributions of Huber \cite{Huber1964} and Tukey \cite{Tukey1960}, robust statistics and algorithms have received renewed interest given the threat to modern machine learning techniques. Of the several techniques that have been proposed for robust learning problems, one heuristic, namely the iteratively reweighted least squares (\irls), remains a practitioner's favorite owing to its ease of use and excellent performance. The \irls technique has been effectively adapted to several problems, including sparse recovery, and robust regression. The work of \cite{StaszakV2016} shows that certain biological dynamical systems can be modeled upon the IRLS principle as well.

\subsection{Our Contributions}
We offer several advances in the understanding and application of the \irls method. In particular, we provide the first global model recovery guarantee for \irls for robust regression - our contributions are distinguished in the context of existing analyses for \irls in \S\ref{sec:related}. We also propose algorithmic augmentations, in particular a fast gradient-based variant, to the basic \irls heuristic which offer superior performance compared to existing state-of-the-art robust algorithms in terms of speed, as well as resilience to misspecified hyperparameters. We demonstrate this in the standard linear regression setting, as well as an applied setting, namely linear-armed bandits.

\begin{table*}[t]%
\centering
\caption{\small Algorithms for the Robust Regression problem (corrupted responses). $^\text{\textdagger}$Please see \S\ref{sec:formulation} for details. Algorithms able to tolerate adaptive (as opposed to oblivious) adversaries are more resilient. A more robust algorithm can handle larger $\alpha$. Sub-Gaussian covariates offer a much more flexible model than (isotropic) Gaussian covariates.}
\resizebox{\columnwidth}!{\begin{tabular}{ccccc}
\hline
Paper & Adversary Model$^\text{\textdagger}$ & Breakdown point$^\text{\textdagger}$ & Covariate Model & Technique\\\hline
Bhatia et. al. 2015 \cite{BhatiaJK2015} & Adaptive & $\alpha \geq \Om1$ & sub-Gaussian & Hard Thresholding (fast)\\\hline
Chen \& Dalalyan 2010 \cite{ChenD2012} & Adaptive & $\alpha \geq \Om1$ & sub-Gaussian & SOCP (slow)\\\hline
Wright \& Ma 2010 \cite{WrightM2010} & Oblivious & $\alpha \rightarrow 1$ & Isotropic Gaussian & $L_1$ regularization (slow)\\\hline
\textbf{This Paper} & \textbf{Adaptive} & $\mathbf{\valpha \geq \Om1}$  & \textbf{sub-Gaussian} & \textbf{Reweighting (fast)}\\\hline
\end{tabular}
}
\label{tab:rreg}
\end{table*}

\section{Related Work}
\label{sec:related}
Two lines of work directly relate to our contributions: 1) robust algorithms for regression and other learning problems, and 2) works that analyze (variants of) the \irls heuristic in various settings. We review both, as well as distinguish our contributions, below.

\noindent\textbf{Robust Learning Algorithms}: Work on robust statistics dates back several decades \cite{Huber1964,Tukey1960} and is too vast to be reviewed in detail. Recent years have seen interest in scalable algorithms for classification \cite{FengXMY2014}, principal component analysis \cite{CandesLW2009}, and moment estimation \cite{DiakonikolasKKLMS2017b}. Within the specific problem of robust regression, two broad lines of work exist:

\noindent\textit{Covariate (feature) corruption}: Results in this setting usually either give only weak guarantees, or else severely constrain data. e.g., \cite{ChenCM2013,McWilliamsKLB14} allow only a $\bigO{1/\sqrt d}$ fraction of data to be corrupted, $d$ being the ambient dimensionality, whereas \cite{DiakonikolasKS2019,LiuSLC2018} only admit covariates drawn from a Gaussian distribution.

\noindent\textit{Response (label) corruption}: Variants within this setting arise based on the power of the adversary introducing the corruptions, the fraction of data points that can be corrupted, restrictions on the choice of covariates, and scalability of the algorithms. Table~\ref{tab:rreg} summarizes these traits for a selection of algorithms. We refer the reader to \cite{BhatiaJK2015,DiakonikolasKS2019} for other references.

\noindent\textbf{\irls Variants and Analyses}: The \irls heuristic has been successfully applied to several problems including sparse recovery \cite{BaBPB2013, DaubechiesDFG2010}, facility location problems \cite{BrimbergLove1993} (via the Weiszfeld procedure), and optimizing various robust cost functions, such as the $L_q$ and Huber loss functions \cite{AftabHartley2015,BissantzDMS2009,Cline1972,Osborne1985}. 

Some of these works are not directly relevant to robust regression as they either operate with uncorrupted data \cite{BrimbergLove1993}, or else assume that the noise is Gaussian \cite{BaBPB2013, DaubechiesDFG2010}. Convergence guarantees for \irls are common in these benign settings.
To handle adversarial corruptions, it is common to use \irls to optimize a \emph{robust cost function} $F$ such as $L_q$ or Huber loss, in the anticipation that the model so obtained, say $\hvw = \arg\min F(\vw; \bc{(\vx_i,y_i)})$, will ensure $\hvw \approx \vwo$.

However, none of these works actually ensure such a result i.e. $\hvw \approx \vwo$. Some works \cite{BissantzDMS2009, Cline1972, Osborne1985} operate with cost functions that are convex (e.g. $L_q$ for $q \in [1,2]$) and simply show that \irls approaches small cost function values. Other approaches \cite{AftabHartley2015} do work with non-convex cost functions, but then offer only monotonicity guarantees and no global convergence guarantees.

We bridge this gap by presenting a much stronger analysis of \irls that guarantees \emph{global recovery} of the gold model $\vwo$ under mild conditions. Key to our proof technique is a novel concept that extends the basic notions of strong convexity and strong smoothness to \emph{weighted} versions of the same, as well as a guarantee that Gaussian and sub-Gaussian designs have bounded \emph{weighted condition numbers}. These results may be of independent interest in analyzing other algorithms.

\section{Notation}
\label{sec:notation}
Bold lower-case Latin letters $\vx, \vy$ denote vectors. $\vx_i$ denotes the $i\nth$ coordinate of the vector $\vx$. Upper case Latin letters $A, X$ denote matrices. For a vector $\vv \in \bR^n$ and set $S \subset [n]$, $\vv_S$ denotes the vector with $(\vv_S)_i = \vv_i$ for $i \in S$ and $(\vv_S)_j = 0$ for $j \notin S$. Similarly, for any matrix $A \in \bR^{d \times n}$ and any set $S \subset [n]$, $A_S$ denotes the matrix in which columns $i \in S$ in $A_S$ are identical to those in $A$ and columns $j \notin S$ are filled with zeros.

$\lambda_{\min}(X)$ and $\lambda_{\max}(X)$ denote, respectively, the smallest and largest eigenvalues of a square symmetric matrix $X$. $\cB_2(\vv,r) := \bc{\vx \in \bR^d: \norm{\vx-\vv}_2 \leq r}$ denotes the ball of radius $r$ centered at $\vv$. $S^{d-1}$ denotes the surface of the unit sphere in $d$ dimensions. We use the shorthand $\cB_2(r) := \cB_2(\vzero,r)$.
\section{Problem Formulation}
\label{sec:formulation}
Given $n$ data points $(\vx_i,y_i) \in \bR^d \times \bR$, let $R_X := \max_{i \in [n]}\ \norm{\vx_i}_2$ be the maximum Euclidean length of any covariate, $X = [\vx_1,\ldots,\vx_n] \in \bR^{d\times n}$ be the covariate matrix, and $\vy = [y_1,\ldots,y_n]^\top \in \bR^n$ the response vector. Assume that the covariates are generated as $\vx_1,\ldots,\vx_n \sim \cD$ from an unknown distribution $\cD$ with mean $\vmu \in \bR^d$ and sub-Gaussian norm \cite{Vershynin2018} $\norm{\cD}_{\Psi_2} \leq R$. $\vwo \in \bR^d$ will be the gold model with $R_W := \norm{\vwo}_2$.

\noindent\textbf{Noise Model}: Given the data covariates and the gold model, the responses are generated as $\vy = X^\top\vwo + \vb$ where $\vb = [b_1,\ldots,b_n]$ is the vector of corruptions. We make the standard assumption that $\norm\vb_0 \leq \alpha\cdot n$. Let $B := \supp(\vb)$ denote the ``bad'' points which suffer corruption i.e. $\vb_j \neq 0$ for $j \in B$ (note that $\abs{B} \leq \alpha\cdot n$) and $G = [n] \setminus B$ denote the ``good'' points where $\vb_i = 0$ and thus $y_i = \vx_i^\top\vwo$ for $i \in G$. To avoid clutter, we abuse notation to denote $G := \abs{G}$ and $B := \abs{B}$. The largest value of the corruption fraction $\alpha$ that an algorithm can tolerate is known as its \emph{breakdown point}.  

\noindent\textbf{Adversary Model}: We will work with a partially adaptive adversary which is compelled to choose locations of the corruptions $\supp(\vb) = B$ before any data covariates have been generated or $\vwo$ is revealed. However, the adversary may fill in the corruption values at those locations with knowledge of $\vwo$ and $X$. Our results can be extended to a \emph{fully adaptive adversary} that choose $\supp(\vb)$ after looking at $\vwo$ and $X$ as well, but at a cost of a smaller breakdown point $\alpha$.

Key to our analyses are the notions of \emph{weighted strong convexity and smoothness} which we define below. These definitions reflect the fact that \irls solves \emph{weighted} regression problems iteratively.

\begin{definition}[WSC/WSS]
We say that a covariate matrix $X \in \bR^{d \times n}$ offers \emph{weighted strong convexity} (WSC) at level $\lambda_S$ (resp. \emph{weighted strong smoothness} (WSS) at level $\Lambda_S$), with respect to a diagonal weight matrix $S = \diag(\vs) \in \bR^{n \times n}$ where $\vs_i \geq 0, i \in [n]$, if
\[
\lambda_S \leq \lambda_{\min}(XSX^\top) \leq \lambda_{\max}(XSX^\top) \leq \Lambda_S
\]
\end{definition}

\section{Proposed Methods}
\label{sec:method}
\irls solves the robust regression problem by repeatedly alternating between the following two steps
\begin{enumerate}
	\item \textbf{Reweighing}: Given a model $\hvw$, assign every data point a weight $s_i$ inversely proportional to its residual w.r.t. $\hvw$ i.e. set $\vs_i = \frac1{\abs{\vx_i^\top\hvw - y_i}}$.
	\item \textbf{Weighted Least Squares}: Solve a weighted least squares problem $\min_\vw\sum_{i=1}^n\vs_i(y_i - \vx_i^\top\vw)^2$ with above weights to obtain a new model $\vwn = (XSX^\top)^{-1}XS\vy$ where $S = \diag(\vs)$.
\end{enumerate}

The intuition behind this procedure is that corrupted points are likely to suffer large residuals and hence get downweighted. Given that this procedure runs the risk of divide-by-zero errors and numerical precision issues, it is common to truncate weights by employing a \emph{truncation parameter} $M$ while assigning weights\footnote{\label{foot:trunc-reg-eq}Literature often cites a \emph{regularization} procedure that sets $\vs_i = \frac1{\max\bc{\abs{\vx_i^\top\hvw - y_i},\delta}}$ given a parameter $\delta$. Setting $\delta = \frac1M$ shows truncation to be equivalent to regularization.} to the points i.e. $\vs_i = \min\bc{\frac1{\abs{\vx_i^\top\hvw - y_i}},M}$. However, it is suboptimal to rely on any single truncation value $M$. To see why, take a hypothetical example where the adversary introduces corruptions using a \emph{fake model} $\tvw$ as $b_i = \vx_i^\top(\tvw - \vwo)$ (i.e. $y_i = \vx_i^\top\tvw$) for all $i \in B$.

\hspace*{\parindent} \emph{Situation 1}: If we set $M$ to a small value (aggressive truncation), then no data point can ever hope to get a large weight. However, convergence to $\vwo$ is assured only when points in $G$ receive really large weights in comparison to points in $B$. Setting a small value of $M$ thus prevents \irls from recovering $\vwo$ accurately.

\hspace*{\parindent} \emph{Situation 2}: If we always use a large value of $M$ (lax truncation) and are unlucky enough to initialize \irls close to $\tvw$, then points in the set $B$ will initially have very small residuals, hence receive large weights (which the large value of $M$ will allow) whereas points in the set $G$ will receive comparatively smaller weights. This will cause \irls to gravitate towards $\tvw$. This example precludes any hope of a global convergence guarantee and forces us to do careful initialization.

The above limitations of \irls are well corroborated by experiments (see \S\ref{sec:exps}). To remedy this, we propose the \stir algorithm in Algorithm~\ref{algo:stir}. \stir executes \irls, but in \emph{stages}, with initial stages employing aggressive truncation with a small value of $M$ and later stages successively relaxing the truncation.

The advantage of the above augmentation is that even if we have an unfortunate initialization, e.g. we start at $\tvw$ itself, the (initially) aggressive truncation will prevent bad points from getting large weights whereas good points, being in majority, even though receiving relatively smaller weights, will still prevent \stir from latching onto $\tvw$ and hopefully attract the procedure towards the gold model $\vwo$. Subsequent stages, where truncation is relaxed, will allow good points to be given large weights, thus differentiating them from bad points. This would force \stir towards $\vwo$.

Algorithm~\ref{algo:girls} presents \girls, a gradient version of \stir, that replaces weighted least squares by a much cheaper gradient step. This benefits large datasets, where solving weighted least squares repeatedly may be prohibitive. We note that although \emph{stagewise} \irls procedures have been proposed in literature \cite{BissantzDMS2009}, previous works neither give model recovery guarantees, nor offer scalable gradient versions of \irls.

\begin{algorithm}[t]
	\caption{\stir - Stagewise-Truncated \irls}
	\label{algo:stir}
	\begin{algorithmic}[1]
		{\small
		\REQUIRE Data $X, \vy$, initial truncation $M_1$, increment $\eta > 1$
		\ENSURE A model $\vw$
		\STATE $\vw^1 \leftarrow \vzero$
		\FOR{$T = 1, 2, \ldots, K-1$}
			\STATE $\vw^{T,1} \leftarrow \vw^T$
			\STATE $t \leftarrow 1$
			\WHILE{$\norm{\vw^{T,t+1}-\vw^{T,t}}_2 > \frac2{\eta M_T}$}
				\STATE $\vr^t \leftarrow X^\top\vw^{t,1} - \vy$
				\STATE $S^t \leftarrow \diag(\vs^t) ,\qquad \vs^t_i \leftarrow \min\bc{\frac1{\abs{\vr^t_i}},M_T}$
				\STATE $\vw^{T,t+1} \leftarrow (XS^tX^\top)^{-1}XS^t\vy$
				\STATE $t \leftarrow t+1$
			\ENDWHILE
			\STATE $\vw^{T+1} \leftarrow \vw^{T,t+1}$
			\STATE $M_{T+1} \leftarrow \eta\cdot M_T$
		\ENDFOR
		\STATE \textbf{return} {$\vw^K$}
		}
	\end{algorithmic}
\end{algorithm}

\begin{algorithm}[t]
	\caption{\girls: \stir-Gradient Descent}
	\label{algo:girls}
	\begin{algorithmic}[1]
		\makeatletter
		\setcounter{ALC@line}{7}
		\makeatother
		{\small
		\REQUIRE Data $X, \vy$, initial truncation $M_1$, increment $\eta > 1$, step length $C$
		\ENSURE A model $\vw$
				\STATE \hspace*{2em}$\vw^{T,t+1} \leftarrow \vw^{T,t} - \frac{2C}{M_Tn}\cdot XS^t\vr^t$\\
				\COMMENT{Rest of steps 1-14 remain same as in \stir}
		}
	\end{algorithmic}
\end{algorithm}

\section{\irls is Majorization-minimization on a Scaled Huber Loss}
\label{sec:huber}
Before presenting a convergence analysis for \stir, we point out a curious link between \irls, \stir and the Huber loss function. We note that our observation may be folklore. The Huber loss is widely used in robust regression applications \cite{AftabHartley2015, BissantzDMS2009, Cline1972, Osborne1985}, particularly those used in situations with heavy tailed noise.
\[
h_\epsilon(x) =
\begin{cases}
\frac12x^2 & \abs x \leq \epsilon\\
\epsilon\abs x - \frac12\epsilon^2 & \abs x > \epsilon
\end{cases}
\]
The function smoothly transitions from quadratic behavior close to the origin, to linear far from the origin. Now consider the following loss function
\[
f_\epsilon(x) =
\begin{cases}
\frac12\br{\frac{x^2}\epsilon + \epsilon} & \abs x \leq \epsilon\\
\abs x & \abs x > \epsilon
\end{cases}
\]
It is easily seen that $f_\epsilon(x) = \frac{h_\epsilon(x)}\epsilon + \frac\epsilon2$ and thus, $f_\epsilon()$ is simply a scaled (and translated) version of the Huber loss function, as well as that $\abs x \leq f_\epsilon(x) \leq \abs x + \frac\epsilon2$. Now, for any $a \in \bR, \epsilon > 0$, consider the function
\[
g_\epsilon(x;a) := \frac12\br{\frac{x^2}{\max\bc{\abs a,\epsilon}} + \max\bc{\abs a,\epsilon}}
\]
\begin{figure}%
\centering
\includegraphics[width=0.75\columnwidth]{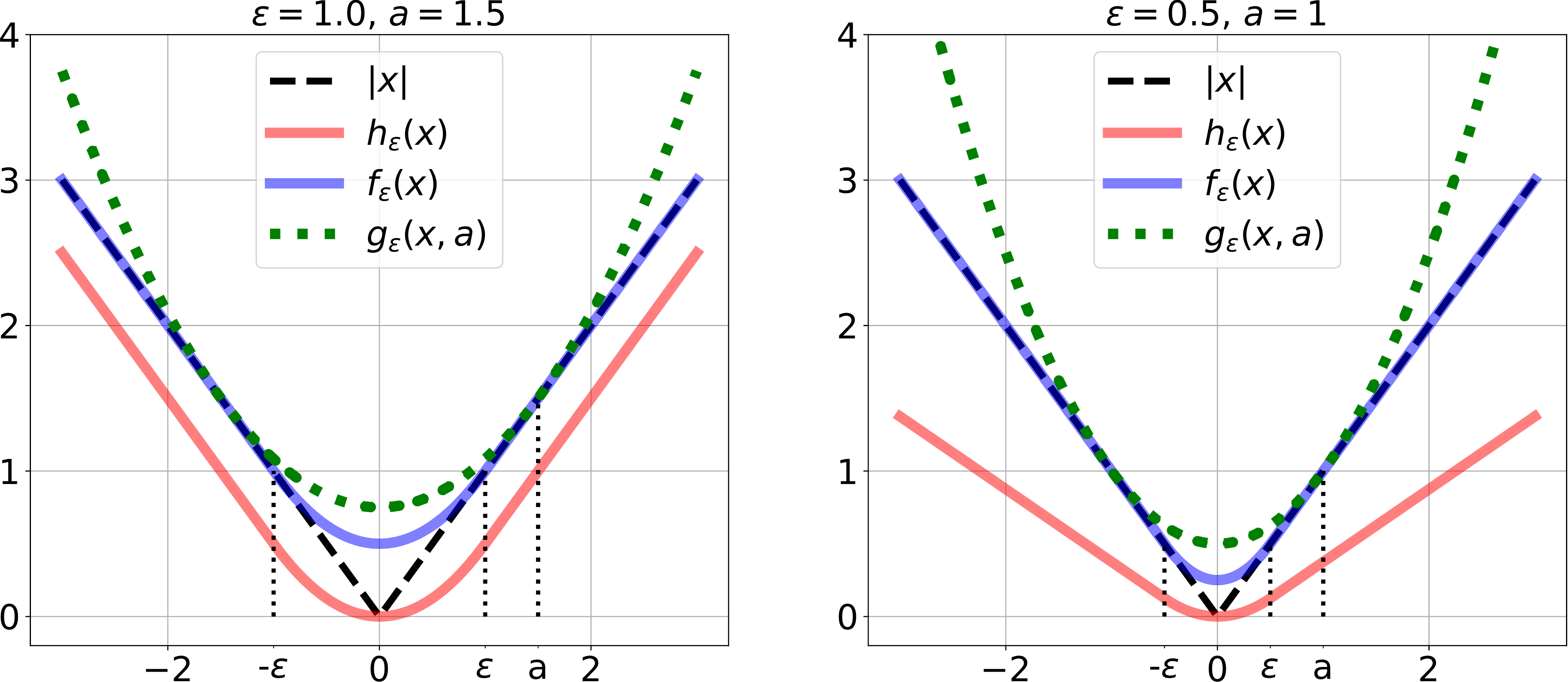}%
\caption{A depiction of Huber $h_\epsilon()$, scaled Huber $f_\epsilon()$ loss functions, and its majorizer $g_\epsilon()$ for various $\epsilon$.}%
\label{fig:huber}%
\end{figure}
Given a model $\vw^0$ and data $(\vx_i,y_i)_{i=1}^n$, denote
\begin{align*}
\ell_\epsilon(\vw) &:= \frac1n\sum_{i=1}^nf_\epsilon\br{\ip\vw{\vx_i}-y_i}\\
\wp_\epsilon(\vw;\vw^0) &:= \sum_{i=1}^ng_\epsilon\br{\ip\vw{\vx_i}-y_i;\ip{\vw^0}{\vx_i}-y_i}
\end{align*}
The following observations are key (see Appendix~\ref{app:huber}).
\begin{enumerate}
	\item $\wp_\epsilon(\cdot;\vw^0)$ is a majorizer for $\ell_\epsilon(\cdot)$ at $\vw^0, \forall \epsilon > 0$ i.e. $\wp_\epsilon(\vw;\vw^0) \geq \ell_\epsilon(\vw), \forall \vw$ but $\wp_\epsilon(\vw^0;\vw^0) = \ell_\epsilon(\vw^0)$
	\item If the current model is $\vw^0$ then $M$-truncated \irls minimizes $\wp_\frac1M(\vw;\vw^0)$ to obtain the next model.
	\item $\nabla\wp_\epsilon(\vw^0;\vw^0) = \nabla\ell_\epsilon(\vw^0)$.
\end{enumerate}

Thus, \irls can be seen as performing majorization-minimization \cite{Mairal2015} on the scaled Huber loss $\ell_\epsilon(\cdot)$. The reweighing step effectively constructs the majorizer function $\wp_\epsilon(\cdot,\vw^0)$ over which the least squares step then performs minimization. Point 3 above shows that \girls can be effectively seen as performing gradient descent with respect to $\ell_\epsilon(\vw^0)$.

This also allows us to interpret the stages of \stir as using scaled Huber losses with successively smaller values of $\epsilon$ (point 2 above shows that \stir sets $\epsilon = \frac1M$). Note that in the limit $\epsilon \rightarrow 0$, $\ell_\epsilon(\cdot)$ approaches the absolute error function, and thus, in the limit $M \rightarrow \infty$, \stir ends up optimizing the absolute error function. \girls can be seen as simply replacing the minimization steps with a gradient descent step.

\section{Convergence Analysis}
\label{sec:analysis}
In this section, we establish that both \stir and \girls enjoy a linear rate of convergence, as well as a breakdown point $\alpha \geq \Om1$. Theorem~\ref{thm:main} summarizes the results. It is notable that \stir and \girls offer a breakdown point of greater than $\frac1{5.25}$ (for Gaussian covariates -- see below for details), which is far superior to those offered by recent works such as \cite{BhatiaJK2015, BhatiaJKK2017} which offer breakdown points of $\approx \frac1{60}$ and $\frac1{10000}$ respectively (again for Gaussian covariates).

\begin{theorem}
\label{thm:main}
Suppose we have $n$ data points with the covariates $\vx_i$ sampled from a sub-Gaussian distribution $\cD$ and an $\alpha$ fraction of the data points are corrupted. If \stir (or \girls) is initialized at an (arbitrary) point $\vw^0$, with an initial truncation that satisfies $M_1 \leq \frac1{\norm{\vw^0-\vwo}_2}$, and executed with an increment $\eta > 1$ such that we have $\alpha \leq \frac c{2.88\eta + c}$, where $c > 0$ is a constant that depends only on $\cD$, then for any $\epsilon > 0$, with probability at least $1 - \exp(-\softOm n)$, after $K = \bigO{\log\frac1{M_1\epsilon}}$ stages, we must have $\norm{\vw^K - \vwo}_2 \leq \epsilon$. Moreover, each stage consists of only $\bigO1$ iterations.
\end{theorem}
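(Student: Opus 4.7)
The plan is to prove a stagewise invariant by induction: entering stage $T$, we will have $\|\vw^T - \vwo\|_2 \leq 1/M_T$. The base case $T=1$ is granted by the assumption $M_1 \leq 1/\|\vw^0 - \vwo\|_2$, and since $M_{T+1} = \eta M_T$, after $K$ stages one has $\|\vw^K - \vwo\|_2 \leq 1/(M_1 \eta^{K-1})$, so any target accuracy $\epsilon$ is reached in $K = \bigO{\log(1/(M_1\epsilon))}$ stages.

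For the inductive step, I would first derive a per-iteration contraction inside a stage. Substituting $\vy = X^\top\vwo + \vb$ into the least-squares update of Algorithm~\ref{algo:stir} gives $\vw^{T,t+1} - \vwo = (XS^tX^\top)^{-1}XS^t\vb = (XS^tX^\top)^{-1}X_B S^t_B\vb_B$ (using $\vb_G = \vzero$). Then using $\vb_i = \vx_i^\top(\vw^{T,t}-\vwo) - \vr^t_i$ on $B$,
\[
\vw^{T,t+1} - \vwo = (XS^tX^\top)^{-1}\br{X_B S^t_B X_B^\top (\vw^{T,t} - \vwo) - X_B S^t_B \vr^t_B}.
\]
The first summand is bounded by $(\Lambda_{S^t_B}/\lambda_{S^t})\,\|\vw^{T,t}-\vwo\|_2$ directly from the WSC/WSS definition. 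For the second, the truncation rule forces $|\vs^t_i \vr^t_i|\leq 1$, so $\|X_B S^t_B \vr^t_B\|_2 \leq \|X_B\|_{op}\sqrt{|B|}$. Appealing to the sub-Gaussian concentration results promised by the paper, with $|B|=\alpha n$ and $n$ sufficiently large relative to $d$, I would assert (i) $\lambda_{S^t} \geq c_1 M_T n$ (the majority good points receive near-maximal weight whenever the stage invariant holds), (ii) $\Lambda_{S^t_B} \leq c_2 M_T \alpha n$ (only $\alpha n$ points contribute and each carries weight $\leq M_T$), and (iii) $\|X_B\|_{op}\leq c_3\sqrt{\alpha n}$. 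Together these yield a contraction
\[
\|\vw^{T,t+1} - \vwo\|_2 \leq \rho\,\|\vw^{T,t} - \vwo\|_2 + \delta/M_T, \quad \rho := c_2\alpha/c_1,\ \delta := c_3\alpha/c_1.
\]

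Next I would convert this into stage-level progress. The fixed point of the recursion has magnitude $\delta/[M_T(1-\rho)]$, and demanding $\delta/(1-\rho)\leq 1/\eta$ reduces by elementary algebra to the stated tolerance $\alpha\leq c/(2.88\eta+c)$ with $c=c_1/c_2$. Because the stage begins with $\|\vw^{T,1}-\vwo\|_2\leq 1/M_T$ --- only a constant factor above the target $1/(\eta M_T)$ --- the contraction closes the gap in $\bigO{\log\eta} = \bigO{1}$ iterations. A short triangle-inequality calculation (using the same contraction) relates consecutive step sizes $\|\vw^{T,t+1}-\vw^{T,t}\|_2$ to the distance from the optimum, so Algorithm~\ref{algo:stir}'s stopping test $\|\vw^{T,t+1}-\vw^{T,t}\|_2\leq 2/(\eta M_T)$ triggers exactly once $\|\vw^{T+1}-\vwo\|_2\leq 1/M_{T+1}$, propagating the invariant.

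For \girls, I would exploit the Huber-loss interpretation of Section~\ref{sec:huber}: the reweighing step constructs a quadratic majorizer $\wp_{1/M_T}(\cdot;\vw^{T,t})$ of the scaled Huber loss whose Hessian is proportional to $XS^tX^\top$, so WSC/WSS translate verbatim into strong convexity and smoothness of the majorizer, and a gradient step with step length $C$ tuned appropriately inherits the same linear-contraction template, modulo constants absorbed into $c_1,c_2$. The \emph{hardest} step, in my view, is not the algebra above but the underlying weighted concentration result itself --- establishing simultaneous lower-bounded $\lambda_{S^t}$ on the full design and upper-bounded $\Lambda_{S^t_B}$ on the bad subset \emph{uniformly} over all iterate-dependent weight matrices $S^t$ that may arise during execution. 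Handling this uniformity likely requires an $\epsilon$-net argument over either candidate iterates $\vw^{T,t}$ or the discrete set of possible truncation patterns $\{i:|\vr^t_i|\leq 1/M_T\}$, together with sub-Gaussian anti-concentration to ensure the good subset remains well-spread after weighting.
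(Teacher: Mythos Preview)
Your proposal is correct and follows the same high-level skeleton as the paper: a stagewise (``peeling'') induction where well-initialization $\|\vw^{T,1}-\vwo\|_2\leq 1/M_T$ is preserved from stage to stage, with $\bigO{1}$ inner iterations per stage and a union bound over the $\bigO{\log(1/(M_1\epsilon))}$ stages. You also correctly identify the hardest ingredient as the uniform WSC lower bound over all iterate-dependent weight matrices, and the paper indeed handles this via an $\epsilon$-net over models in $\cB_2(\vwo,1/M)$ combined with a deterministic Lipschitz bound on the map $\vw\mapsto\lambda_{\min}(X_GS_GX_G^\top)$.

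The one place your argument genuinely differs is the per-iteration error bound. You decompose $\vb_i=\vx_i^\top(\vw^{T,t}-\vwo)-\vr^t_i$ and exploit the clean pointwise inequality $|\vs^t_i\vr^t_i|\leq 1$ to obtain an explicit linear recursion $\|\vw^{T,t+1}-\vwo\|_2\leq\rho\,\|\vw^{T,t}-\vwo\|_2+\delta/M_T$. The paper instead bounds $\|S\vb\|_2$ directly via a case split on whether $|b_i|\leq 2|\ip{\vw^{T,t}-\vwo}{\vx_i}|$, obtaining $\|S\vb\|_2^2\leq 4B(1+1.01M_T^2\|\vw^{T,t}-\vwo\|_2^2)$, and then argues a dichotomy: either the new error is already below the stage target, or a $0.99$-contraction occurs. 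Both are valid; your recursion is arguably cleaner, while the paper's case analysis is what produces the specific constant $2.88\approx 2\sqrt{2.0301}/0.99$. Your claim that the algebra ``reduces to the stated tolerance $\alpha\leq c/(2.88\eta+c)$'' is therefore slightly off: your decomposition yields the same functional form $\alpha\leq c'/(C'\eta+c')$ but with different numerical constants. Everything else lines up.
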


\noindent\textbf{Global Convergence} Note that the above result allows initialization at any location $\vw^0$, so long as the accompanying value $M_1$ is small enough i.e. $M_1 \leq \frac1{\norm{\vw^0-\vwo}_2}$ which can be ensured using a simple binary search (see \S\ref{sec:exps} for details on parameter setting). In particular, if an estimated upper-bound $\norm{\vwo}_2 \leq W$ is available, then we can set $\vw^0 = \vzero$ and set $M_1 = \frac1W$.

Given this parameter convergence result, we can also establish that \stir and \girls offer linear convergence guarantees with respect to the Huber and absolute loss functions as well. We refer the reader to Appendix~\ref{app:huber-abs} for details.

\noindent\textbf{Breakdown Point} Both \stir and \girls enjoy a breakdown point of $\alpha \leq \frac c{2.88\eta + c}$ where $\eta$ is chosen by us and $c$ is a distribution dependent constant. Bounds on this constant are established for several interesting distributions in Appendix~\ref{app:c-values}. In particular, for the Gaussian distribution $\cN(\vzero, I_d)$, we have $c \geq 0.68$ which, for values of $\eta \rightarrow 1$, endow \stir and \girls with a breakdown point of greater than $\frac1{5.25}$.

\subsection{Proof Outline - the Peeling Strategy}
Given the stage-wise nature of our algorithms \stir and \girls, we employ a \emph{peeling}-based proof strategy that is a departure from the techniques used by previous results such as \cite{BhatiaJK2015,ChenD2012,WrightM2010}.

Our proof partitions the model space into \emph{annular peels} centered at the gold model $\vwo$ (see Figure~\ref{fig:peel}). The outermost peel has a radius of $\frac1{M_1}$, and successive inner peels have radii that are an $\eta$ factor smaller i.e. the subsequent peels have radii $\frac1{\eta M_1}, \frac1{\eta^2M_1}, \frac1{\eta^3M_1}, \ldots$. Note that by setting $M_1 \leq \frac1{\norm{\vw^0-\vwo}_2}$, \stir is guaranteed to reside inside the outermost peel in the beginning.

We then inductively show (see Lemmata~\ref{lem:induc-stir} and \ref{lem:induc-girls}) that once we are inside a certain peel, say $\norm{\vw - \vwo}_2 \leq \frac1{\eta^KM}$, and if the WSC/WSS properties hold with appropriate constants (see Appendix~\ref{app:wsc-wss}), then if we execute ($\eta^KM$)-truncated \irls for a constant number of iterations, we are guaranteed to obtain a model, say $\vw^+$, that ensures $\norm{\vw^+ - \vwo}_2 \leq \frac1{\eta^{K+1}M}$.

This implies that we have entered the next inner peel. We can now set the truncation level to $\eta^{K+1}M$ and continue the process. Note that this is exactly the algorithmic step performed by \stir/\girls (see Algorithm~\ref{algo:stir}, line 12) to start a new stage. Due to lack of space, all complete proofs are given in the appendices.

\begin{figure}%
\includegraphics[width=0.5\columnwidth]{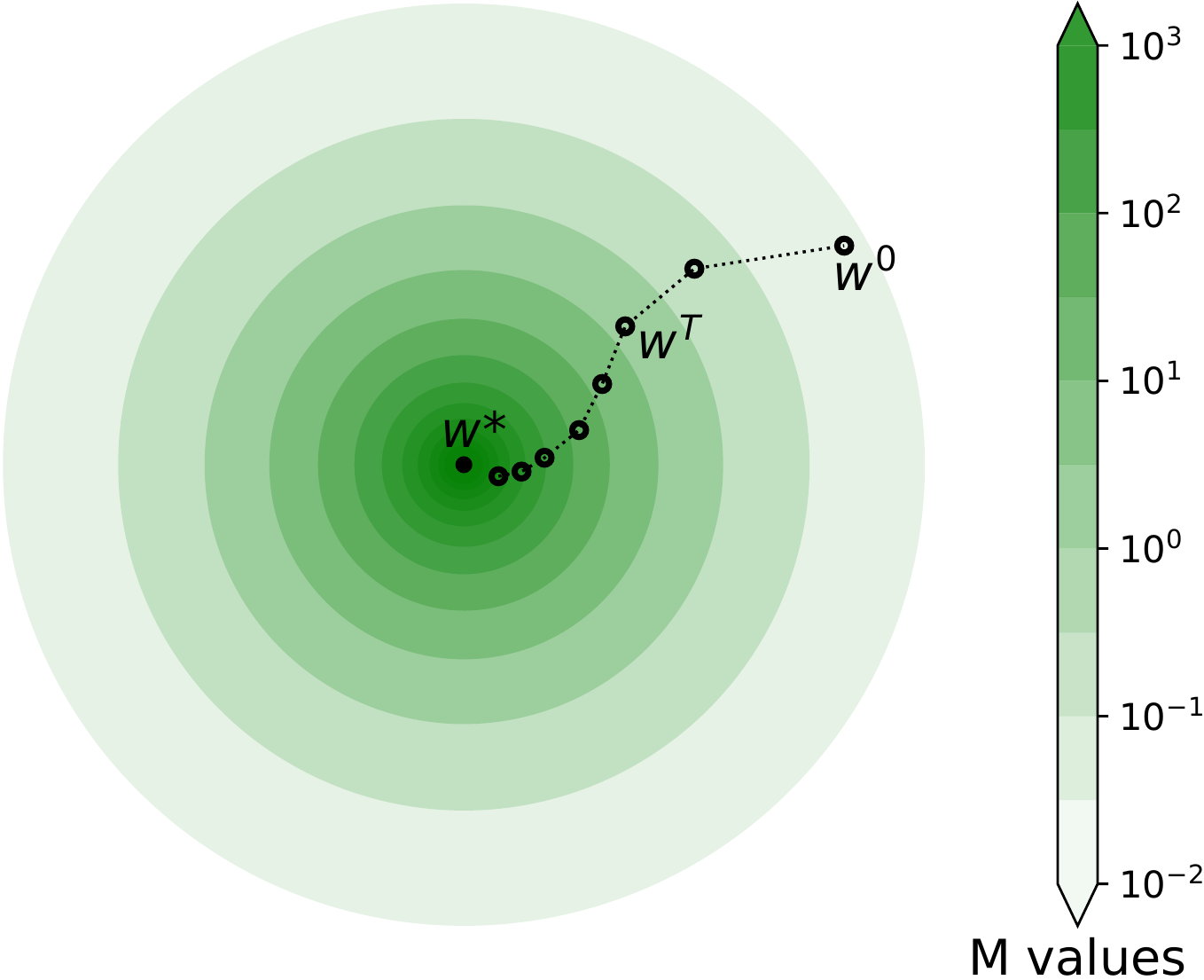}%
\centering
\caption{A depiction of the peeling process. The \stir procedure starts off far away from $\vwo$ and using a small value of $M$. In successive stages, it enters closer peels around $\vwo$ and also begins using larger values of $M$.}%
\label{fig:peel}%
\end{figure}

\subsection{Establishing WSC/WSS}
A central result required for the peeling strategy to work, is ensuring that our covariates satisfy the WSC/WSS properties (that we introduced in \S\ref{sec:formulation}) with respect to the weights assigned to data points by the \stir and \girls algorithms. We show that for covariates drawn from sub-Gaussian distributions, this is indeed true (see Appendix~\ref{app:wsc-wss}).

The use of such \emph{design properties} is quite common in literature e.g., restricted strong convexity/smoothness (RSC/RSS) \cite{DaubechiesDFG2010} in sparse recovery, and subset strong convexity/smoothness (SSC/SSS) \cite{BhatiaJK2015} in robust regression. It is also common to use results on extremal singular values of random matrices \cite{Vershynin2018}, to show that sub-Gaussian covariates satisfy RSC/RSS \cite{AgarwalNW2012} and SSC/SSS \cite{BhatiaJK2015}, with high probability.

However, doing so in our case is not as straightforward. The reason for this is that whereas the RSC/RSS and SSC/SSS properties are defined purely in terms of the data covariates, the WSC/WSS properties also incorporate data weights. Moreover, these weights are neither constant, nor independent of the data, but rather are assigned and repeatedly updated in a stage-wise manner by an algorithm such as \irls or \stir.

Since our proofs will require the WSC/WSS properties to hold with respect to \emph{all} weight assignments made during the entire execution of the algorithms, a direct application of classical techniques \cite{Vershynin2018} fails. Such techniques could have succeeded only if the data weights were to be constant or else independent of the data.

To overcome this challenge, we establish WSC/WSS properties for sub-Gaussian covariates in a \emph{peel-wise} manner using a careful uniform convergence bound. The number of peels is no more than $\bigO{\log\frac1\epsilon}$ since each peel corresponds to a stage of the algorithm and $\bigO{\log\frac1\epsilon}$ is the number of stages required to achieve an $\epsilon$-accurate solution (see Theorem~\ref{thm:main}), which then allows us to take a union bound over all peels.

Within each peel, a careful uniform convergence bound is employed over all models within that peel in order to establish WSC/WSS. Note that our results present a novel extension of the existing notions of SSC/SSS since we can recover SSC/SSS as a special case of WSC/WSS where the weights are simply zero or unity.

\subsection{Corruptions and Dense Noise}
\label{sec:dense}
So far we have looked at an idealized setting where the responses are either completely clean $y_i = \vx_i^\top\vwo$ for $i \in G$ or else corrupted $y_j = \vx_j^\top\vwo +\vb_j$ for $j \in B$. We now look at a more realistic setting where even the ``good'' points experience sub-Gaussian noise. We will now assume that our data is generated as $\vy = X^\top\vwo + \vb + \vepsilon$ where, as before $\norm\vb_0 \leq \alpha\cdot n$, but we additionally have $\vepsilon \sim \cD_\varepsilon$ where $\cD_\varepsilon$ is a $\sigma$-sub-Gaussian distribution with zero mean and real support \footnote{We can tolerate noise with non-zero mean as well, by using a simple pairing trick which has a side effect of at most doubling the corruption rate $\alpha$}.

We will denote $B := \supp(\vb)$ and $G := [n] \setminus B$, as before. Our covariates will continue to be sampled from an $R$-sub-Gaussian distribution $\cD$ with support over $\bR^d$. Even in this setting, we can ensure a model recovery result with a linear rate of convergence.

\begin{theorem}
\label{thm:dense}
Suppose we have $n$ data points with the covariates $\vx_i$ sampled from a sub-Gaussian distribution $\cD$ and an $\alpha$ fraction of the data points are corrupted with the rest subjected to sub-Gaussian noise sampled from a distribution $\cD_\varepsilon$ with sub-Gaussian norm $\sigma$. If \stir (or \girls) is initialized at an (arbitrary) point $\vw^0$, with an initial truncation that satisfies $M_1 \leq \frac1{\norm{\vw^0-\vwo}_2}$, and executed with an increment $\eta > 1$ such that we have $\alpha \leq \frac {c_\varepsilon}{5.85\eta + c_\varepsilon}$, where $c_\varepsilon > 0$ is a constant that depends only on the distributions $\cD$ and $\cD_\varepsilon$, then with probability at least $1 - \exp(-\softOm n)$, after $K = \bigO{\log\frac1{M_1\sigma}}$ stages, each of which has only $\bigO1$ iterations, we must have $\norm{\vw^K - \vwo}_2 \leq \bigO{\sigma}$.
\end{theorem}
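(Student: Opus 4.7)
\medskip
\noindent\textbf{Proof plan for Theorem~\ref{thm:dense}.} The plan is to adapt the peeling argument used for Theorem~\ref{thm:main} to the noisy setting, with the key structural change that peeling now terminates once the peel radius reaches $\Theta(\sigma)$, below which the dense noise prevents the residual-based reweighting from distinguishing good points from bad. Concretely, I would again set up annular peels $\cB_2(\vwo,\tfrac{1}{\eta^K M_1})$ and show by induction on $K$ that one stage of $(\eta^K M)$-truncated \irls (resp.\ one stage of the gradient variant in \girls) contracts the distance to $\vwo$ by a factor of $1/\eta$, \emph{as long as} $\tfrac{1}{\eta^K M_1} \gtrsim \sigma$. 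Once this threshold is crossed the induction stops and we declare victory at accuracy $\bigO\sigma$, yielding $K = \bigO{\log\frac{1}{M_1 \sigma}}$ stages.

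\medskip
\noindent\textbf{Modified WSC/WSS and the constant $c_\varepsilon$.} The reweighting step now produces weights $\vs_i = \min\{1/|\vx_i^\top \vw - y_i|, M\}$ where the residual on a good point is $\vx_i^\top(\vw - \vwo) - \varepsilon_i$. I would redo the peel-wise uniform convergence argument that establishes WSC/WSS, but taking the expectation and tail bounds jointly over $\vx_i \sim \cD$ and $\varepsilon_i \sim \cD_\varepsilon$. Since a noise magnitude of order $\sigma$ can only mildly distort the weights as long as the current peel has radius $\gtrsim \sigma$, the distributional constant $c$ from the noiseless case gets replaced by a noise-averaged analogue $c_\varepsilon > 0$ that depends on both $\cD$ and $\cD_\varepsilon$. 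Exactly as in the noiseless proof, I would take a union bound over the $\bigO{\log \frac{1}{M_1\sigma}}$ peels and, within each peel, a uniform convergence bound over all models in that peel; the only change is the joint randomness over $\cD$ and $\cD_\varepsilon$, which is handled by standard sub-Gaussian concentration.

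\medskip
\noindent\textbf{The per-stage induction and the new breakdown point.} The inductive step is the analog of Lemmata~\ref{lem:induc-stir} and \ref{lem:induc-girls}, but with an extra additive noise term: writing the one-step update as $\vwn - \vwo = (XSX^\top)^{-1}XS(X^\top(\vwo-\vw)+\vb+\vepsilon)$ for \stir, the contribution of $\vb$ is handled exactly as before using WSC and the bound on $\alpha$, while the contribution of $\vepsilon$ is bounded using WSS on the good coordinates together with a standard sub-Gaussian tail bound on $\|X_G S_G \vepsilon\|_2$. The extra noise term forces the breakdown-point budget to absorb both the corruption error and the noise-induced drift, which is what pushes the denominator from $2.88\eta + c$ up to $5.85\eta + c_\varepsilon$. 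The stage terminates in $\bigO1$ iterations for the same contraction reason as in Theorem~\ref{thm:main}, as long as the peel radius still dominates $\sigma$; in the last peel I would simply declare the guarantee $\|\vw^K - \vwo\|_2 = \bigO\sigma$.

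\medskip
\noindent\textbf{Main obstacle.} The most delicate step is the WSC/WSS analysis: the reweighting couples the weights $\vs_i$ to \emph{both} the covariate sample $\vx_i$ and the noise $\varepsilon_i$ at every iteration of every stage. Naively one cannot treat $\vs$ as independent of the data, so I would carry out a peel-wise uniform argument that simultaneously ranges over $(i)$ all candidate models $\vw$ in the current peel and $(ii)$ all weight assignments they induce through the noisy residuals, and then chain this with concentration of $XSX^\top$ and $XS\vepsilon$. Producing explicit control on $c_\varepsilon$ (and hence on the $5.85$ constant in the breakdown bound) is the place where the computation is genuinely new compared to Theorem~\ref{thm:main}; everything else reduces to the peeling template already developed in Appendix~\ref{app:wsc-wss} and the proofs of Lemmata~\ref{lem:induc-stir}--\ref{lem:induc-girls}.
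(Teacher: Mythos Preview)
Your proposal is correct and follows essentially the same route as the paper: the peeling template of Theorem~\ref{thm:main} is rerun with a noise-averaged WSC constant $c_\varepsilon$ (the paper's Lemma~\ref{lem:wsc-wss-noisy}, valid only for $M \lesssim 1/\sigma$, which is exactly why peeling halts at radius $\Theta(\sigma)$), a separate sub-Gaussian bound on $\|X_GS_G\vepsilon_G\|_2$ (the paper's Lemma~\ref{lem:good-noisy}), and a per-stage induction (Lemma~\ref{lem:induc-noisy}) in which the additive noise term eats part of the contraction budget and produces the $5.85\eta$ denominator. One small slip to fix: the \stir update satisfies $\vwn - \vwo = (XSX^\top)^{-1}XS(\vb+\vepsilon)$ with no $X^\top(\vwo-\vw)$ term, since $(XSX^\top)^{-1}XSX^\top = I$; this does not affect your argument.
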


We refer the reader to Appendix~\ref{app:noisy} for the full proof.

\noindent\textbf{Global Convergence} This result also allows arbitrary initialization so long as we set $M_1 \leq \frac1{\norm{\vw^0-\vwo}_2}$. However, note that this result only guarantees a convergence to $\norm{\vw^{K,1}- \vwo}_2 \leq \bigO\sigma$ and thus, does not ensure a consistent solution. We refer the reader to the proof of Theorem~\ref{thm:dense} in Appendix~\ref{app:noisy} for a discussion on this result. We also note that our results or our algorithms, do not require the knowledge of the noise parameter $\sigma$.

\noindent\textbf{Breakdown Point} For Gaussian covariates i.e. $\vx_i \sim \cN(\vzero, I_d)$, Gaussian noise i.e. $\vepsilon_i \sim \cN(0, \sigma^2)$, we have $c \geq 0.52$ (see Appendix~\ref{app:noisy}), and for $\eta \rightarrow 1$ this gives \stir and \girls with a breakdown point of $\frac1{12.25}$.

\begin{figure*}%
\centering
\begin{subfigure}[b]{0.45\textwidth}
	\includegraphics[width=\textwidth]{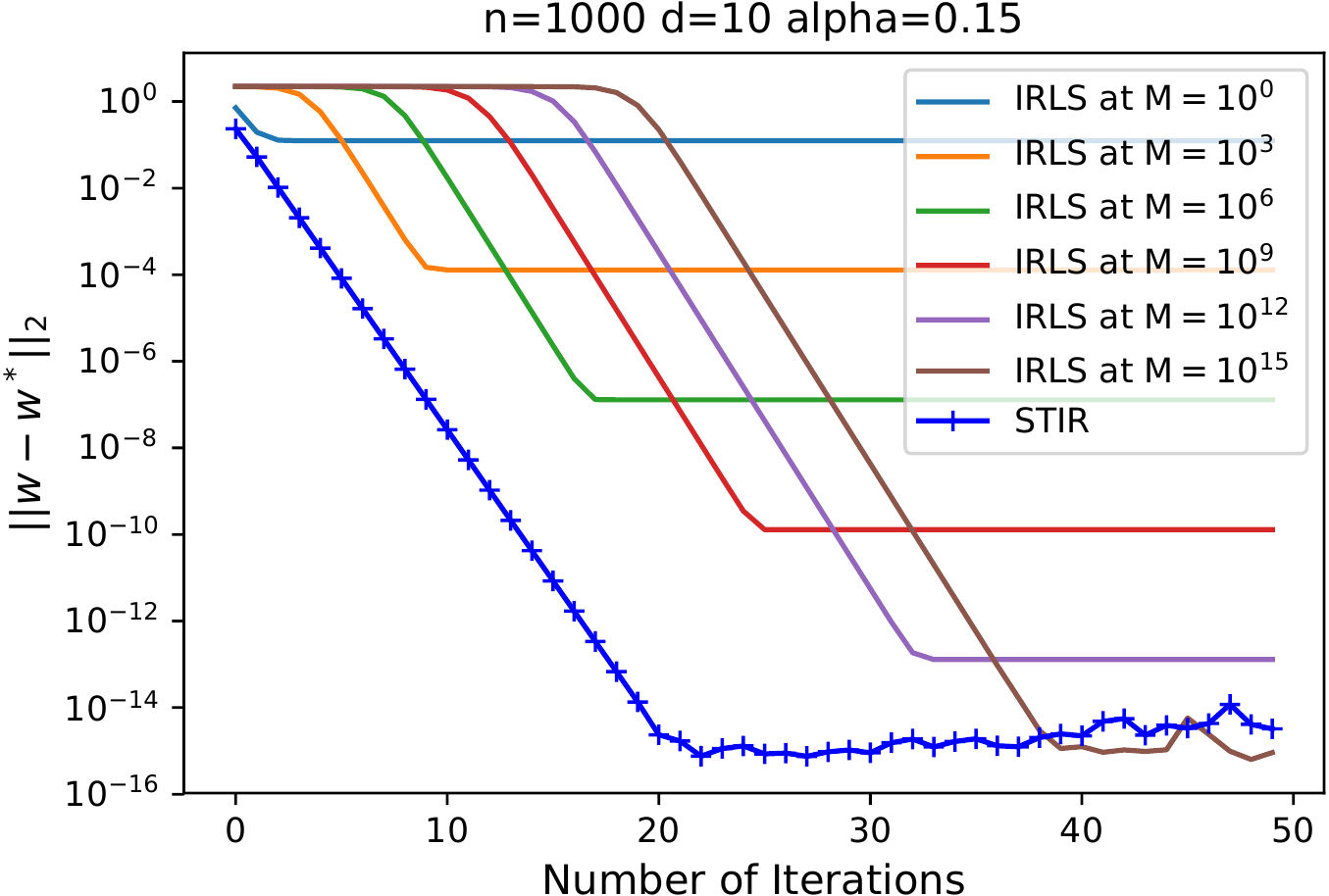}%
	\caption{\stir vs \irls with fixed M}
\end{subfigure}
\begin{subfigure}[b]{0.45\textwidth}
	\includegraphics[width=\textwidth]{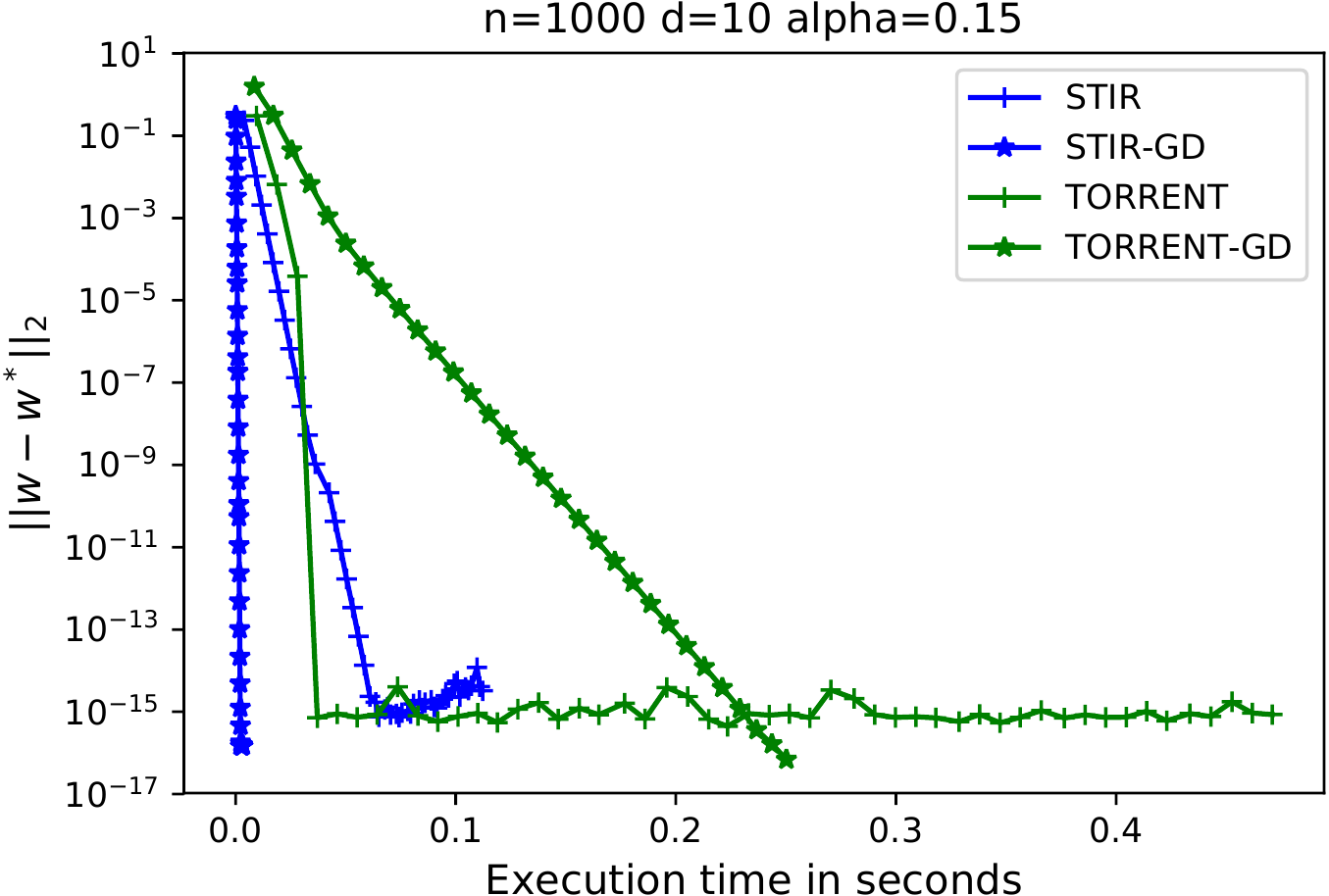}%
	\caption{\stir vs \torrent}
\end{subfigure}
\begin{subfigure}[b]{0.45\textwidth}
	\includegraphics[width=\textwidth]{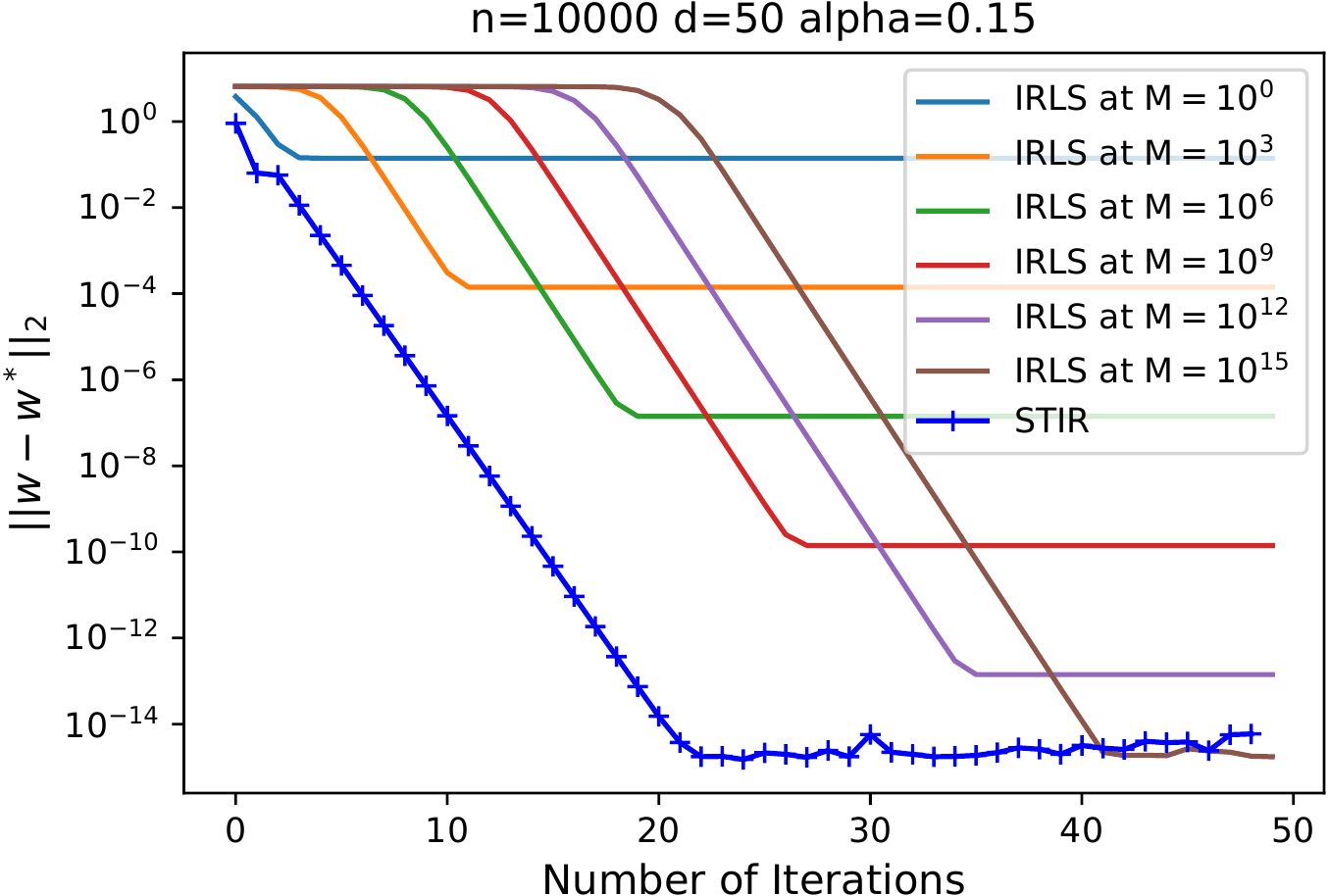}%
	\caption{\stir vs \irls with fixed M}
\end{subfigure}
\begin{subfigure}[b]{0.45\textwidth}
	\includegraphics[width=\textwidth]{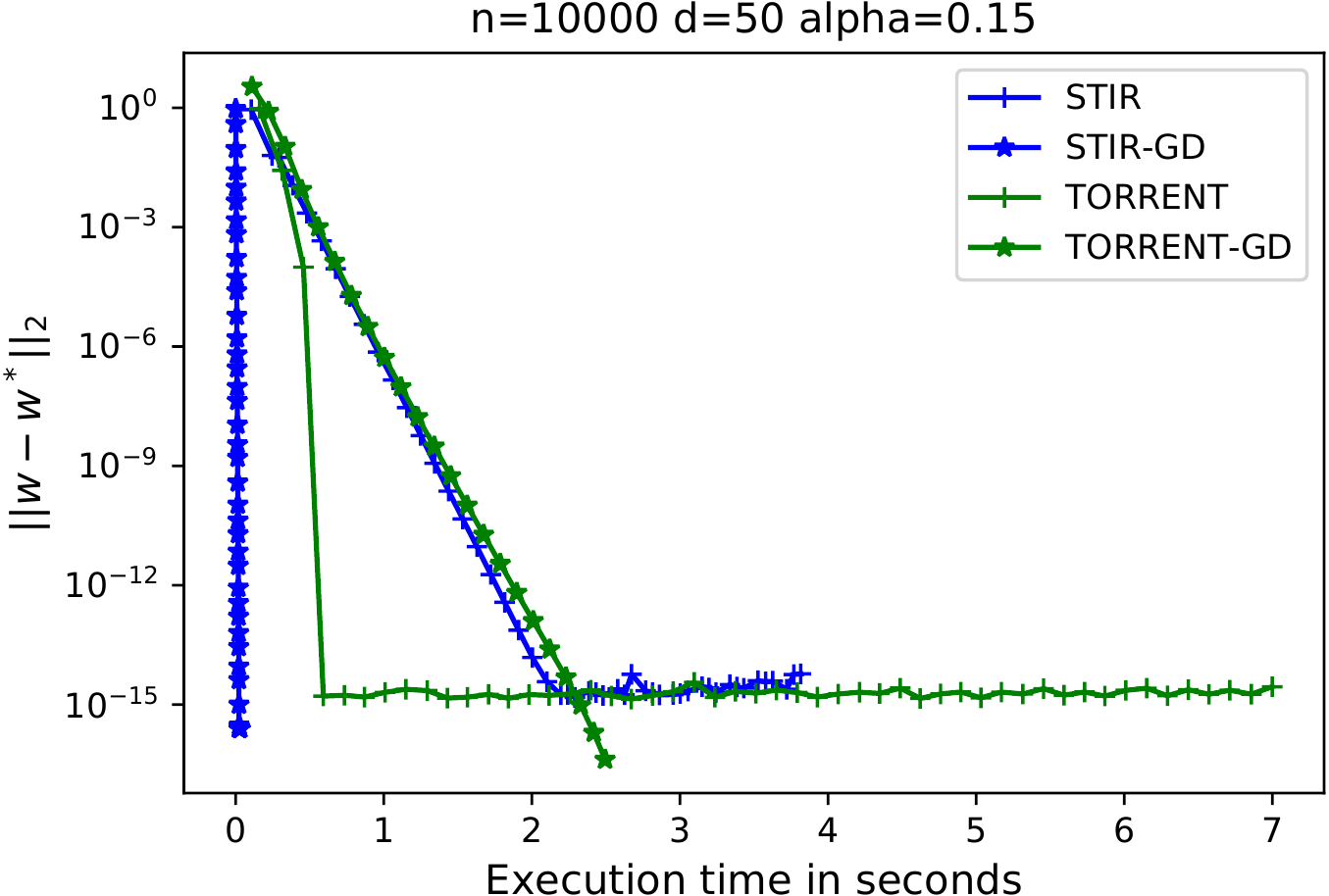}%
	\caption{\stir vs \torrent}
\end{subfigure}
\caption{All y-axes are in log-scale. Figs (a) and (c) use different data dimensionalities and number of data points and compare \stir to when \irls is executed with various fixed values of the truncation parameter $M$. It is clear that no fixed value performs well. For small fixed values $M \approx 10^0$, \irls converges rapidly but to poor models. For large fixed values $M \approx 10^{12}$, \irls gets stuck at the fake model and takes long to converge. On the other hand, although \stir was initialized with $M_1 = 0$ for this experiment, it adaptively increases its truncation parameter to offer far better convergence than \irls with any fixed value of $M$. Figs (b) and (d) compare \stir and \girls with \torrent and \tgd. In all cases, \girls offers the fastest convergence. 
}%
\label{fig:fig1}%
\end{figure*}

\begin{figure*}%
\centering
\begin{subfigure}[b]{0.45\textwidth}
	\includegraphics[width=\textwidth]{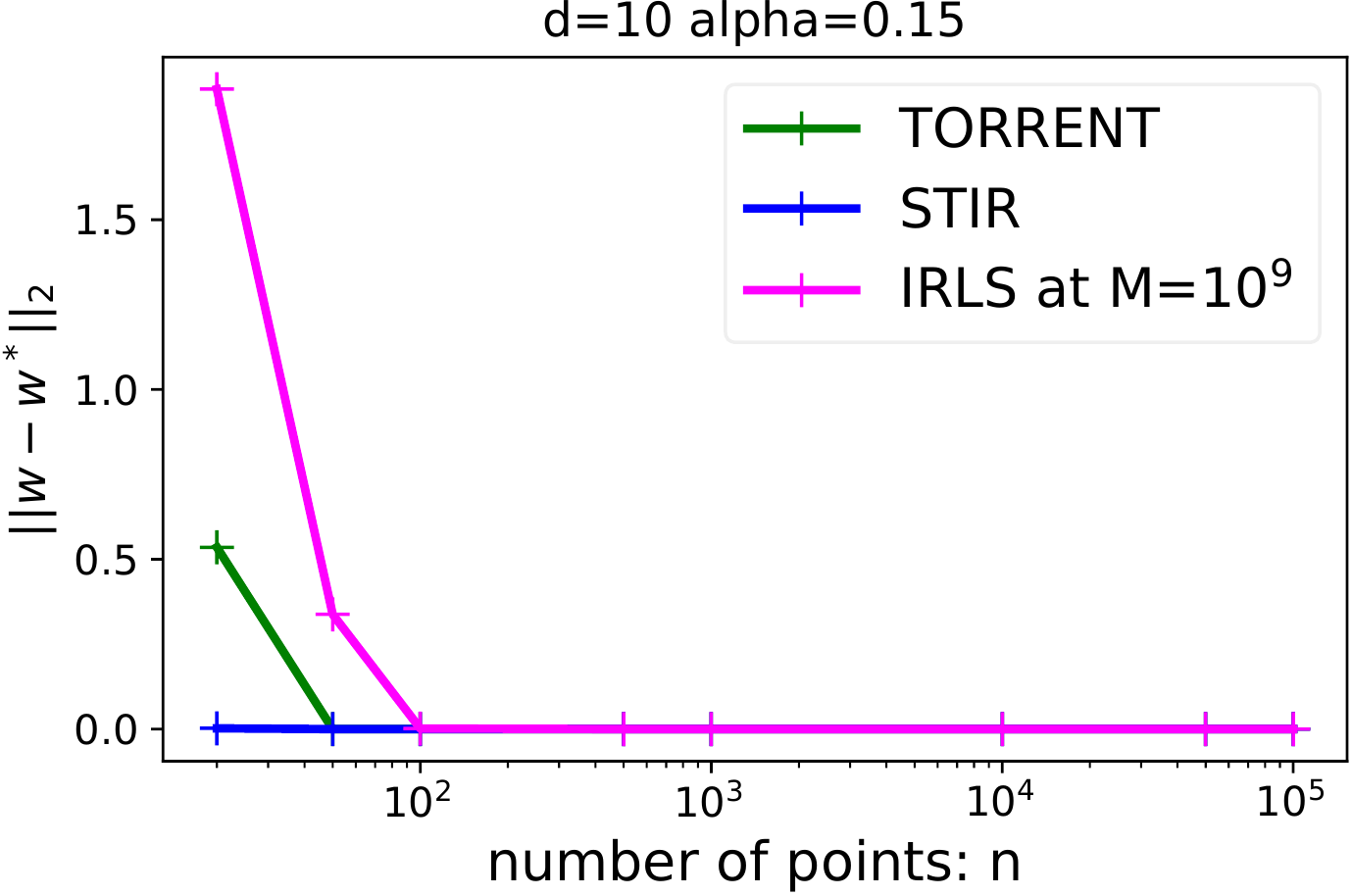}%
	\caption{Variation with dataset size}
\end{subfigure}
\begin{subfigure}[b]{0.45\textwidth}
	\includegraphics[width=\textwidth]{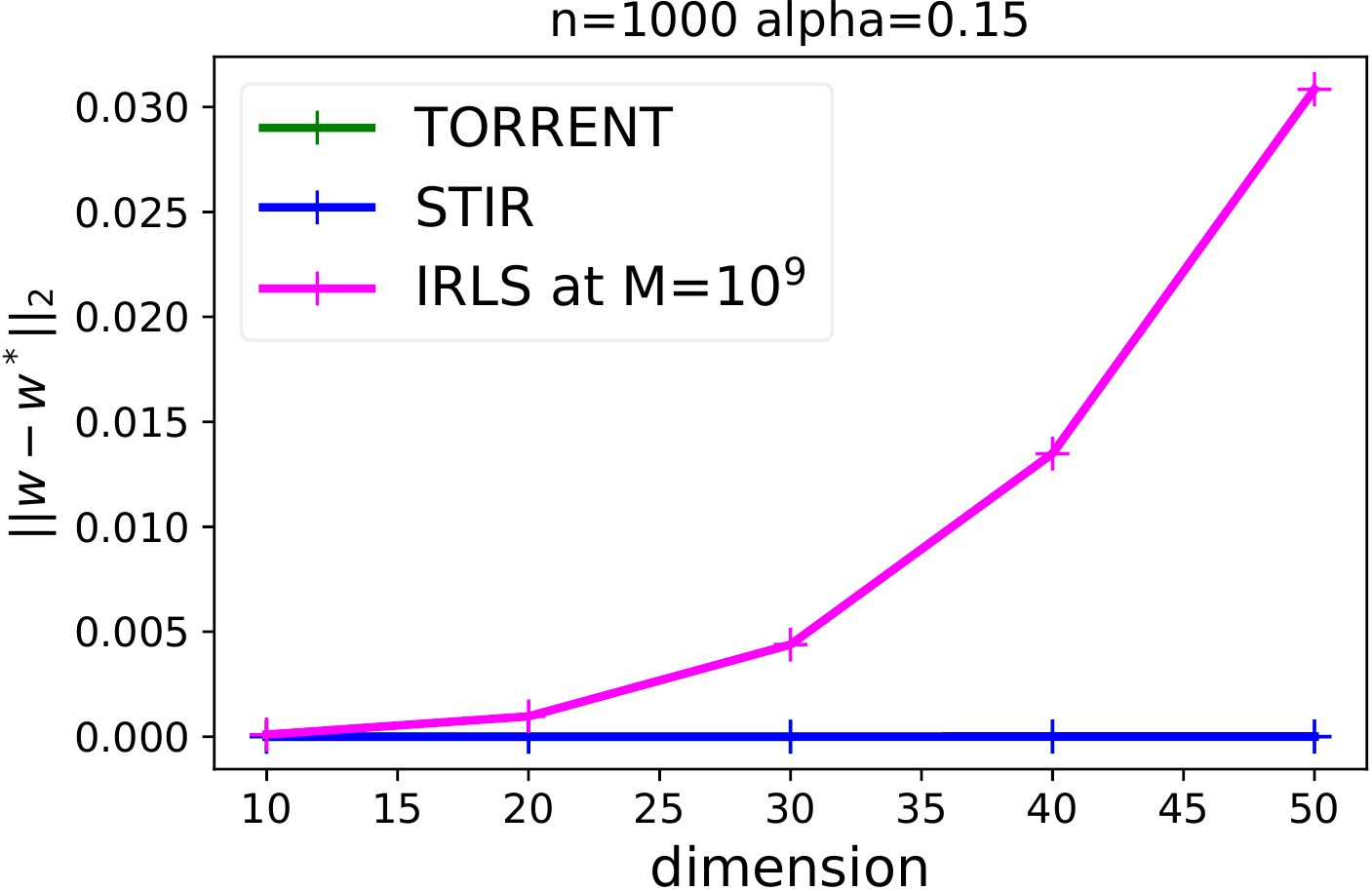}%
	\caption{Variation with dimension}
\end{subfigure}
\begin{subfigure}[b]{0.45\textwidth}
	\includegraphics[width=\textwidth]{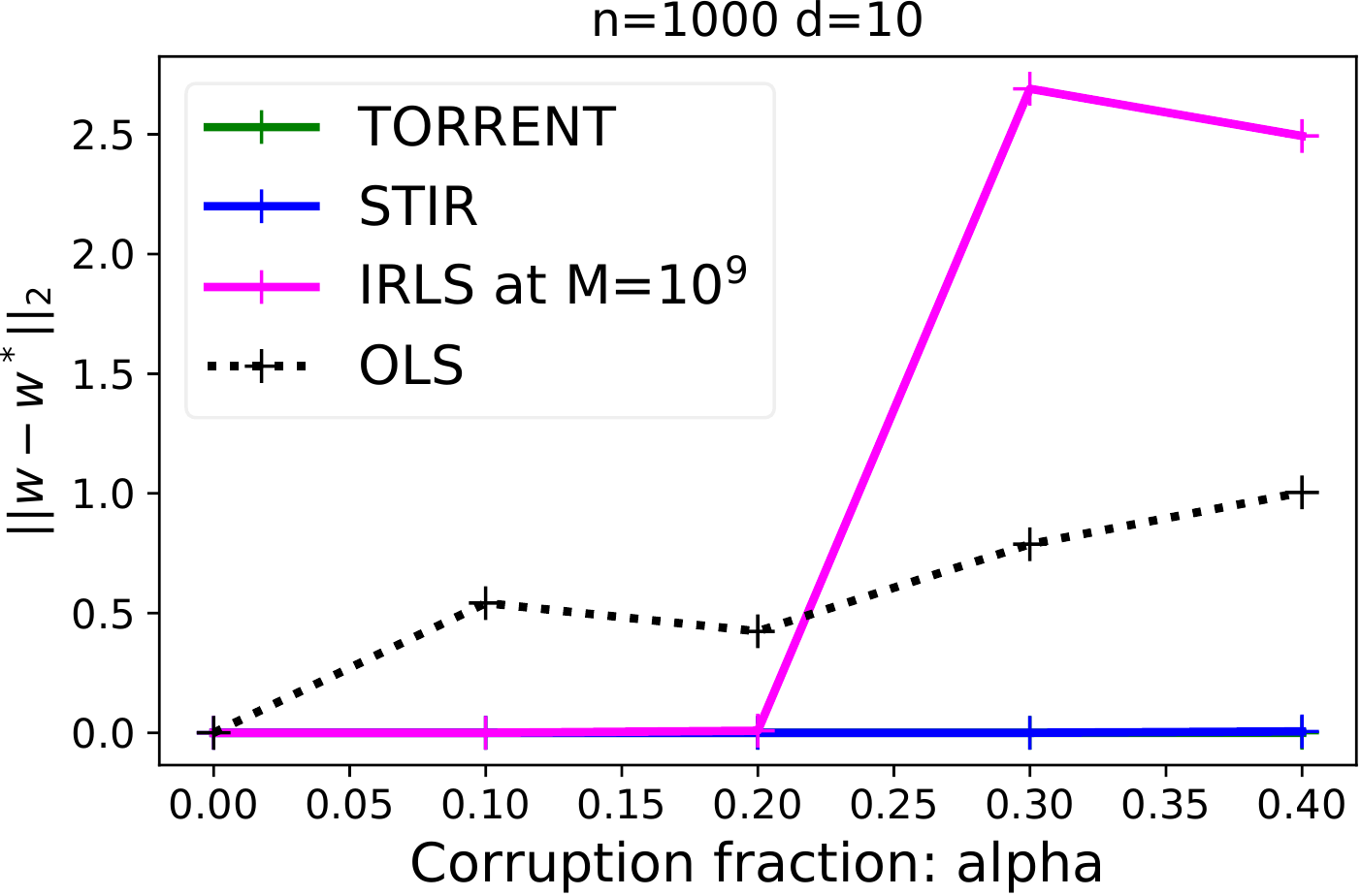}%
	\caption{Variation with corruption}
\end{subfigure}
\begin{subfigure}[b]{0.45\textwidth}
	\includegraphics[width=\textwidth]{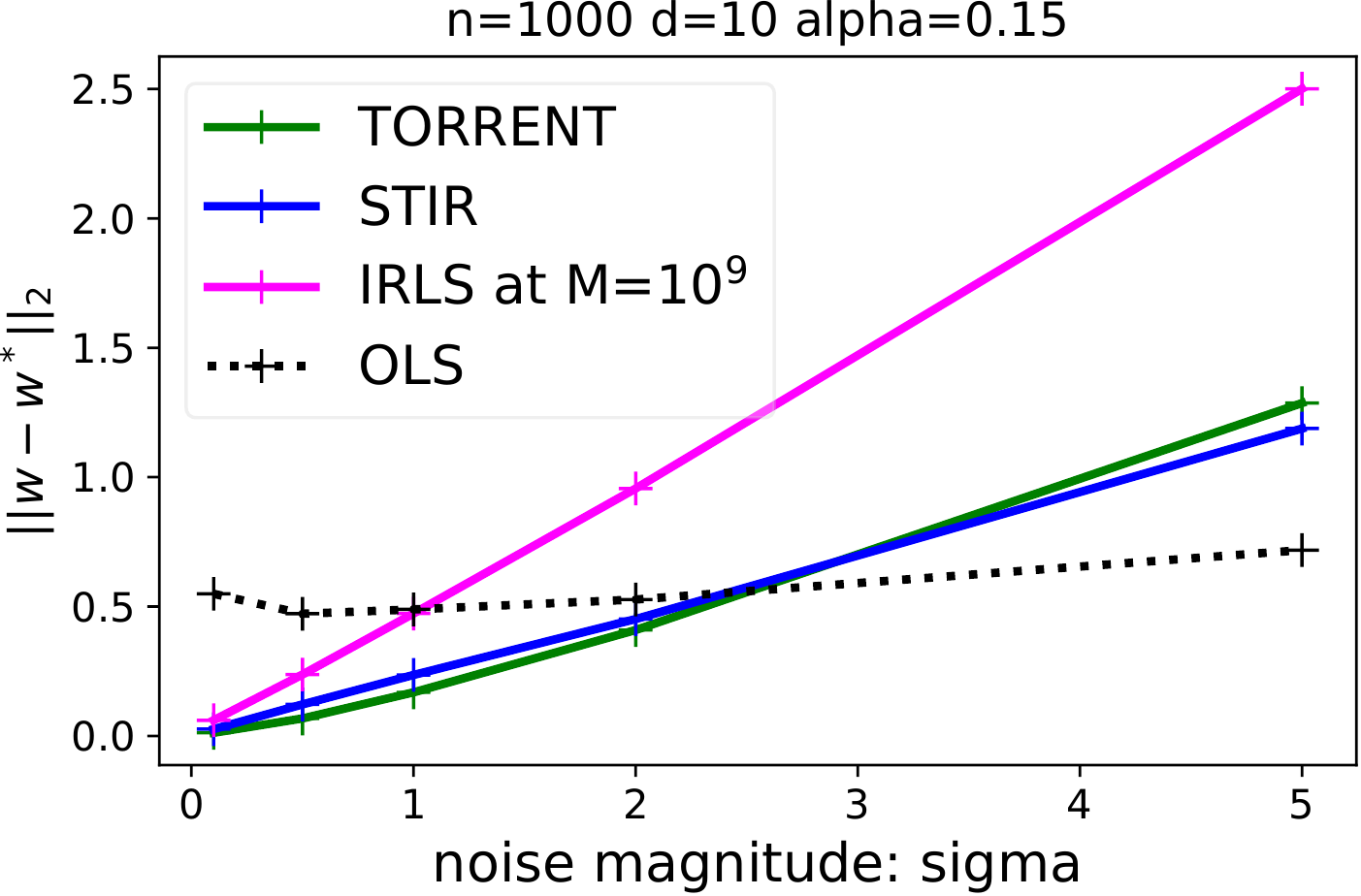}%
	\caption{Variation with white noise}
\end{subfigure}
\caption{The figures compare \stir, \torrent, \irls, and \ols for convergence behavior. \ols exceeds the figure boundaries and hence not visible in Figs (a) and (b). Fig (a) examines the effect of varying the training set size. Note that the x-axis is in log-scale. \irls performs poorly with very few data points but \stir and \torrent continue to offer good convergence. Fig (b) shows that \irls worsens with increasing dimensionality whereas \stir and \torrent remain stable. Fig (c) explores the affect of increasing the fraction of corrupted points. Both \ols and \irls show considerable worsening with increasing fraction of corruptions. Finally, Fig (d) explores the hybrid noise model discussed in Section~\ref{sec:dense} (Figs (a)-(c) had no white noise). Here, \irls performs the worst of all. However, once the noise variance goes beyond a point, \torrent and \stir start losing the distinction between good and bad points and the naive \ols starts outperforming them.}%
\label{fig:fig2}%
\end{figure*}

\begin{figure*}%
\centering
\begin{subfigure}[b]{0.45\textwidth}
	\includegraphics[width=\textwidth]{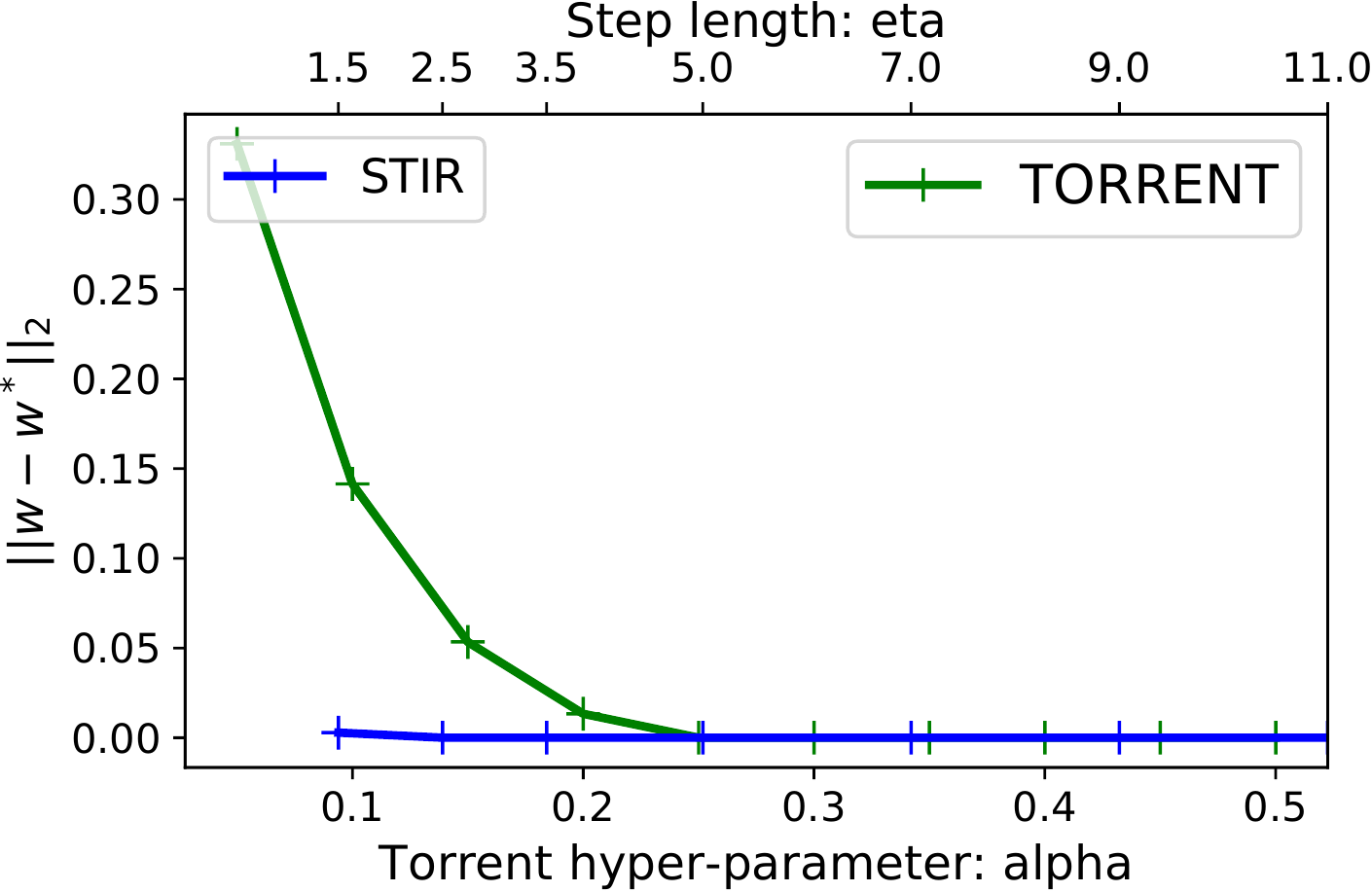}%
	\caption{Misspecifying parameters}
\end{subfigure}
\begin{subfigure}[b]{0.45\textwidth}
	\includegraphics[width=\textwidth]{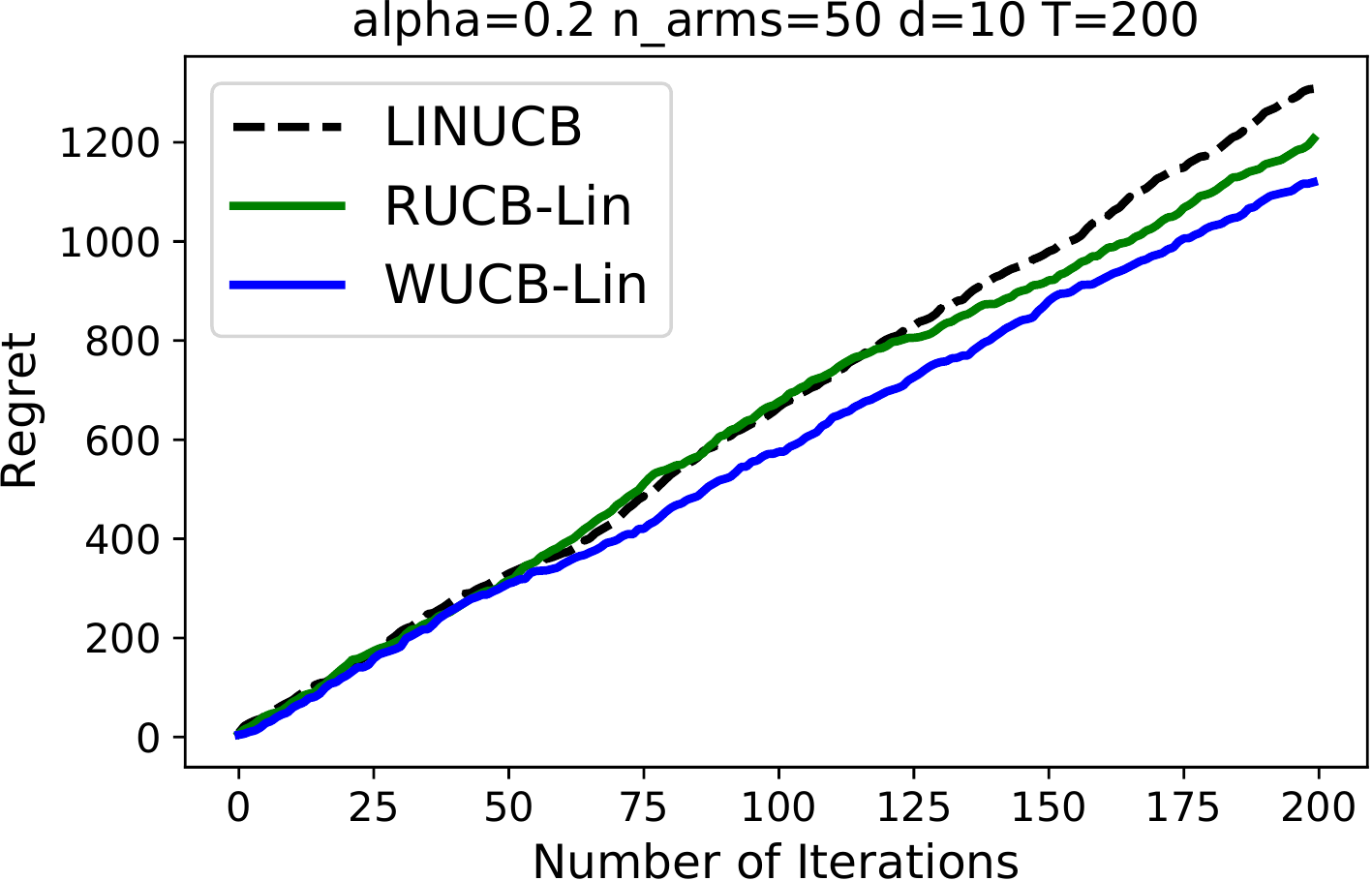}%
	\caption{Underreporting $\alpha$ as $0.15$}
\end{subfigure}
\begin{subfigure}[b]{0.45\textwidth}
	\includegraphics[width=\textwidth]{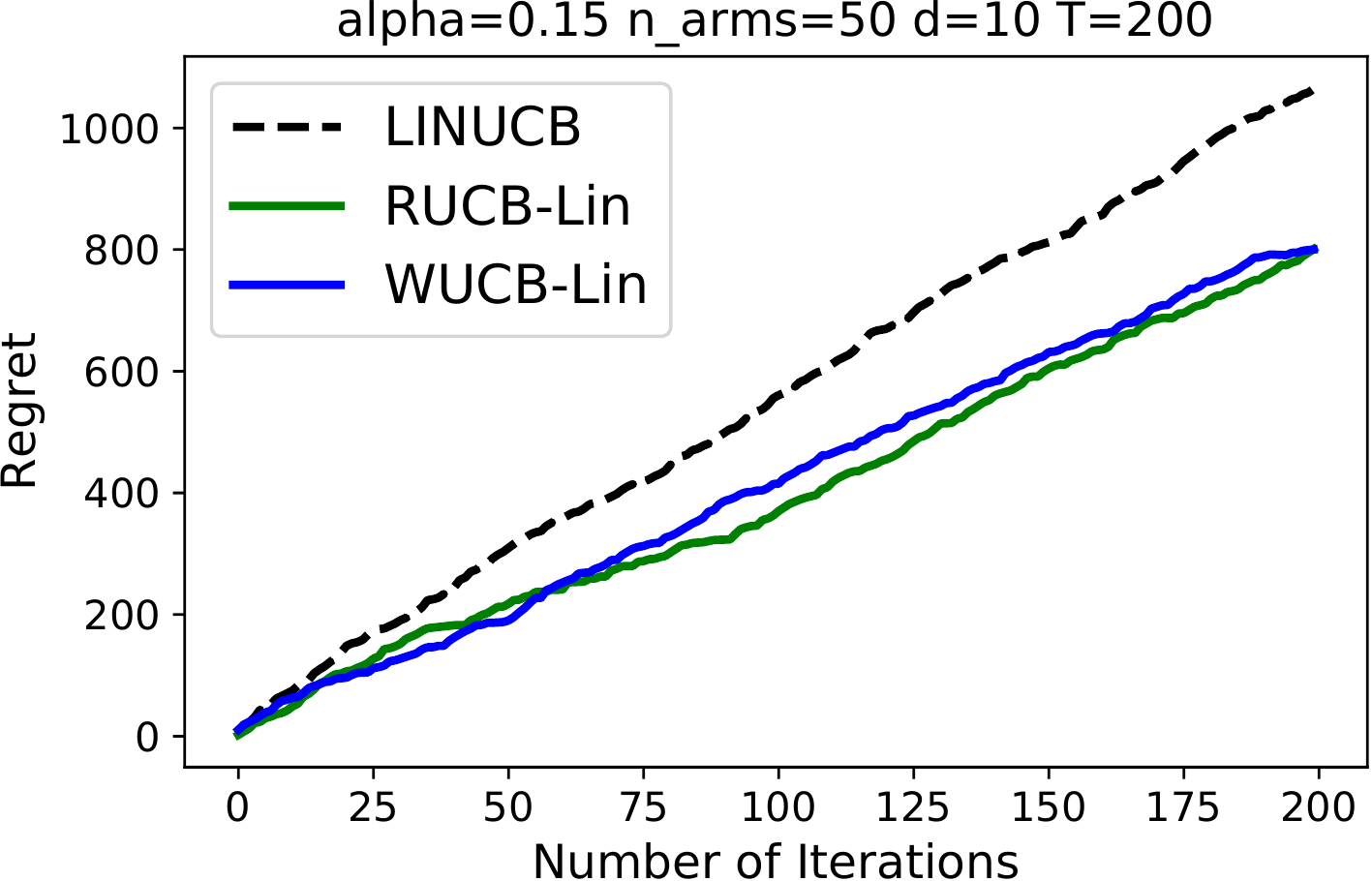}%
	\caption{Exact $\alpha$ to \torrent}
\end{subfigure}
\begin{subfigure}[b]{0.45\textwidth}
	\includegraphics[width=\textwidth]{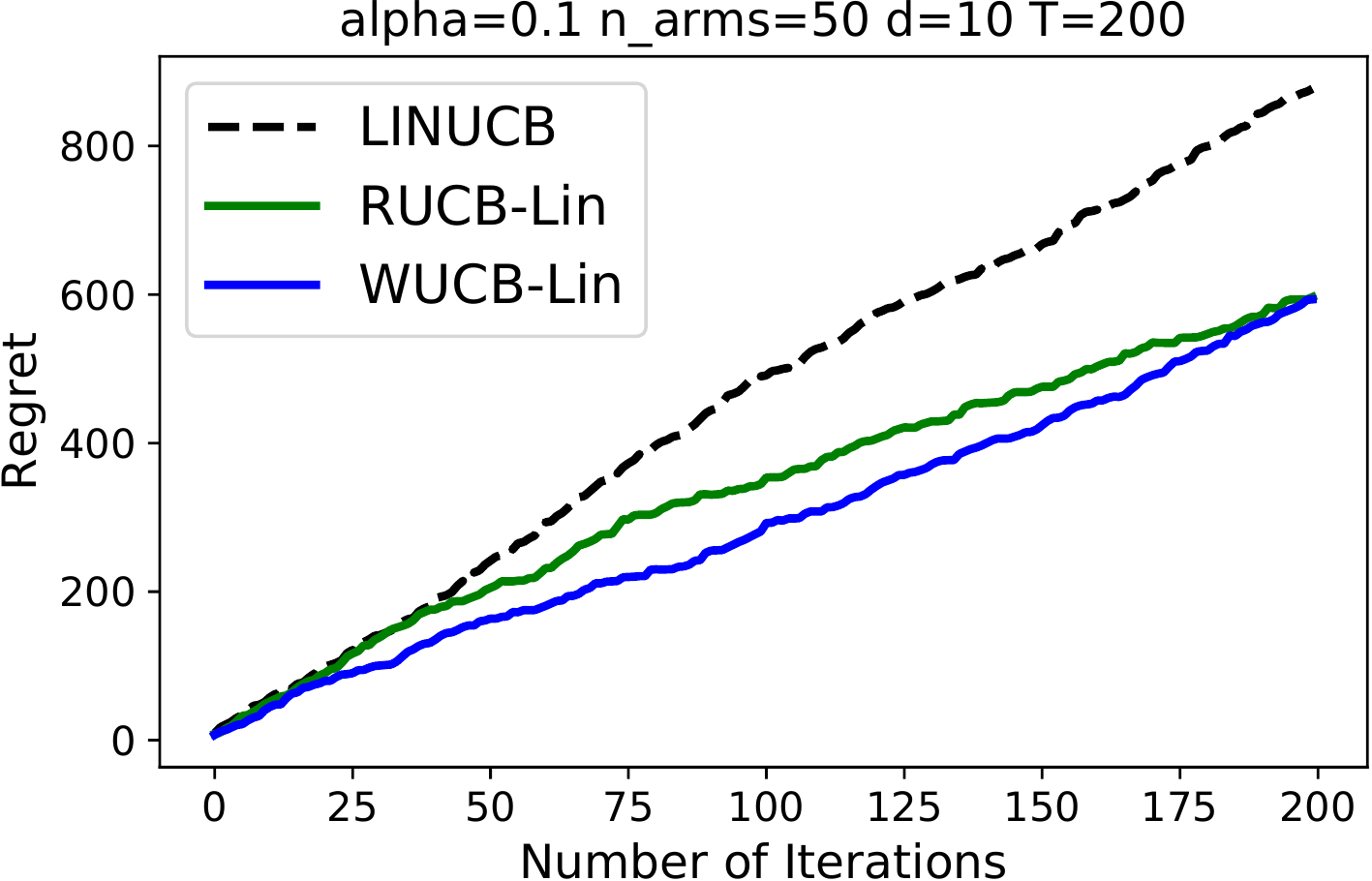}%
	\caption{Overreporting $\alpha$ as $0.15$}
\end{subfigure}
\caption{The figures compare \stir and \torrent with respect to hyperparameter misspecification. \stir was initialized at $\vw^0 = \vzero$ in these experiments. For Fig (a), 25\% data was corrupted but \torrent was given various values of its hyperparameter $\alpha$ (denoting the fraction of corrupted points) as indicated. \stir was also given various values of its own hyperparameter $\eta$ in a wide range. \torrent is very susceptible to hyperparameter misspecification and degrades heavily when not given a proper value whereas \stir is much more stable with respect to its hyperparameter. For Figs (b), (c), (d), respectively 20\%, 15\% and 10\% of the data was corrupted and linear-armed bandit algorithms that use \ols (LINUCB), \torrent (\rucb) and \stir (\wucb) were executed. For Figs (b), (c), (d), \torrent was always given a hyperparameter value $\alpha = 0.15$. Note that this is appropriate for Fig (c) where actually 15\% data was corrupted but not for Figs (b) and (d). \torrent performs comparably to \stir if provided the true value of $\alpha$, as in Fig (c) but its performance degrades if we give a value smaller than true value, such as in Fig (b) or a larger value, such as in Fig (d).}%
\label{fig:fig3}%
\end{figure*}

\section{Experiments}
\label{sec:exps}
In this section, we report results of a variety of experiments comparing \stir and \girls to other robust learning algorithms. These experiments were performed over two learning settings, namely robust linear regression and robust linear-armed bandit problems.

\noindent\textbf{Parameter and Adversary Setting} Algorithms considered in this section require only scalar parameters to be specified ($\alpha$ for \torrent, step length for \torrent-GD, $\eta$ and $M_1$ for \stir, and step length $C$ for \girls), all which were tuned via a fine grid search using a held-out validation set. In particular, a binary search was found to suffice for setting $M_1$. For all experiments, the adversary was made to introduce corruptions using a fake model as described in \S\ref{sec:method}. All algorithms were initialized at the fake model itself to test their behavior under adversarial initialization.

\subsection{Robust Regression Experiments}
We executed \stir and \girls on linear regression problems with response corruption as described in \S\ref{sec:formulation}.

\noindent\textbf{Algorithms}: We compared \stir and \girls with the \torrent algorithm \cite{BhatiaJK2015}, its faster gradient version \tgd, the classical \irls algorithm with various fixed values of the truncation parameter, and the standard \ols (Ordinary Least Squares) algorithm. We do not compare to some other state-of-the-art algorithms for robust regression, such as $L_1$ minimization techniques and extended Lasso since \cite{BhatiaJK2015} establishes that \torrent outperforms all of them.

\noindent\textbf{Data}: The covariate dimensionality and the number of data points are mentioned with each plot. All covariates were generated from a normal distribution. The gold and fake models were chosen as two independently sampled unit vectors. The set of ``bad'' data points was chosen randomly and the fake model was used to introduce corruptions, as in Section~\ref{sec:method}.

\subsection{Robust Linear Bandit Experiments}
As linear-armed bandit algorithms \cite{Abbasi-YadkoriPS2011} utilize regression routines internally, recent works have explored the possibility of using robust regression algorithms to target cases when arm-pulls are corrupted, for example \cite{KapoorPK2018} that uses \torrent itself to develop corruption-tolerant bandit learning algorithms.

Algorithm~\ref{algo:rucbl} presents \wucb, an adaptation of \stir to linear bandit settings. We refer the reader to Appendix~\ref{app:wucb} for details of the algorithm. \wucb roughly follows the popular \emph{Optimism-in-the-face-of-uncertainty} (OFUL) principle while selecting arms to pull at various time instants.

However, since we know some of the arm pulls generated corrupted rewards, instead of applying the OFUL principle blindly, \wucb invokes \stir and obtains not only an estimate of the reward generating model, but also a set of weights on previous arm pulls which indicate which pulls were corrupted and which pulls were clean. \wucb then uses these weights to form a \emph{weighted confidence set} (Algorithm~\ref{algo:rucbl}, line 6) that is further utilized in applying the OFUL principle to decide future arm pulls (Algorithm~\ref{algo:rucbl}, line 3).

\noindent\textbf{Algorithms and Data}: We compare \wucb with LINUCB that uses the simple \ols estimator, as well as the \rucb algorithm from \cite{KapoorPK2018}. We refer the reader to Appendix~\ref{app:wucb} for details of the problem setting.

\begin{algorithm}[t]
	\caption{\wucb: Weighted UCB for Linear Contextual Bandits}
	\label{algo:rucbl}
	\begin{algorithmic}[1]
			{\small
			\REQUIRE Upper bounds $\sigma_0$ (on sub-Gaussian norm of noise distribution), $B$ (on magnitude of corruption), $\alpha_0$ (on fraction of corrupted points), initial truncation $M_1$, increment rate $\eta$%
			\FOR{$t = 1,2,\dots,T$}
				\STATE Receive set of arms $A_t$
				\STATE Play arm $\hat{\vx}^t = \underset{{\vx \in A_t,\vw \in C_{t-1}}}{\arg\max}\ \ip{\vx}{\vw}$
				\STATE Receive reward $r_t$
				\STATE $(\hvw^t, S^t) \< $ \stir$\br{\bc{\hat\vx^\tau,r_\tau}_{\tau=1}^t, M_1, \eta}$\\
				\COMMENT{Denote $S^t = \diag(s^t_1, s^t_2, \ldots, s^t_t)$}
				\STATE $V^t \< \sum_{\tau \leq t}s^t_\tau\hat{\vx}^\tau(\hat{\vx}^\tau)^\top$, $X^t \leftarrow \bs{\hat{\vx}^1, \hat{\vx}^2, \ldots, \hat{\vx}^t}$
				\STATE $\bar\vw^t \< (V^t)^{-1}X^tS^t\vy$
				\STATE $C_t \< \{\vw: \norm{\vw - \bar\vw^t}_{V^t} \leq \sigma_0\sqrt{d\log T} + \alpha_0 BT\}$
			\ENDFOR
			}
	\end{algorithmic}
\end{algorithm}

\subsection{Discussion on Experiments}
Figures~\ref{fig:fig1}, \ref{fig:fig2} and \ref{fig:fig3} present graphs with the outcomes of the experiments. Although the respective captions in the figures detail the observed behaviours of various algorithms considered therein, here we point out some broad inferences.
\begin{enumerate}
	\item \girls offers much faster convergence as compared to \torrent or \tgd.
	\item No single value of the truncation parameter $M$ ensures a good performance with \irls. A stage-wise implementation with continuously updated truncation parameters, as \stir offers, is necessary for rapid and assuredly global convergence.
	\item \torrent requires an estimate of the fraction of corrupted points as a hyperparameter and is extremely susceptible to misspecification in this value. \stir and \girls on the other hand are much more resilient to misspecifications of their own hyperparameters.
\end{enumerate}

\section{Conclusion and Future Work}
In this work we presented \stir, a stage-wise algorithm that makes simple and efficient modifications, including a gradient-based implementation \girls, to the well-known \irls heuristic to obtain the first global convergence results for robust regression. These algorithms offer not only theoretically superior results to state-of-the-art algorithms such as \torrent but are empirically faster and more immune to hyperparameter mis-specification.

Our theoretical results are superior to those of previous works in terms of offering a better breakdown point, and are based on a novel notion of weighted strong convexity. Working with this new notion of strong convexity required us to develop the peeling proof technique which is novel in robust regression literature and may be of independent interest in analyzing other iterative algorithms.

Several avenues of future work exist. It would be interesting to examine other weighing functions (\irls and \stir use the inverse of the residual) for robust regression. It is likely that any reasonable decreasing function of residuals should suffice. It would also be interesting to derive formal regret bounds for the \wucb algorithm and see how they compare to the regret bounds of the \rucb algorithm from \cite{KapoorPK2018}.

\section*{Acknowledgements}
The authors thank the anonymous reviewers for several helpful comments on a previous version of the paper.
B.M. thanks the Research-I Foundation for a travel grant. P.K. is supported by the Deep Singh and Daljeet Kaur Faculty Fellowship and the Research-I Foundation at IIT Kanpur, and thanks Microsoft Research India and Tower Research for research grants.

\bibliographystyle{plain}
\bibliography{refs}

\begin{thebibliography}{10}

\bibitem{Abbasi-YadkoriPS2011}
Yasin Abbasi-Yadkori, David Pal, and Csaba Szepesvari.
\newblock {Improved Algorithms for Linear Stochastic Bandits}.
\newblock In {\em Proceedings of the 25th Annual Conference on Neural
  Information Processing Systems (NIPS)}, 2011.

\bibitem{AftabHartley2015}
Khurrum Aftab and Richard Hartley.
\newblock {Convergence of Iteratively Re-weighted Least Squares to Robust
  M-Estimators}.
\newblock In {\em IEEE Winter Conference on Applications of Computer Vision
  (WACV)}, 2015.

\bibitem{AgarwalNW2012}
Alekh Agarwal, Sahand~N. Negahban, and Martin~J. Wainwright.
\newblock {Fast global convergence of gradient methods for high-dimensional
  statistical recovery}.
\newblock {\em Annals of Statistics}, 40(5):2452--2482, 2012.

\bibitem{BaBPB2013}
Demba Ba, Behtash Babadi, Patrick~L. Purdon, and Emery~N. Brown.
\newblock {Convergence and Stability of Iteratively Re-weighted Least Squares
  Algorithms}.
\newblock {\em IEEE Transactions on Signal Processing}, 62(1):183--195, 2013.

\bibitem{BhatiaJKK2017}
Kush Bhatia, Prateek Jain, Parameswaran Kamalaruban, and Purushottam Kar.
\newblock {Consistent Robust Regression}.
\newblock In {\em Proceedings of the 31st Annual Conference on Neural
  Information Processing Systems (NIPS)}, 2017.

\bibitem{BhatiaJK2015}
Kush Bhatia, Prateek Jain, and Purushottam Kar.
\newblock {Robust Regression via Hard Thresholding}.
\newblock In {\em Proceedings of the 29th Annual Conference on Neural
  Information Processing Systems (NIPS)}, 2015.

\bibitem{BissantzDMS2009}
Nicolai Bissantz, Lutz D\"umbgen, Axel Munk, and Bernd Stratmann.
\newblock {Convergence Analysis of Generalized Iteratively Reweighted Least
  Squares Algorithms on Convex Function Spaces}.
\newblock {\em SIAM Journal of Optimization}, 19(4):1828--1845, 2009.

\bibitem{BrimbergLove1993}
Jack Brimberg and Robert~F. Love.
\newblock {Global Convergence of a Generalized Iterative Procedure for the
  Minisum Location Problem with lp Distances}.
\newblock {\em Operations Research}, 41(6):1010--1176, 1993.

\bibitem{CandesLW2009}
Emmanuel~J. Cand\`es, Xiaodong Li, and John Wright.
\newblock {Robust Principal Component Analysis?}
\newblock {\em Journal of the ACM}, 58(1):1--37, 2009.

\bibitem{ChenD2012}
Yin Chen and Arnak~S. Dalalyan.
\newblock {Fused sparsity and robust estimation for linear models with unknown
  variance}.
\newblock In {\em Proceedings of the 26th Annual Conference on Neural
  Information Processing Systems (NIPS)}, 2012.

\bibitem{ChenCM2013}
Yudong Chen, Constantine Caramanis, and Shie Mannor.
\newblock {Robust Sparse Regression under Adversarial Corruption}.
\newblock In {\em Proceedings of the 30th International Conference on Machine
  Learning (ICML)}, 2013.

\bibitem{Cline1972}
A.~K. Cline.
\newblock {Rate of Convergence of Lawson's Algorithm}.
\newblock {\em Mathematics of Computation}, 26(117):167--176, 1972.

\bibitem{DaubechiesDFG2010}
Ingrid Daubechies, Ronald DeVore, Massimo Fornasier, and C.~Sinan G\"unt\"urk.
\newblock {Iteratively Reweighted Least Squares Minimization for Sparse
  Recovery}.
\newblock {\em Communications on Pure and Applied Mathematics}, 63(1):1--38,
  2010.

\bibitem{DiakonikolasKKLMS2017b}
Ilias Diakonikolas, Gautam Kamath, Daniel~M. Kane, Jerry Li, Ankur Moitra, and
  Alistair Stewart.
\newblock {Robustly Learning a Gaussian: Getting Optimal Error, Efficiently}.
\newblock In {\em Proceedings of the Twenty-Ninth Annual ACM-SIAM Symposium on
  Discrete Algorithms (SODA)}, pages 2683--2702, 2018.

\bibitem{DiakonikolasKS2019}
Ilias Diakonikolas, Weihao Kong, and Alistair Stewart.
\newblock {Efficient Algorithms and Lower Bounds for Robust Linear Regression}.
\newblock In {\em 30th Annual ACM-SIAM Symposium on Discrete Algorithms
  (SODA)}, 2019.

\bibitem{FengXMY2014}
Jiashi Feng, Huan Xu, Shie Mannor, and Shuicheng Yan.
\newblock {Robust Logistic Regression and Classification}.
\newblock In {\em Proceedings of the 28th Annual Conference on Neural
  Information Processing Systems (NIPS)}, 2014.

\bibitem{GoodfellowMP2018}
Ian Goodfellow, Patrick McDaniel, and Nicolas Papernot.
\newblock {Making Machine Learning Robust Against Adversarial Inputs}.
\newblock {\em Communications of the ACM}, 61(7):56--66, 2018.

\bibitem{Huber1964}
Peter~J. Huber.
\newblock {Robust Estimation of a Location Parameter}.
\newblock {\em The Annals of Mathematical Statistics}, 35(1):73--101, 1964.

\bibitem{KapoorPK2018}
Sayash Kapoor, Kumar~Kshitij Patel, and Purushottam Kar.
\newblock {Corruption-tolerant bandit learning}.
\newblock Machine Learning (to appear)
  https://doi.org/10.1007/s10994-018-5758-5, 2018.

\bibitem{LiCLS2010}
Lihong Li, Wei Chu, John Langford, and Robert Schapire.
\newblock {A Contextual-Bandit Approach to Personalized News Article
  Recommendation}.
\newblock In {\em Proceedings of the 19th International World Wide Web
  Conference (WWW)}, 2010.

\bibitem{LiuSLC2018}
Liu Liu, Yanyao Shen, Tianyang Li, and Constantine Caramanis.
\newblock {High Dimensional Robust Sparse Regression}.
\newblock arXiv:1805.11643v1 [cs.LG], 2018.

\bibitem{LykourisML2018}
Thodoris Lykouris, Vahab Mirrokni, and Renato~Paes Leme.
\newblock {Stochastic bandits robust to adversarial corruptions}.
\newblock In {\em Proceedings of the 50th Annual ACM SIGACT Symposium on Theory
  of Computing (STOC)}, pages 114--122, 2018.

\bibitem{Mairal2015}
Julien Mairal.
\newblock {Incremental Majorization-Minimization Optimization with Application
  to Large-Scale Machine Learning}.
\newblock {\em SIAM Journal of Optimization}, 25(2):829--855, 2015.

\bibitem{McWilliamsKLB14}
Brian McWilliams, Gabriel Krummenacher, Mario Lucic, and Joachim~M. Buhmann.
\newblock {Fast and Robust Least Squares Estimation in Corrupted Linear
  Models}.
\newblock In {\em 28th Annual Conference on Neural Information Processing
  Systems (NIPS)}, 2014.

\bibitem{Osborne1985}
M.~R. Osborne.
\newblock {\em {Finite Algorithms in Optimization and Data Analysis}}.
\newblock {Wiley Series in Probability and Mathematical Statistics: Applied
  Probability and Statistics}. John Wiley \& Sons, 1985.

\bibitem{StaszakV2016}
Damian Straszak and Nisheeth~K. Vishnoi.
\newblock {IRLS and Slime Mold: Equivalence and Convergence}.
\newblock arXiv:1601.02712 [cs.DS], 2016.

\bibitem{Tukey1960}
John~W. Tukey.
\newblock {A Survey of Sampling from Contaminated Distributions}.
\newblock {\em Contributions to Probability and Statistics}, 2:448--485, 1960.

\bibitem{Vershynin2018}
Roman Vershynin.
\newblock {\em {High-Dimensional Probability: An Introduction with Applications
  in Data Science}}.
\newblock Cambridge Series in Statistical and Probabilistic Mathematics.
  Cambridge University Press, 2018.

\bibitem{WrightM2010}
John Wright and Yi~Ma.
\newblock {Dense Error Correction via $\ell^1$ Minimization}.
\newblock {\em IEEE Transactions on Information Theory}, 56(7):3540--3560,
  2010.

\end{thebibliography}

\allowdisplaybreaks
\appendix

\section{IRLS and the Scaled Huber Loss - Supplementary Details}
\label{app:huber}

We recapitulate below the definitions of the Huber loss, the scaled (and translated) Huber loss and, given a model $\vw^0$ and data $(\vx_i,y_i)_{i=1}^n$, other allied functions.
\begin{align*}
h_\epsilon(x) &=
\begin{cases}
\frac12x^2 & \abs x \leq \epsilon\\
\epsilon\abs x - \frac12\epsilon^2 & \abs x > \epsilon
\end{cases}\\
f_\epsilon(x) &=
\begin{cases}
\frac12\br{\frac{x^2}\epsilon + \epsilon} & \abs x \leq \epsilon\\
\abs x & \abs x > \epsilon
\end{cases}\\
g_\epsilon(x;a) &:= \frac12\br{\frac{x^2}{\max\bc{\abs a,\epsilon}} + \max\bc{\abs a,\epsilon}}\\
\ell_\epsilon(\vw) &:= \frac1n\sum_{i=1}^nf_\epsilon\br{\ip\vw{\vx_i}-y_i}\\
\wp_\epsilon(\vw;\vw^0) &:= \sum_{i=1}^ng_\epsilon\br{\ip\vw{\vx_i}-y_i;\ip{\vw^0}{\vx_i}-y_i}
\end{align*}
The claim that $M$-truncated \irls minimizes $\wp_\frac1M(\vw;\vw^0)$ to obtain the next model can be easily verified using the equivalence between the truncation and regularization techniques explained in Footnote~\ref{foot:trunc-reg-eq} (see \S\ref{sec:method} for the footnote). In the following, we establish that $g_\epsilon(\cdot;\cdot)$ is a valid majorizer for $f_\epsilon$ for any $\epsilon > 0$.
\begin{claim}
For any $a,x \in \bR, \epsilon > 0$, we have $g_\epsilon(a;a) = f_\epsilon(a)$ as well as $g_\epsilon(x;a) \geq f_\epsilon(x)$.
\end{claim}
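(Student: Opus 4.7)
The plan is to reduce both assertions to elementary one-variable calculus by introducing the abbreviation $m := \max\{|a|,\epsilon\}$, so that $g_\epsilon(x;a) = \tfrac{1}{2}(x^2/m + m)$, and then to split into cases according to whether $|a|$ (resp.\ $|x|$) lies above or below $\epsilon$.

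For the equality $g_\epsilon(a;a) = f_\epsilon(a)$, I would just compute. If $|a|\le\epsilon$ then $m=\epsilon$ and $\tfrac12(a^2/\epsilon+\epsilon)$ matches the quadratic branch of $f_\epsilon$. If $|a|>\epsilon$ then $m=|a|$, and $\tfrac12(a^2/|a|+|a|) = |a|$ matches the linear branch of $f_\epsilon$. This step is essentially a two-line verification and I expect no trouble there.

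For the majorization $g_\epsilon(x;a)\ge f_\epsilon(x)$, the key observation I would invoke is that the one-variable function $h(m) := x^2/m + m$ (for $m>0$) is minimized at $m=|x|$ with $h(|x|)=2|x|$, and is strictly increasing on $[|x|,\infty)$; this follows from either AM--GM or from $h'(m)=1-x^2/m^2$. I would then treat two cases. If $|x|>\epsilon$, then $h(m)\ge 2|x|$ for every $m>0$, giving $g_\epsilon(x;a)\ge |x| = f_\epsilon(x)$ directly, regardless of $a$. If $|x|\le\epsilon$, then since $m=\max\{|a|,\epsilon\}\ge\epsilon\ge |x|$, monotonicity of $h$ on $[|x|,\infty)$ yields $h(m)\ge h(\epsilon) = x^2/\epsilon+\epsilon$, so $g_\epsilon(x;a)\ge \tfrac12(x^2/\epsilon+\epsilon) = f_\epsilon(x)$.

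The main thing that might look like an obstacle is the case $|x|\le\epsilon$: one has to get the right lower bound for $h(m)$, and the naive AM--GM bound $h(m)\ge 2|x|$ only gives $g_\epsilon(x;a)\ge|x|$, which is weaker than $f_\epsilon(x)$ on the quadratic branch. The resolution is the sharper monotonicity argument, using the crucial inequality $m\ge\epsilon$ that is baked into the definition of $m$. Once that is noted, the whole claim falls out cleanly, and nothing more delicate is required.
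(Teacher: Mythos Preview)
Your proposal is correct and follows essentially the same approach as the paper: the same two-case verification for the equality, and the same case split on $|x|\gtrless\epsilon$ for the majorization, with AM--GM handling $|x|>\epsilon$ and the key inequality $m\ge\epsilon\ge|x|$ handling $|x|\le\epsilon$. The only cosmetic difference is that in the second case the paper writes out the explicit factorization $g_\epsilon(x;a)-f_\epsilon(x)=\frac{(m-\epsilon)(m\epsilon-x^2)}{2m\epsilon}\ge 0$, whereas you phrase the same inequality via monotonicity of $m\mapsto x^2/m+m$ on $[|x|,\infty)$.
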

\begin{proof}
We have, for the first claim,
\[
g_\epsilon(a;a) = \frac12\br{\frac{a^2}{\max\bc{\abs a,\epsilon}} + \max\bc{\abs a,\epsilon}} =
\begin{cases}
\frac12\br{\frac{a^2}\epsilon + \epsilon} & \abs a \leq \epsilon\\
\abs a & \abs a > \epsilon
\end{cases}
= f_\epsilon(a).
\]
For the second claim, we consider two simple cases
\begin{description}
	\item[Case 1 $\abs x > \epsilon$]: In this case we have $f_\epsilon(x) = \abs x$ and we always have $\frac12\br{\frac{x^2}{\max\bc{\abs a,\epsilon}} + \max\bc{\abs a,\epsilon}} \geq \abs x$.
	\item[Case 2 $\abs x \leq \epsilon$]: In this case denote $b = \max\bc{\abs a,\epsilon}$. Then we have $b \geq \epsilon \geq \abs x$ which gives us $x^2 \leq b\epsilon$. Thus, we have $g_\epsilon(x;a) - f_\epsilon(x) = \frac12\br{\frac{x^2}b + b} - \frac12\br{\frac{x^2}\epsilon + \epsilon} = \frac{(b - \epsilon)(b\epsilon - x^2)}{2b\epsilon} \geq 0$.\qedhere
\end{description}
\end{proof}

The following claim shows that we have $\left.f'_\epsilon(x)\right|_{x=a} = \left.g'_\epsilon(x;a)\right|_{x=a}$ for any $\epsilon, a$. This immediately establishes that $\nabla\wp_\epsilon(\vw^0;\vw^0) = \nabla\ell_\epsilon(\vw^0)$ for any model $\vw^0$.
\begin{claim}
For any $a,x \in \bR, \epsilon > 0$, we have $\left.f'_\epsilon(x)\right|_{x=a} = \left.g'_\epsilon(x;a)\right|_{x=a}$.
\end{claim}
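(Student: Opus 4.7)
The plan is to verify the identity by direct computation of both derivatives and a case split on whether $\lvert a\rvert \leq \epsilon$ or $\lvert a\rvert > \epsilon$, which matches the piecewise structure of $f_\epsilon$. The right-hand side is the easier of the two: for any fixed $a$, the map $x \mapsto g_\epsilon(x;a)$ is a simple quadratic $\frac{x^2}{2c} + \frac{c}{2}$ with $c := \max\{\lvert a\rvert, \epsilon\}$, so $g'_\epsilon(x;a) = x/c$ and hence $g'_\epsilon(a;a) = a/\max\{\lvert a\rvert,\epsilon\}$. No subtleties there.

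For $f'_\epsilon$, first I would compute the derivative in the two open regions: on $\lvert x\rvert < \epsilon$ we get $f'_\epsilon(x) = x/\epsilon$, and on $\lvert x\rvert > \epsilon$ we get $f'_\epsilon(x) = \mathrm{sgn}(x)$. I would then briefly check differentiability at the boundary $\lvert x\rvert = \epsilon$: both one-sided derivatives equal $\mathrm{sgn}(x)$ there (since $x/\epsilon = \mathrm{sgn}(x)$ when $\lvert x\rvert = \epsilon$), so $f_\epsilon$ is $C^1$ across the join and the formula $f'_\epsilon(a) = a/\epsilon$ is valid for all $\lvert a\rvert \leq \epsilon$.

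Now I would match the two in cases. If $\lvert a\rvert \leq \epsilon$, then $\max\{\lvert a\rvert,\epsilon\} = \epsilon$, so $g'_\epsilon(a;a) = a/\epsilon = f'_\epsilon(a)$. If $\lvert a\rvert > \epsilon$, then $\max\{\lvert a\rvert,\epsilon\} = \lvert a\rvert$, so $g'_\epsilon(a;a) = a/\lvert a\rvert = \mathrm{sgn}(a) = f'_\epsilon(a)$. In both cases the two derivatives agree, establishing the claim.

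The only thing worth being careful about is the boundary case $\lvert a\rvert = \epsilon$, which is exactly where $f_\epsilon$ switches branches; once one observes that $f_\epsilon$ is actually differentiable there (the quadratic and linear pieces are designed to splice smoothly, which is the whole point of the Huber construction), the case analysis goes through uniformly. There is no real obstacle: this claim is the algebraic shadow of the general majorization-minimization fact that $g_\epsilon(\cdot;a)$ is tangent to $f_\epsilon$ at $x = a$, and the above calculation makes that tangency explicit.
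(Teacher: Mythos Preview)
Your proposal is correct and follows essentially the same approach as the paper: compute $g'_\epsilon(x;a) = x/\max\{\lvert a\rvert,\epsilon\}$, compute $f'_\epsilon$ piecewise, and match the two via the case split $\lvert a\rvert \leq \epsilon$ versus $\lvert a\rvert > \epsilon$. Your additional remark verifying differentiability of $f_\epsilon$ at the boundary $\lvert a\rvert = \epsilon$ is a nice touch that the paper leaves implicit.
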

\begin{proof}
We have $g'_\epsilon(x;a) = \frac{x}{\max\bc{\abs a,\epsilon}}$ which gives us
\[
\left.g'_\epsilon(x;a)\right|_{x=a} =
\begin{cases}
\frac a\epsilon & \abs a \leq \epsilon\\
\sign(a) & \abs a > \epsilon,
\end{cases}
\]
whereas we have
\[
f'_\epsilon(x) =
\begin{cases}
\frac x\epsilon & \abs x \leq \epsilon\\
\sign(x) & \abs x > \epsilon
\end{cases},
\]
which establishes the claim.
\end{proof}

\section{Supporting Results}
\label{app:supp}
In this section we prove a few results used in the convergence analysis of \stir.
\begin{lemma}
\label{lem:ssc-sss}
Suppose we have data covariates $X = \bs{\vx_1,\ldots,\vx_n}$ generated from an isotropic but otherwise arbitrary sub-Gaussian distribution. Then for any fixed set $S \subset [n]$ and $n = \Om{d+\log\frac1\delta}$, with probability at least $1-\delta$,
\[
0.99\abs{S} \leq \lambda_{\min}(X_SX_S^\top) \leq \lambda_{\max}(X_SX_S^\top) \leq 1.01\abs{S},
\]
where the constant inside $\Om{\cdot}$ depends only on the sub-Gaussian distribution and universal constants.
\end{lemma}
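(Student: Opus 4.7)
The plan is to reduce the claim to a standard tail bound on the operator norm of a sum of i.i.d.\ rank-one sub-Gaussian matrices. Because $S$ is a fixed set, the vectors $\{\vx_i\}_{i\in S}$ are i.i.d.\ isotropic sub-Gaussian, and $X_S X_S^\top = \sum_{i\in S}\vx_i\vx_i^\top$ has expectation $|S|\,I_d$. So the two eigenvalue bounds are equivalent to showing
\[
\bnorm{\tfrac{1}{|S|}\sum_{i\in S}\vx_i\vx_i^\top - I_d}_{op} \leq 0.01
\]
with probability $\geq 1-\delta$. I would attack this by the classical net-plus-Bernstein argument: fix a $\frac14$-net $\cN \subset S^{d-1}$ of cardinality at most $9^d$, and use the standard fact that for symmetric $A$, $\norm A_{op} \leq 2\max_{\vu\in\cN}|\vu^\top A\vu|$. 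Thus it suffices to control the scalar deviation $|\vu^\top(X_SX_S^\top - |S| I_d)\vu|$ for each $\vu$ in the net, then union-bound.

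For a fixed $\vu \in S^{d-1}$, write $Z_i := (\vu^\top\vx_i)^2 - 1$. Since $\vx_i$ is isotropic, $\E Z_i = 0$; since $\norm{\vu^\top\vx_i}_{\Psi_2} \leq R$, the square $(\vu^\top\vx_i)^2$ is sub-exponential with $\norm{\cdot}_{\Psi_1} \leq cR^2$, so $\norm{Z_i}_{\Psi_1} \leq K$ for a constant $K$ depending only on the distribution. Bernstein's inequality for sums of independent, centered sub-exponential random variables gives
\[
\Pr\bs{\babs{\sum_{i\in S} Z_i} \geq t} \leq 2\exp\br{-c_0\min\br{\tfrac{t^2}{|S|K^2},\tfrac{t}{K}}}
\]
for a universal constant $c_0$. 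Taking $t = 0.005\,|S|$ so that after the factor-of-$2$ loss from the net we land at $0.01\,|S|$, the exponent becomes $-c_1|S|$ for a constant $c_1$ depending only on $R$.

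A union bound over the $9^d$ net points yields a failure probability of at most $2\cdot 9^d\exp(-c_1|S|) = 2\exp(d\ln 9 - c_1|S|)$, which is $\leq \delta$ whenever $|S| \geq C(d + \log(1/\delta))$ for a suitable constant $C$ depending only on $R$. This is exactly the hypothesis $n = \softOm{d+\log\frac1\delta}$ (interpreted as a lower bound on $|S|$; if the lemma is applied with $|S|$ much smaller than $n$ one must read the condition as $|S| = \softOm{d+\log\frac1\delta}$, since otherwise $X_SX_S^\top$ would be singular).

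The only genuinely delicate step is bookkeeping the constants: the net radius $\frac14$ fixes the factor $2$, which in turn forces $t = 0.005\,|S|$, which enters Bernstein in the sub-Gaussian regime $t^2/(|S|K^2)$ provided $|S|$ is large enough that we are not pushed into the linear-$t$ regime. Checking this, $t/K = 0.005|S|/K$ dominates $t^2/(|S|K^2) = (0.005)^2|S|/K^2$ exactly when $|S| \geq K/0.005$, which is absorbed in the constant $C$. Everything else is routine, and the distribution-dependence enters only through $R$ (hence $K$) as claimed.
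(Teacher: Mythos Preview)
Your argument is correct and is precisely the standard net-plus-Bernstein route one would expect. The paper, however, does not prove this lemma from scratch: its entire proof is a one-line citation, observing that the statement is a special case of \cite[Lemma~16]{BhatiaJK2015} for isotropic distributions (and noting that the partially adaptive adversary makes $G$ and $B$ fixed sets, so the lemma applies to both). What you have written is essentially the proof that the cited reference contains, so there is no substantive methodological difference---you have simply unpacked the black box.

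One remark worth keeping: your observation that the hypothesis should really be $|S| = \Om{d + \log\frac1\delta}$ rather than $n = \Om{d + \log\frac1\delta}$ is correct and is a genuine imprecision in the lemma as stated (if $|S| < d$ then $X_SX_S^\top$ is singular and the lower bound fails trivially). In the paper this never bites, because the lemma is only ever invoked with $S \in \{G,B\}$, both of which have cardinality $\Theta(n)$; but your reading is the right one.
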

\begin{proof}
This is a special case of \cite[Lemma 16]{BhatiaJK2015} for isotropic distributions. Note that since our adversary is partially adaptive, the sets of good and bad points $G,B$ are fixed and this lemma applies to both $G$ and $B$.
\end{proof}

\begin{lemma}
\label{lem:rx}
Suppose our data covariates $\vx_1,\ldots,\vx_n$ are generated from a sub-Gaussian distribution with sub-Gaussian norm $R$. Then with probability at least $1-\delta$, we have $R_X := \max_{i \in [n]}\ \norm{\vx_i}_2 \leq \norm{\vmu}_2 + \bigO{R\sqrt{d + \log\frac n\delta}}$.
\end{lemma}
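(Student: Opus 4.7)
\textbf{Proof plan for Lemma~\ref{lem:rx}.} The plan is to reduce the claim to a standard concentration bound for the Euclidean norm of a centered sub-Gaussian random vector, and then take a union bound over the $n$ samples. Write $\vx_i = \vmu + \vz_i$ where $\vz_i := \vx_i - \vmu$ has mean zero and sub-Gaussian norm at most a constant multiple of $R$ (since subtracting the mean only affects the sub-Gaussian norm by a universal constant). By the triangle inequality, $\norm{\vx_i}_2 \le \norm{\vmu}_2 + \norm{\vz_i}_2$, so it suffices to show $\max_{i \in [n]} \norm{\vz_i}_2 = \bigO{R\sqrt{d + \log(n/\delta)}}$ with probability at least $1-\delta$.

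The main tail bound I would invoke is the standard fact that a zero-mean sub-Gaussian vector $\vz \in \bR^d$ with $\norm{\cD}_{\Psi_2} \le R$ satisfies
\[
\Pr\bs{\norm{\vz}_2 > C_1 R(\sqrt d + t)} \le 2\exp(-C_2 t^2)
\]
for universal constants $C_1, C_2 > 0$ (see, e.g., Vershynin's \emph{High-Dimensional Probability}, Theorem 3.1.1, which bounds the norm of a sub-Gaussian vector by an $\varepsilon$-net argument over the unit sphere $S^{d-1}$ combined with sub-exponential concentration for $\ip\vu\vz^2$). Alternatively, one can derive it directly: $\ip\vu\vz$ is $R$-sub-Gaussian for every $\vu \in S^{d-1}$, a standard $\frac14$-net of $S^{d-1}$ has size $\le 9^d$, and a union bound over the net together with the approximation $\norm{\vz}_2 \le 2\max_{\vu \in \cN} \ip\vu\vz$ yields the displayed inequality.

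With that single-vector bound in hand, I would set $t = \sqrt{C_2^{-1}\log(2n/\delta)}$ and take a union bound over $i \in [n]$: with probability at least $1-\delta$, every $\vz_i$ satisfies $\norm{\vz_i}_2 \le C_1 R\br{\sqrt d + \sqrt{C_2^{-1}\log(2n/\delta)}} = \bigO{R\sqrt{d + \log(n/\delta)}}$. Plugging this back into the triangle inequality yields $R_X \le \norm{\vmu}_2 + \bigO{R\sqrt{d + \log(n/\delta)}}$ as claimed.

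The only real obstacle is bookkeeping: one must be careful that ``sub-Gaussian norm $R$'' of the distribution $\cD$ refers to the uncentered distribution, so the centered vector $\vz_i$ has sub-Gaussian norm at most $2R$ (or some universal constant times $R$), which is absorbed into the $\bigO{\cdot}$. Everything else is a direct application of the net argument and a union bound, so I would keep the exposition brief and cite \cite{Vershynin2018} for the single-vector tail bound rather than reprove it.
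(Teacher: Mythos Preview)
Your proposal is correct and follows essentially the same route as the paper: center the vectors, bound $\norm{\vx_i-\vmu}_2$ via an $\varepsilon$-net over $S^{d-1}$ combined with the one-dimensional sub-Gaussian tail, and finish with a union bound over the $n$ samples. The only cosmetic differences are that the paper carries out the net argument explicitly (using a $\tfrac12$-net of size $\leq 5^d$) rather than citing Vershynin, and it absorbs the union bound over $[n]$ directly into the choice of $t$ rather than stating it as a separate step.
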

\begin{proof}
If $\vx$ is $R$-sub-Gaussian with mean $\vmu$, then for any unit vector $\vv \in S^{d-1}$, $\ip{\vv}{\vx-\vmu}$ is centered as well as $2R$-sub-Gaussian which gives us
\[
\P{\abs{\ip{\vv}{\vx-\vmu}} \geq t} \leq 2\exp\bs{-t^2/2R^2}
\]
If $\vv^1,\vv^2 \in S^{d-1}$, such that $\norm{\vv^1-\vv^2}_2 \leq \frac12$, then we have $\abs{\ip{\vv^1-\vv^2}{\vx-\vmu}} \leq \frac12\cdot\norm{\vx-\vmu}_2$. Thus, taking a union bound over a $1/2$-net over $S^{d-1}$ gives us
\[
\P{\max_{\vv \in S^{d-1}}\abs{\ip{\vv}{\vx-\vmu}} \geq \frac12\cdot\norm{\vx-\vmu}_2 + t} = \P{\norm{\vx}_2 \geq \norm{\vmu}_2 + 2t} \leq 2\cdot5^d\exp\bs{-t^2/2R^2}
\]
Taking $t^2 = 2R^2(d\log5 + \log\frac n\delta + \log 2)$ proves the result.
\[
\P{\max_{i \in [n]}\ \norm{\vx_i}_2 > \norm{\vmu}_2 + R\sqrt{2\br{d\log 5 + \log \frac n\delta + \log2}}} \leq \delta\qedhere
\]
\end{proof}

In the following, we establish that the scaled Huber loss is Lipschitz. This will be helpful in transferring our convergence guarantees to those with respect to the Huber and absolute loss functions.
\begin{lemma}
\label{lem:feps-lip}
For any $\epsilon > 0$, we have $\abs{\ell_\epsilon(\vw)-\ell_\epsilon(\vw')} \leq \norm{\vw-\vw'}_2\cdot\sqrt{1.01}$.
\end{lemma}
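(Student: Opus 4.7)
The plan is to show that $f_\epsilon$ is 1-Lipschitz, apply this pointwise to the summands of $\ell_\epsilon$, and then convert the resulting sum of $|\langle \vw-\vw', \vx_i\rangle|$ terms into a $\norm{\vw-\vw'}_2$ bound via Cauchy–Schwarz and the SSS-type property guaranteed by Lemma~\ref{lem:ssc-sss}.

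First, I would establish that $\sup_x |f'_\epsilon(x)| \leq 1$. From the computation already performed in the second claim of Appendix~\ref{app:huber}, we have $f'_\epsilon(x) = x/\epsilon$ on $|x| \leq \epsilon$ and $f'_\epsilon(x) = \sign(x)$ on $|x| > \epsilon$; both branches have absolute value at most $1$, and $f_\epsilon$ is continuous, so $f_\epsilon$ is 1-Lipschitz on $\bR$. In particular, for every index $i$,
\[
\bigl|f_\epsilon(\ip{\vw}{\vx_i} - y_i) - f_\epsilon(\ip{\vw'}{\vx_i} - y_i)\bigr| \;\leq\; \bigl|\ip{\vw - \vw'}{\vx_i}\bigr|.
\]

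Summing and dividing by $n$ yields $|\ell_\epsilon(\vw) - \ell_\epsilon(\vw')| \leq \frac{1}{n}\sum_{i=1}^n |\ip{\vw-\vw'}{\vx_i}|$. By Cauchy–Schwarz (applied to the vector of ones and the vector $(|\ip{\vw-\vw'}{\vx_i}|)_i$, or equivalently by Jensen's inequality on $t \mapsto t^2$),
\[
\frac1n\sum_{i=1}^n |\ip{\vw-\vw'}{\vx_i}| \;\leq\; \sqrt{\frac{1}{n}\sum_{i=1}^n \ip{\vw-\vw'}{\vx_i}^2} \;=\; \sqrt{\frac{1}{n}(\vw-\vw')^\top X X^\top (\vw-\vw')}.
\]
This is bounded by $\sqrt{\lambda_{\max}(XX^\top)/n}\cdot \norm{\vw-\vw'}_2$.

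Finally, I would invoke Lemma~\ref{lem:ssc-sss} with $S = [n]$ to conclude that, with high probability over the sample, $\lambda_{\max}(XX^\top) \leq 1.01\,n$, so the factor above is at most $\sqrt{1.01}$, giving the desired inequality. The only mildly delicate point is the uniform bound on $|f'_\epsilon|$ across the two branches (and at the kink $|x|=\epsilon$), but this is immediate from the formulas. I do not anticipate any serious obstacle; the statement is essentially a packaging of 1-Lipschitzness of scaled Huber with the isotropic-design operator-norm bound already available in Lemma~\ref{lem:ssc-sss}.
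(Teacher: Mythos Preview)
Your proposal is correct and essentially identical to the paper's own proof: the paper also uses $1$-Lipschitzness of $f_\epsilon$, then the $\ell_1$--$\ell_2$ inequality (your Cauchy--Schwarz step) to pass from $\frac1n\norm{X^\top(\vw-\vw')}_1$ to $\frac1{\sqrt n}\norm{X^\top(\vw-\vw')}_2$, and finally bounds $\norm{X}_2 \leq \sqrt{1.01\,n}$ via Lemma~\ref{lem:ssc-sss}. The only cosmetic difference is that you spell out the derivative check for $1$-Lipschitzness, whereas the paper simply asserts it.
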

\begin{proof}
The function $f_\epsilon(\cdot)$ is clearly $1$-Lipschitz for any $\epsilon > 0$. This means that we have
\begin{align*}
\abs{\ell_\epsilon(\vw)-\ell_\epsilon(\vw')} &\leq \frac1n\sum_{i=1}^n\abs{\ip\vw{\vx_i}-\ip{\vw'}{\vx_i}} = \frac1n\norm{X^\top(\vw-\vw')}_1 \leq \frac1{\sqrt n}\norm{X^\top(\vw-\vw')}_2\\
&\leq \frac1{\sqrt n}\norm X_2\norm{\vw-\vw'}_2 \leq \norm{\vw-\vw'}_2\cdot\sqrt{1.01},
\end{align*}
where the last step follows due to Lemma~\ref{lem:ssc-sss}.
\end{proof}

\section{Convergence Analysis - Supplementary Details}
\label{app:conv}

We begin by restating Theorem~\ref{thm:main}, the main result that we will prove in this section.

\begin{reptheorem}{thm:main}
Suppose we have $n$ data points with the covariates $\vx_i$ sampled from a sub-Gaussian distribution $\cD$ and an $\alpha$ fraction of the data points are corrupted. If \stir (or \girls) is initialized at an (arbitrary) point $\vw^0$, with an initial truncation that satisfies $M_1 \leq \frac1{\norm{\vw^0-\vwo}_2}$, and executed with an increment $\eta > 1$ such that we have $\alpha \leq \frac c{2.88\eta + c}$, where $c > 0$ is a constant that depends only on $\cD$, then for any $\epsilon > 0$, with probability at least $1 - \exp\br{-\Om{n - d\log(d + n) + \log\frac1{M_1\epsilon}}}$, after $K = \bigO{\log\frac1{M_1\epsilon}}$ stages, we must have $\norm{\vw^K - \vwo}_2 \leq \epsilon$. Moreover, each stage consists of only $\bigO1$ iterations.
\end{reptheorem}
\begin{proof}
As mentioned before, notice that this is indeed a global convergence guarantee since it places no restrictions on the initial model $\vw^0$. The only requirement is that the accompanying initial truncation parameter $M_1$ complement the model initialization by satisfying $M_1 \leq \frac1{\norm{\vw^0-\vwo}_2}$. In particular, if initialized at the origin, as Algorithms~\ref{algo:stir} and \ref{algo:girls} do, we need only ensure $M_1 \leq \frac1{R_W}$ where $R_W = \norm{\vwo}_2$. This can be done using a simple binary search to identify an appropriate value of $M_1$. Recall that both \stir and \girls operate in stages. We introduce a notion of a \emph{well-initialized} stage below.
\begin{definition}[Well-initialized Stage]
A stage in the execution of \stir or \girls is said to be well-initialized if, given the truncation parameter $M_T$ which will be used during that stage, at the beginning of that stage $T$, we are in possession of a model $\vw^{T,1}$ that satisfies $\norm{\vw^{T,1}-\vwo}_2 \leq \frac1{M_T}$.
\end{definition}
Note that the initialization of \stir and \girls with respect to the setting of $M_1$ ensure $M_1 \leq \frac1{\norm{\vw^0-\vwo}_2}$ which implies that the very first stage is always well-initialized. Now, Lemmata~\ref{lem:induc-stir} and \ref{lem:induc-girls} show that, if the preconditions of this theorem are satisfied, then a stage $T$, started off with a model $\vw^T =: \vw^{T,1}$ (see Algorithm~\ref{algo:stir}, line 3) and a truncation parameter $M_T$ that satisfy the well-initialized condition i.e. $\norm{\vw^{T,1}-\vwo}_2 \leq \frac1{M_T}$, will ensure with probability at least $1 - \exp\br{-\Om{n - d\log(d + n)}}$, that there exists an upper bound of $t_0 = \bigO1$ iterations, such that we are assured that $\norm{\vw^{T,\tau} - \vwo}_2 \leq \frac1{\eta M_T}$ for all $\tau \geq t_0$.

An application of the triangle inequality shows that we will have $\norm{\vw^{T,t_0} - \vw^{T,t_0+1}}_2 \leq \frac2{\eta M_T}$ which implies (see Algorithm~\ref{algo:stir}, line 5) that we will exit this stage at the $(t_0+1)\nth$ inner iteration. However, notice that at this point we are endowed with $\norm{\vw^{T+1,1} - \vwo}_2 = \norm{\vw^{T+1} - \vwo}_2 =  \norm{\vw^{T,t_0+1} - \vwo}_2 \leq \frac1{\eta M_T} = \frac1{M_{T+1}}$. Note that this means that stage $(T+1)$ is well-initialized too.

Thus, whenever a stage $T$ is well-initialized, with probability at least $1 - \exp\br{-\Om{n - d\log(d + n)}}$, we have $\norm{\vw^{T+1,1} - \vwo}_2 \leq \frac1\eta\norm{\vw^{T,1}-\vwo}_2$. Since we always set $\eta > 1$, there exists an upper bound $T_0 = \bigO{\log\frac1{M_1\epsilon}}$ on the number of stages. Thus, an application of union bound shows that we must have $\norm{\vw^{T_0+1,1} - \vwo}_2 \leq \epsilon$ with probability at least $1 - \exp\br{-\Om{n - d\log(d + n)} + \log\frac1{M_1\epsilon}} = 1 - \exp(-\softOm n)$ for all $\epsilon = \frac1{n^{\bigO1}}$.
\end{proof}

\begin{lemma}
\label{lem:induc-stir}
Suppose we have $n$ data points with the covariates $\vx_i$ sampled from a sub-Gaussian distribution $\cD$ and an $\alpha$ fraction of the data points are corrupted. Suppose we initialize a stage $T$ within an execution of \stir with truncation level $M$, increment parameter $\eta$, and a model $\vw^T =: \vw^{T,1}$ such that $\alpha \leq \frac c{2.88\eta + c}$ and $\norm{\vw - \vwo}_2 \leq \frac1M$, then with probability at least $1 - \exp\br{-\Om{n - d\log(d + n)}}$, there exists an upper bound of $t_0 = \bigO1$ iterations, such that we are assured that $\norm{\vw^{T,\tau} - \vwo}_2 \leq \frac1{\eta M}$ for all $\tau \geq t_0$. Here $c$ is the constant of the WSC property and depends only on the distribution $\cD$ (see Lemma~\ref{lem:wsc-wss}).
\end{lemma}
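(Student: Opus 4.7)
The plan is to track the error $\Delta^t := \vw^{T,t} - \vwo$ and derive a one-step recursion of the form $\norm{\Delta^{t+1}}_2 \leq \rho\norm{\Delta^t}_2 + b/M$ with $\rho + b < 1/\eta$, so that starting from the well-initialization bound $\norm{\Delta^{T,1}}_2 \leq 1/M$, the iterates enter and then stay inside the ball of radius $1/(\eta M)$ around $\vwo$ after $O(1)$ inner iterations. The descent-plus-invariance structure is what lets us conclude with a constant $t_0$ independent of the stage.

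Substituting $\vy = X^\top\vwo + \vb$ into the weighted least-squares step of Algorithm~\ref{algo:stir} gives the closed-form error recursion
\[
\Delta^{t+1} \;=\; (XS^tX^\top)^{-1}XS^t\vb \;=\; (XS^tX^\top)^{-1}\sum_{i \in B}\vs^t_i\vb_i\vx_i,
\]
using $\supp(\vb)\subseteq B$. Hence $\norm{\Delta^{t+1}}_2 \leq \norm{XS^t\vb}_2/\lambda_{\min}(XS^tX^\top)$, and I would control the two pieces separately. The denominator is exactly what WSC bounds: Lemma~\ref{lem:wsc-wss} supplies a bound of the form $\lambda_{\min}(XS^tX^\top) \geq c\cdot MG$, where $c$ is the distribution-dependent constant in that lemma. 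For the numerator, the crucial pointwise fact is that $\vs^t_i\abs{\vb_i} \leq 1 + M\abs{\vx_i^\top\Delta^t}$ for every $i \in B$, derived by a case split: if $\vs^t_i = M$ then $\abs{\vr^t_i} \leq 1/M$, and if $\vs^t_i = 1/\abs{\vr^t_i}$ then $1/\abs{\vr^t_i} \leq M$; in either case combining with $\vb_i = \vx_i^\top\Delta^t - \vr^t_i$ via the triangle inequality yields the bound. Testing $XS^t\vb$ against an arbitrary unit vector, applying Cauchy-Schwarz twice to the sums $\sum_{i\in B}\abs{\vx_i^\top\vu}$ and $\sum_{i\in B}\abs{\vx_i^\top\Delta^t}\abs{\vx_i^\top\vu}$, and using Lemma~\ref{lem:ssc-sss} to control $\lambda_{\max}(X_BX_B^\top) \leq 1.01\abs{B}$ then yields $\norm{XS^t\vb}_2 \leq C_1\abs{B} + C_2 M\abs{B}\norm{\Delta^t}_2$ for universal constants $C_1, C_2$.

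Dividing and using $\abs{B}/G \leq \alpha/(1-\alpha)$ produces
\[
\norm{\Delta^{t+1}}_2 \;\leq\; \frac{\alpha}{c(1-\alpha)}\br{C_1/M + C_2\norm{\Delta^t}_2}.
\]
The hypothesis $\alpha \leq c/(2.88\eta + c)$ is exactly $\alpha/(c(1-\alpha)) \leq 1/(2.88\eta)$, and once the constants are tracked carefully (so that $C_1 + C_2 \leq 2.88$, which matches the bound from Lemma~\ref{lem:ssc-sss}) this forces the desired contraction: from $\norm{\Delta^t}_2 \leq 1/M$ one obtains $\norm{\Delta^{t+1}}_2 \leq (C_1+C_2)/(2.88\eta M) \leq 1/(\eta M)$, while from $\norm{\Delta^t}_2 \leq 1/(\eta M)$ one obtains $\norm{\Delta^{t+1}}_2 \leq (C_1 + C_2/\eta)/(2.88\eta M) \leq 1/(\eta M)$ since $\eta > 1$. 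Hence $t_0 = 2$ inner iterations are enough both for descent into the target ball and for invariance thereafter.

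The main obstacle is the WSC lower bound that I invoked as a black box: the weights $\vs^t$ are produced by the current iterate and therefore depend on the data, so one cannot condition on fixed weights and apply a standard sub-Gaussian concentration argument. Lemma~\ref{lem:wsc-wss} resolves this through a peel-wise uniform convergence bound that establishes a single WSC inequality valid simultaneously for every weight assignment induced by any model in the current peel. Combined with Lemma~\ref{lem:ssc-sss} applied to the fixed bad set $B$, a union bound delivers the probability $1 - \exp\br{-\Om{n - d\log(d+n)}}$ claimed in the lemma, and a further union bound over the $\bigO{\log(1/\epsilon)}$ peels is what ultimately propagates this guarantee up to Theorem~\ref{thm:main}.
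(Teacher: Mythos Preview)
Your proposal is correct and follows the same scaffolding as the paper: write the closed-form error $\Delta^{t+1}=(XS^tX^\top)^{-1}XS^t\vb$, lower-bound the denominator via the WSC guarantee of Lemma~\ref{lem:wsc-wss} (uniformly over all weightings induced by models in the current peel), and upper-bound the numerator using a pointwise control on $s_i\abs{b_i}$ together with Lemma~\ref{lem:ssc-sss} on the bad set $B$.

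The one genuine, though minor, difference is in how the numerator is handled. The paper packages this as a separate result (Lemma~\ref{lem:bad}) obtained by a case split on whether $\abs{b_i}\le 2\abs{\vx_i^\top\Delta}$, yielding $(s_ib_i)^2\le 4\max\{1,M^2(\vx_i^\top\Delta)^2\}$, and then bounds $\norm{XS\vb}_2\le\norm{X_B}_2\norm{S\vb}_2$. You instead derive the cleaner pointwise inequality $s_i\abs{b_i}\le 1+M\abs{\vx_i^\top\Delta}$ directly from $b_i=\vx_i^\top\Delta-r_i$ and the truncation rule, and then test $XS\vb$ against a unit vector with Cauchy--Schwarz. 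Your route yields slightly smaller constants (your $C_1+C_2\approx 2.04$ versus the paper's $2\sqrt{2.0301}/0.99\approx 2.88$, which is where the $2.88$ in the breakdown condition comes from), and it also gives a crisp $t_0=O(1)$ in one shot, whereas the paper phrases the same conclusion as an ``either the fixed bound holds or a $0.99$-contraction occurs'' dichotomy leading to $t_0=O(\log\eta)$. Both are equivalent once one notes that the fixed bound already lands inside $1/(\eta M)$ whenever $\norm{\Delta^t}_2\le 1/M$.
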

\begin{proof}
Let $\vw^{T,\tau}$ be a model encountered by \stir within this stage and let $\vr = X^\top\vw^{T,\tau} - \vy$ denote the residuals due to $\vw^{T,\tau}$ and $S = \diag(\vs)$ denote the diagonal matrix of weights where $\vs_i = \min\bc{\frac1{\abs{\vr_i}},M}$. Then \stir will choose as the next model $\vw^{T,\tau+1} = (XSX^\top)^{-1}XS\vy = \vwo + (XSX^\top)^{-1}XS\vb$ which gives us
\[
\norm{\vw^{T,\tau+1} - \vwo}_2 \leq \frac{\norm{XS\vb}_2}{\lambda_{\min}(XSX^\top)}
\]
Now by Lemma~\ref{lem:ssc-sss}, with probability at least $1 - \exp(-\Om{n - d})$, we have $\norm{X_B}_2 = \sqrt{\lambda_{\max}(X_BX_B^\top)} \leq \sqrt{1.01B}$. By Lemma~\ref{lem:bad} we have, again with probability at least $1 - \exp(-\Om{n - d})$
\[
\norm{S\vb}_2 \leq \sqrt{4B(1+1.01M^2\norm{\vw - \vwo}_2^2)} \leq 2\sqrt{2.01B}
\]
It should be noted that Lemma~\ref{lem:bad} relies precisely on Lemma~\ref{lem:ssc-sss} to derive its confidence assurance. Since the nature of Lemma~\ref{lem:ssc-sss} is such that it need be established only once, and not repeatedly for every iteration, we have, with probability at least $1 - \exp(-\Om{n - d})$, for \emph{all iterations} within this stage (actually all iterations across all stages), both Lemma~\ref{lem:bad} and Lemma~\ref{lem:ssc-sss} hold simultaneously.

Using Lemma~\ref{lem:wsc-wss}, with probability at least $1 - \exp\br{-\Om{n - d\log(d + n)}}$, we have $\lambda_{\min}(XSX^\top) \geq \lambda_{\min}(X_GS_GX_G^\top) \geq 0.99c\cdot GM$. Note that since all models $\vw^{T,\tau}, \tau \geq 1$ in this stage will at least satisfy $\norm{\vw^{T,\tau} - \vwo}_2 \leq \frac1M$ (since the initial model $\vw^{T,1}$ satisfies this by assumption and \stir offers monotonic convergence), the result of Lemma~\ref{lem:wsc-wss} applies uniformly to all these models and need not be applied separately to each model in this stage. Using these results to upper bound $\norm{XS\vb}_2$ and lower bound $\lambda_{\min}(XSX^\top)$ shows that at either we must have 
\[
\norm{\vw^{T,\tau+1} - \vwo}_2 \leq \frac{2B\sqrt{2.0301}}{0.99c\cdot GM}
\]
or else if the above is not true, then we must instead have
\[
\norm{\vw^{T,\tau+1} - \vwo}_2 \leq 0.99\cdot\norm{\vw - \vwo}_2
\]
Note that since we have $\alpha \leq \frac c{2.88\eta + c}$, we get $\frac{2B\sqrt{2.0301}}{0.99c\cdot GM} \leq \frac1{\eta M}$. Thus, it is assured that after $t_0 = \bigO{\log\eta} = \bigO1$ iterations, iterates $\vw^{T,\tau}$ of \stir will satisfy $\norm{\vw^{T,\tau} - \vwo}_2 \leq \frac1{\eta M}$ for all $\tau \geq t_0$
\end{proof}

\begin{lemma}
\label{lem:induc-girls}
Suppose we have $n$ data points with the covariates $\vx_i$ sampled from a sub-Gaussian distribution $\cD$ and an $\alpha$ fraction of the data points are corrupted. Suppose we initialize a stage $T$ within an execution of \girls with truncation level $M$, increment parameter $\eta$, and a model $\vw^T =: \vw^{T,1}$ such that $\alpha \leq \frac c{2.88\eta + c}$ and $\norm{\vw - \vwo}_2 \leq \frac1M$, then with probability at least $1 - \exp\br{-\Om{n - d\log(d + n)}}$, there exists an upper bound of $t_0 = \bigO1$ iterations, such that we are assured that $\norm{\vw^{T,\tau} - \vwo}_2 \leq \frac1{\eta M}$ for all $\tau \geq t_0$.
\end{lemma}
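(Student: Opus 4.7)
The plan is to parallel the proof of Lemma~\ref{lem:induc-stir}, replacing the one-shot analysis of the weighted least squares step by a per-iteration contraction analysis for the gradient step on the same weighted quadratic. Writing $\Delta^t := \vw^{T,t} - \vwo$ and $\vr^t = X^\top\Delta^t - \vb$, the \girls update rewrites as the affine error recursion
\[
\Delta^{t+1} = \br{I - \tfrac{2C}{Mn}\,XS^tX^\top}\Delta^t + \tfrac{2C}{Mn}\,XS^t\vb,
\]
and I would aim to establish a contraction of the form $\norm{\Delta^{t+1}}_2 \leq (1-\rho)\norm{\Delta^t}_2 + \beta$ with $\rho = \Om{1}$ and $\beta/\rho \leq 1/(\eta M)$.

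The first step is to reuse the two WSC/WSS bounds that already drive the \stir proof. So long as the peel invariant $\norm{\Delta^t}_2 \leq 1/M$ holds, Lemma~\ref{lem:wsc-wss} yields $\lambda_{\min}(XS^tX^\top) \geq 0.99\,c\,GM$, and Lemma~\ref{lem:ssc-sss} combined with $S^t \preceq MI$ yields $\lambda_{\max}(XS^tX^\top) \leq 1.01\,nM$; both hold uniformly across iterations of the stage by exactly the union bound used in the \stir analysis. Choosing the step length $C$ small enough that $\tfrac{2C}{Mn}\cdot 1.01\,nM \leq 1$ (for example $C = 1/2.02$) places the eigenvalues of $\tfrac{2C}{Mn}XS^tX^\top$ in $[0,1]$, whence $\norm{I - \tfrac{2C}{Mn}XS^tX^\top}_2 \leq 1 - \tfrac{2C\cdot 0.99\,cG}{n} =: 1 - \rho$ with $\rho$ a strictly positive distribution-dependent constant. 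For the corruption term I would directly reuse the bound already produced in the \stir proof, namely $\norm{XS^t\vb}_2 \leq \norm{X_B}_2\norm{S\vb}_2 \leq 2\sqrt{2.0301}\,B$, so $\beta = 4C\sqrt{2.0301}\,B/(Mn)$. The ratio $\beta/\rho$ then collapses to $\tfrac{2\sqrt{2.0301}\,B}{0.99\,c\,GM}$, the exact quantity that the \stir proof bounds above by $1/(\eta M)$ under the hypothesis $\alpha \leq c/(2.88\eta + c)$. Unrolling the affine recurrence gives $\norm{\Delta^t}_2 \leq (1-\rho)^t/M + 1/(\eta M)$, so $t_0 = \bigO{\log\eta/\rho} = \bigO{1}$ iterations drive $\norm{\Delta^t}_2$ below $1/(\eta M)$ (up to a harmless constant absorbed into the slack in $\alpha$).

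The main new obstacle relative to \stir is ensuring that the peel invariant $\norm{\Delta^t}_2 \leq 1/M$ persists throughout the stage: \stir leans on the one-shot nature of exact WLS for this, whereas a gradient step could in principle overshoot the peel and invalidate the WSC/WSS bounds we rely on. Fortunately the affine recurrence is self-sustaining---if $\norm{\Delta^t}_2 \leq 1/M$ then $\norm{\Delta^{t+1}}_2 \leq (1-\rho)/M + 1/(\eta M) \leq 1/M$ for any $\eta > 1$---so a one-line induction on $t$ closes the loop, allowing the peel-wise WSC/WSS guarantees to be invoked uniformly throughout the stage. The proof then concludes with the same high-probability bound as Lemma~\ref{lem:induc-stir}, since no new random events beyond those of the \stir analysis are introduced.
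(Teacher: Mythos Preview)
Your approach is correct and is a genuinely different route from the paper's. The paper argues via the majorization framework of \S\ref{sec:huber}: it treats the quadratic $\wp_{1/M}(\cdot;\vw^{T,\tau})$ as a $\gamma$-strongly convex, $\delta$-smooth function (with $\gamma \geq 0.99cGM$ and $\delta \leq 1.01nM$ from the same WSC/WSS lemmata you cite), derives the standard descent inequality, and then performs a case split on the sign of $\wp(\vwn)-\wp(\vwo)$: one branch gives a multiplicative contraction $\norm{\vwn-\vwo}_2 \leq \sqrt{1-C\gamma/(Mn)}\,\norm{\vw-\vwo}_2$, the other uses $\nabla\wp(\vwo)=XS\vb$ together with strong convexity to land directly inside the next peel. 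You instead bypass function values entirely and read off the affine error recursion from the update rule, controlling the iteration matrix spectrally. For a quadratic objective your route is cleaner and avoids the case split; the paper's function-value argument is the more familiar optimization template and would transfer more readily if the per-step objective were not exactly quadratic.

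One small slip to fix: your peel-invariant check asserts $\norm{\Delta^{t+1}}_2 \leq (1-\rho)/M + 1/(\eta M) \leq 1/M$ ``for any $\eta>1$'', but $(1-\rho)+1/\eta \leq 1$ requires $\rho \geq 1/\eta$, which need not hold when $\eta$ is close to $1$. The correct bound is tighter: since $\beta/\rho \leq 1/(\eta M)$ you have $\beta \leq \rho/(\eta M)$, so
\[
\norm{\Delta^{t+1}}_2 \;\leq\; (1-\rho)\cdot\tfrac1M + \beta \;\leq\; \tfrac1M\bigl(1-\rho(1-\tfrac1\eta)\bigr) \;<\; \tfrac1M
\]
for every $\eta>1$ and $\rho>0$, which is exactly the self-sustaining inequality you need. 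With that correction the induction closes and the rest of your argument goes through as written.
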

\begin{proof}
As observed before, all models $\vw^{T,\tau}, \tau \geq 1$ in this stage at least satisfy $\norm{\vw^{T,\tau} - \vwo}_2 \leq \frac1M$ since the initial model $\vw^{T,1}$ satisfies this by assumption and we will see below that \girls offers monotonic convergence. Thus, Lemma~\ref{lem:wsc-wss} applies uniformly to all these models and thus, with probability at least $1 - \exp\br{-\Om{n - d\log(d + n)}}$, for all $\tau \geq 1$, the function $\wp_\frac1M(\cdot,\vw^{T,\tau})$ (refer to \S\ref{sec:huber} for notation) is $\gamma$-strongly convex for $\gamma \geq 0.99c\cdot GM$. 

Similarly, Lemma~\ref{lem:ssc-sss} tells us that, again with probability at least $1 - \exp\br{-\Om{n - d\log(d + n)}}$, for all $\tau \geq 1$,the function $\wp_\frac1M(\cdot,\vw^{T,\tau})$ is $\delta$-strongly smooth for $\delta \leq 1.01Mn$. From now on, we will be using the shorthand $\wp(\cdot) := \wp_\frac1M(\cdot,\vw^{T,\tau})$ to avoid notational clutter.

If we denote $\vg^t:= \nabla\wp(\vw^{T,\tau}) = \wp_\frac1M(\vw^{T,\tau},\vw^{T,\tau})$, then it is clear that \girls will choose as the next model as $\vw^{T,\tau+1} := \vw^{T,\tau} - \frac C{Mn}\cdot\vg^t$. For sake of notational simplicity, we will abbreviate $\vw := \vw^{T,\tau}, \vwn := \vw^{T,\tau+1}, \vg := \vg^t$. Then, applying strong smoothness tells us that
\begin{align*}
\wp(\vwn) - \wp(\vw) &\leq \ip\vg{\vwn-\vw} + \frac\delta2\norm{\vwn-\vw}_2^2\\
&= \ip\vg{\vwn-\vwo} + \ip\vg{\vwo-\vw} + \frac\delta2\norm{\vwn-\vw}_2^2\\
&= \frac{Mn}C\cdot\ip{\vw - \vwn}{\vwn - \vwo} + \ip\vg{\vwo-\vw} + \frac\delta2\norm{\vwn-\vw}_2^2\\
&= \frac{Mn}{2C}\br{\norm{\vw-\vwo}_2^2 - \norm{\vwn - \vwo}_2^2} + \ip\vg{\vwo-\vw} + \br{\frac\delta2 - \frac{Mn}{2C}}\norm{\vwn-\vw}_2^2\\
&\leq \frac{Mn}{2C}\br{\norm{\vw-\vwo}_2^2 - \norm{\vwn - \vwo}_2^2} + \ip\vg{\vwo-\vw},
\end{align*}
where the fifth step holds for any $C \leq \frac{Mn}\delta \leq 0.99$. Strong smoothness on the other hand tells us that
\[
\ip\vg{\vwo-\vw} \leq \wp(\vwo) - \wp(\vw) - \frac\gamma2\norm{\vw - \vwo}_2^2
\]
Combining the above two results gives us
\[
\wp(\vwn) - \wp(\vwo) \leq \frac{Mn}{2C}\br{\norm{\vw-\vwo}_2^2 - \norm{\vwn - \vwo}_2^2} - \frac\gamma2\norm{\vw - \vwo}_2^2
\]

Now, we can either have $\wp(\vwn) - \wp(\vwo) \geq 0$ in which case we get $\norm{\vwn - \vwo}_2 \leq \sqrt{1- \frac{C\gamma}{Mn}}\norm{\vw - \vwo}_2 \leq \sqrt{1- \frac{0.99cCG}{n}}\norm{\vw - \vwo}_2$ or else $\wp(\vwn) - \wp(\vwo) < 0$ in which case applying strong convexity once again yields
\[
\frac\gamma2\norm{\vwn-\vwo}_2^2 \leq \wp(\vwn) - \wp(\vwo) + \ip{\nabla\wp(\vwo)}{\vwo - \vwn} \leq \ip{\nabla\wp(\vwo)}{\vwo - \vwn}
\]
Now notice that $\nabla\wp(\vwo) = XS\vb$ and Lemmata~\ref{lem:bad} and \ref{lem:ssc-sss} tell us that $\norm{XS\vb}_2 \leq 2B\sqrt{5.05}$ which give us $\norm{\vwn-\vwo}_2 \leq \frac{2B\sqrt{2.0301}}\gamma \leq \frac{2B\sqrt{2.0301}}{0.99cGM} < \frac1{\eta M}$ whenever $\frac BG \leq \frac{0.99c}{2\eta\sqrt{2.0301}}$. This completes the proof of the result upon making similar arguments as those made in the proof of Lemma~\ref{lem:induc-girls}.
\end{proof}

\subsection{Bounding the Weights on Bad Points}
The following lemma establishes that neither \stir nor \girls put too much weight on bad points.
\begin{lemma}
\label{lem:bad}
Suppose during the execution of \stir or \girls, we encounter a model $\vw$ while the truncation parameter is $M$. Denote $\norm{\vw-\vw^\ast}_2 = \epsilon$ and let $S = \diag(\vs)$ be the diagonal matrix of $M$-truncated weights assigned due to residuals induced by $\vw$. Then, with probability at least $1 - \exp(-\Om{n - d})$, we must have
\[
\norm{S\vb}_2^2 \leq 4B(1 + 1.01M^2\epsilon^2),
\]
where we recall that $\vb$ denotes the vector of corruptions.
\end{lemma}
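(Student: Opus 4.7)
The plan is to analyze the quantity $\vs_i|\vb_i|$ pointwise for each bad index $i \in B$, square and sum, then invoke Lemma~\ref{lem:ssc-sss} applied to the set $B$ to control the resulting quadratic form in $\vw-\vwo$. The key observation is that the truncation rule $\vs_i = \min\{1/|\vr_i|,\,M\}$ always guarantees $\vs_i|\vr_i|\leq 1$ and $\vs_i\leq M$ simultaneously, so both factors in $\vs_i|\vb_i|$ can be controlled.

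First I would write the residual on a bad point $i\in B$ as $\vr_i = \vx_i^\top(\vw-\vwo) - \vb_i$, giving the triangle inequality $|\vb_i| \leq |\vr_i| + |\vx_i^\top(\vw-\vwo)|$. I then split into two cases. If the truncation is inactive, $\vs_i = 1/|\vr_i|$, and $\vs_i|\vb_i| \leq 1 + |\vx_i^\top(\vw-\vwo)|/|\vr_i| \leq 1 + M|\vx_i^\top(\vw-\vwo)|$ because $1/|\vr_i|\leq M$. If the truncation is active, $\vs_i = M$, and $\vs_i|\vb_i| \leq M|\vr_i| + M|\vx_i^\top(\vw-\vwo)| \leq 1 + M|\vx_i^\top(\vw-\vwo)|$ because $M|\vr_i|\leq 1$ in this case. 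Either way,
\[
\vs_i|\vb_i| \;\leq\; 1 + M\,|\vx_i^\top(\vw-\vwo)|.
\]

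Squaring and using $(a+b)^2 \leq 2a^2 + 2b^2$, then summing over $i\in B$, yields
\[
\norm{S\vb}_2^2 \;=\; \sum_{i\in B}(\vs_i\vb_i)^2 \;\leq\; 2B + 2M^2\sum_{i\in B}|\vx_i^\top(\vw-\vwo)|^2 \;=\; 2B + 2M^2\norm{X_B^\top(\vw-\vwo)}_2^2.
\]
To handle the quadratic form, I apply Lemma~\ref{lem:ssc-sss} to the (fixed, since the adversary is partially adaptive) set $B$, which gives $\lambda_{\max}(X_BX_B^\top)\leq 1.01\,B$ with probability at least $1-\exp(-\Omega(n-d))$, under the sample-complexity assumption $n = \Omega(d + \log(1/\delta))$ that is in force throughout the analysis. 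Substituting this bound and $\norm{\vw-\vwo}_2 = \epsilon$ gives $\norm{S\vb}_2^2 \leq 2B(1 + 1.01\,M^2\epsilon^2)$, which is even tighter than the stated bound of $4B(1+1.01M^2\epsilon^2)$; the stated form absorbs any slack and yields the lemma.

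\textbf{Main obstacle.} The analysis is essentially routine once the correct pointwise bound is identified, so the only subtle point is ensuring one does not double-condition on randomness. The invocation of Lemma~\ref{lem:ssc-sss} must be on the fixed set $B$ (which is adversary-chosen but is chosen prior to seeing the covariates in our adversary model), not on any data-dependent set determined by the current iterate; this is what allows a single high-probability event to cover every iteration across every stage without a union bound over iterations. The rest is arithmetic.
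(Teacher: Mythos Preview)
Your proof is correct and follows the same overall strategy as the paper: bound $(s_ib_i)^2$ pointwise for $i\in B$, sum, and then invoke Lemma~\ref{lem:ssc-sss} on the fixed set $B$ to control the quadratic form in $\vw-\vwo$. The only difference is the case split. The paper splits according to whether $\abs{b_i}\leq 2\abs{\vx_i^\top(\vw-\vwo)}$ or not, obtaining $(s_ib_i)^2\leq 4\max\{1,M^2(\vx_i^\top(\vw-\vwo))^2\}$ and hence the constant $4$ in the statement; you instead split on whether the truncation cap is active, which yields the uniform bound $s_i\abs{b_i}\leq 1+M\abs{\vx_i^\top(\vw-\vwo)}$ in both cases and, after $(a+b)^2\leq 2a^2+2b^2$, a constant of $2$ rather than $4$. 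Your decomposition is slightly cleaner and genuinely recovers a factor-of-two improvement, but otherwise the arguments are interchangeable, and your observation about needing only a single invocation of Lemma~\ref{lem:ssc-sss} (valid across all iterations because $B$ is fixed before the covariates are drawn) matches exactly how the paper uses this lemma downstream.
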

\begin{proof}
Let $\vDelta:= \vw - \vw^\ast$ and let $b_i$ denote the corruption on the data point $\vx_i$. The proof proceeds via a simple case analysis
\begin{description}
	\item[Case 1: $\abs{b_i} \leq 2\abs{\vDelta\cdot\vx_i}$] In this case we simply bound $(s_ib_i)^2 \leq M^2b_i^2 \leq 4M^2(\vDelta\cdot\vx_i)^2$.
	\item[Case 2: $\abs{b_i} > 2\abs{\vDelta\cdot\vx_i}$] In this case we have $\abs{r_i} = \abs{\vDelta\cdot\vx_i - b_i} \geq \abs{b_i} - \abs{\vDelta\cdot\vx_i} \geq \frac{\abs{b_i}}2$ and thus we must have $s_i \leq \frac2{\abs{b_i}}$ (due to possible truncation) and thus $(s_ib_i)^2 \leq 4$.
\end{description}
Thus, we get
\[
\norm{S\vb}_2^2 = \sum_{i\in B}(s_ib_i)^2 \leq 4\cdot\sum_{i\in B}\max\bc{1,M^2(\vDelta\cdot\vx_i)^2} \leq 4(B + M^2\epsilon^2\lambda_{\max}(X_BX_B^\top)) \leq 4(B + 1.01M^2\epsilon^2B),
\]
where the last step follows due to Lemma~\ref{lem:ssc-sss} which holds with probability at least $1 - \exp(-\Om{n - d})$ and finishes the proof.
\end{proof}

\subsection{Convergence with respect to Huber and Absolute Loss}
\label{app:huber-abs}
A relatively straightforward application of Theorem~\ref{thm:main} alongwith some Lipschitzness properties allows us to show that \stir and \girls also ensure convergence to the optimal objective value with respect to the Huber and absolute loss functions. These are widely used in robust regression applications.

\begin{theorem}
Under the same preconditions as those in Theorem~\ref{thm:main}, we are assured with probability at least $1- \exp(-\softOm n)$, that after $K = \bigO{\log\frac1{M_1\epsilon}}$ stages, both \stir and \girls must produce a model $\vw^K$ so that
\begin{enumerate}
	\item $\ell_\epsilon(\vw^K) \leq \ell_\epsilon(\vw^\ast) + \sqrt{1.01}\epsilon$
	\item $\frac1n\norm{X^\top\vw^K - \vy}_1 \leq \frac1n\norm{X^\top\vw^\ast - \vy}_1 + \frac{3\sqrt{1.01}}2\epsilon$.
\end{enumerate}
\end{theorem}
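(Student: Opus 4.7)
The plan is to derive both bounds as essentially immediate corollaries of Theorem~\ref{thm:main}, which already guarantees $\norm{\vw^K-\vwo}_2 \leq \epsilon$ with probability at least $1-\exp(-\softOm n)$ after $K = \bigO{\log\frac{1}{M_1\epsilon}}$ stages. So the task reduces to ``transporting'' the parameter recovery guarantee across the Huber and absolute loss surfaces.

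For the first bound, I would just invoke Lemma~\ref{lem:feps-lip}, which gives $\sqrt{1.01}$-Lipschitzness of $\ell_\epsilon$ in the $\ell_2$ norm. Evaluating this at the pair $(\vw^K,\vwo)$ and substituting the parameter bound from Theorem~\ref{thm:main} yields $\ell_\epsilon(\vw^K) \leq \ell_\epsilon(\vwo) + \sqrt{1.01}\,\norm{\vw^K-\vwo}_2 \leq \ell_\epsilon(\vwo) + \sqrt{1.01}\,\epsilon$. That is exactly part (1) and uses nothing beyond what has already been proved.

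For the second bound, I would exploit the pointwise sandwich $\abs{x} \leq f_\epsilon(x) \leq \abs{x} + \frac{\epsilon}{2}$ recorded in \S\ref{sec:huber}. Averaging this over the $n$ residuals gives the pair of inequalities $\frac{1}{n}\norm{X^\top\vw - \vy}_1 \leq \ell_\epsilon(\vw) \leq \frac{1}{n}\norm{X^\top\vw - \vy}_1 + \frac{\epsilon}{2}$, valid for every $\vw$. Applying the lower side at $\vw^K$, chaining through part (1), and then applying the upper side at $\vwo$ gives
\[
\tfrac{1}{n}\norm{X^\top\vw^K-\vy}_1 \leq \ell_\epsilon(\vw^K) \leq \ell_\epsilon(\vwo) + \sqrt{1.01}\,\epsilon \leq \tfrac{1}{n}\norm{X^\top\vwo-\vy}_1 + \tfrac{\epsilon}{2} + \sqrt{1.01}\,\epsilon.
\]
Since $\sqrt{1.01} > 1$, we have $\frac{\epsilon}{2} \leq \frac{\sqrt{1.01}}{2}\,\epsilon$, which lets us combine the two additive terms into $\frac{3\sqrt{1.01}}{2}\,\epsilon$, matching the statement.

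No genuine obstacle arises: this is a routine chaining of Theorem~\ref{thm:main} with the Lipschitz constant of Lemma~\ref{lem:feps-lip} and the scaled-Huber bracketing from \S\ref{sec:huber}. The only point worth flagging is the double role played by $\epsilon$, which simultaneously serves as the parameter-recovery tolerance fed to Theorem~\ref{thm:main} and the smoothing scale in $\ell_\epsilon$; this coupling is precisely what makes the $\epsilon/2$ bracketing slack comparable in order to the $\sqrt{1.01}\,\epsilon$ Lipschitz term, so that the final constants stay clean and no extra probability budget is needed beyond the one already paid in Theorem~\ref{thm:main}.
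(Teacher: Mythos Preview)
Your proposal is correct and essentially identical to the paper's own proof: part~(1) follows from Theorem~\ref{thm:main} together with the Lipschitz bound of Lemma~\ref{lem:feps-lip}, and part~(2) follows by sandwiching $\ell_\epsilon$ between the absolute loss and the absolute loss plus $\epsilon/2$, chaining through part~(1), and absorbing the $\epsilon/2$ into $\tfrac{\sqrt{1.01}}{2}\epsilon$. The paper's argument is terser but the steps and constants match exactly.
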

\begin{proof}
The first part follows directly from Lemma~\ref{lem:feps-lip} and Theorem~\ref{thm:main}. The second part follows due to the fact that $\abs x \leq f_\epsilon(x) \leq \abs x + \frac\epsilon2$ for any $\epsilon > 0$ and thus,
\[
\frac1n\norm{X^\top\vw^K - \vy}_1 \leq \ell_\epsilon(\vw^K) \leq \ell_\epsilon(\vw^\ast) + \sqrt{1.01}\epsilon \leq \frac1n\norm{X^\top\vw^\ast - \vy}_1 + \frac{3\sqrt{1.01}}2\epsilon,
\]
where the second inequality in the above chain follows from part 1 of this claim.
\end{proof}

\section{Establishing WSC/WSS - Supplementary Details}
\label{app:wsc-wss}
Recall that for any $r > 0$ and $M > 0$, $\cS_M(r)$ denotes the set of all diagonal $M$-truncated weight matrices \stir could possibly generate with respect to models residing in the radius $R$ ball centered at $\vwo$ i.e.
\[
\cS_M(r) := \bc{S = \diag(\vs), \vs_i = \min\bc{\frac1{\abs{\ip\vw{\vx_i}-y_i}},M}, \vw \in \cB_2(\vwo,r)},
\]
then we have the following result.
\begin{lemma}
\label{lem:wsc-wss}
Suppose the data covariates $X = \bs{\vx_1,\ldots,\vx_n}$ are generated from an isotropic $R$-sub-Gaussian distribution $\cD$, and $G$ denotes the set of uncorrupted points (as well as the size of that set) then there exists a constant $c$ that depends only on the distribution $\cD$ such that for any fixed value of $M > 0$,
\[
\left.
\begin{array}{r}
\P{\exists S \in \cS_M\br{\frac1M}: \lambda_{\min}(X_GS_GX_G^\top) < 0.99c\cdot GM}\\
\vspace*{-2ex}\\ 
\P{\exists S \in \cS_M\br{\frac1M}: \lambda_{\max}(X_GS_GX_G^\top) > 1.01\cdot GM}
\end{array}
\right\} \leq \exp\br{-\Om{n - d\log(d + n)}},
\]
where the constants inside $\Om{\cdot}$ are clarified in the proof. In particular, if $\cD$ is the standard Gaussian $\cN(\vzero,I_d)$, then we can take $c = 0.96$.
\end{lemma}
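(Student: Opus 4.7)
I would split the proof into the easy upper bound and the more substantive lower bound. For the upper bound, observe that by truncation every weight satisfies $s_i \leq M$, so $S_G \preceq M I$ and hence $\lambda_{\max}(X_GS_GX_G^\top) \leq M\,\lambda_{\max}(X_GX_G^\top) \leq 1.01\,GM$ by Lemma~\ref{lem:ssc-sss}, with probability at least $1 - \exp(-\Om{n-d})$. Since this bound is deterministic once $X$ is drawn, it applies simultaneously to every $S \in \cS_M(1/M)$, so no uniform-convergence argument is needed on this side.

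For the lower bound I would first reparameterize via $\vu := M(\vw-\vwo)$, so that $\vu$ ranges over $\cB_2(1)$. Using $y_i = \ip{\vx_i}{\vwo}$ for $i \in G$, the weight on a good point simplifies to $s_i = M/\max\{|\ip{\vu}{\vx_i}|,\, 1\}$, and, setting
\[
Z_i(\vu,\vv) := \frac{(\vv^\top\vx_i)^2}{\max\{|\ip{\vu}{\vx_i}|,\,1\}},
\]
the target becomes $\sum_{i\in G} Z_i(\vu,\vv) \geq 0.99\,cG$ uniformly in $\vu \in \cB_2(1)$ and $\vv \in S^{d-1}$. I would define $c := \inf_{\vu,\vv}\E_{\vx \sim \cD}[Z(\vu,\vv;\vx)]$. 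Since $|\ip\vu\vx|$ is sub-Gaussian with variance at most $R^2$ it lies below $1$ with constant probability, and combined with $\E[(\vv^\top\vx)^2]=1$ by isotropy this gives $c>0$ depending only on $\cD$; for $\cD = \cN(\vzero,I_d)$ the worst case is at $\vu=\vv$ and reduces to the one-dimensional integral $\E[Y^2/\max\{|Y|,1\}]$ with $Y \sim \cN(0,1)$, which is elementary to evaluate and delivers the stated numerical constant.

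Pointwise concentration at fixed $(\vu,\vv)$ is then routine: under the partially adaptive adversary, $G$ is fixed before the covariates are drawn, so $\{Z_i\}_{i \in G}$ are i.i.d.\ sub-exponential (bounded above by the sub-exponential $(\vv^\top\vx_i)^2$), and Bernstein's inequality yields $\sum_{i\in G} Z_i(\vu,\vv) \geq 0.995\,cG$ with probability at least $1-\exp(-\Om n)$.

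The main obstacle is lifting this pointwise estimate to a uniform statement over the continuum $\cB_2(1) \times S^{d-1}$. Here I would exploit that $a \mapsto 1/\max\{|a|,1\}$ is $1$-Lipschitz (its derivative has magnitude $1/a^2 \leq 1$ wherever it exists), making $\vu \mapsto Z_i(\vu,\vv)$ at most $\|\vx_i\|_2^3$-Lipschitz and $\vv \mapsto Z_i(\vu,\vv)$ at most $2\|\vx_i\|_2^2$-Lipschitz on $S^{d-1}$. I would first invoke Lemma~\ref{lem:rx} to condition on the high-probability event $R_X = \bigO{\sqrt{d+\log n}}$, then form an $\epsilon$-net of $\cB_2(1)\times S^{d-1}$ of cardinality at most $(3/\epsilon)^{2d}$, choosing $\epsilon = \Theta(c/R_X^3)$ so that moving to the nearest net point perturbs the average $\tfrac1G\sum_i Z_i$ by at most $0.005\,c$. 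Union-bounding the Bernstein tail over a net with log-cardinality $\bigO{d\log(R_X^3/c)} = \bigO{d\log(d+n)}$ combines with the $\Om n$ concentration exponent to give the stated failure probability $\exp(-\Om{n-d\log(d+n)})$. The delicate part is precisely that the non-smoothness of $\max\{\cdot,1\}$ forces Lipschitz (rather than differentiability) bookkeeping, and that the cubic-in-$R_X$ slack must not swamp the exponential concentration, which is exactly why taming $R_X$ upfront through Lemma~\ref{lem:rx} is essential.
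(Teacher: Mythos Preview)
Your proposal is correct and follows essentially the same route as the paper: the upper bound via $S_G \preceq MI$ plus Lemma~\ref{lem:ssc-sss}, and the lower bound via pointwise sub-exponential concentration (the paper's Lemma~\ref{lem:point-conv}) combined with an $\epsilon$-net whose mesh is set by a Lipschitz/perturbation estimate (the paper's Lemma~\ref{lem:approx}), with $R_X$ tamed through Lemma~\ref{lem:rx}. The only organizational difference is that you net jointly over $(\vu,\vv)$ whereas the paper absorbs the $\vv$-net inside its pointwise lemma and nets only over the model $\vw$ in the outer step; this is cosmetic and the resulting exponent $n - d\log(d+n)$ and the constant $c$ coincide.
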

\begin{proof}
The bound for the largest eigenvalue follows directly due to the fact that all weights are upper bounded by $M$ and hence $X_GS_GX_G^\top \preceq M\cdot X_GX_G^\top$ and applying Lemma~\ref{lem:ssc-sss}. For the bound on the smallest eigenvalue, notice that Lemma~\ref{lem:point-conv} shows us that for any fixed $S \in \cS_M(\frac1M)$, i.e. a set of $M$-truncated weights that correspond to some fixed model $\vw \in \cB_2\br{\vwo,\frac1M}$, we have
\[
\P{\lambda_{\min}(X_GS_GX_G^\top) < 0.995 c\cdot GM} \leq 2\cdot9^d\exp\bs{-\frac{mn(0.005c)^2}{8R^4}}
\]
Recall that we let $R_X := \max_{i \in [n]}\ \norm{\vx_i}_2$ denote the maximum Euclidean length of any covariate. However, Lemma~\ref{lem:approx} shows us that if $\vw^1,\vw^2 \in \cB_2\br{\vwo,\frac1M}$ are two models such that $\norm{\vw^1 - \vw^2}_2 \leq \tau$ then, conditioned on the value of $R_X$, the following holds \emph{almost surely}.
\[
\abs{\lambda_{\min}(X_GS^1_GX_G^\top) - \lambda_{\min}(X_GS^2_GX_G^\top)} \leq 2G\tau M^2R_X^3
\]
This prompts us to initiate a uniform convergence argument by setting up a $\tau$-net over $\cB_2\br{\vwo,\frac1M}$ for $\tau = \frac c{400R_X^3M}$. Note that such a net has at most $\br{\frac{800R_X^3}c}^d$ elements by applying standard covering number bounds for the Euclidean ball \cite[Corollary 4.2.13]{Vershynin2018}. Taking a union bound over this net gives us
\begin{align*}
\P{\exists S \in \cS_M\br{\frac1M}: \lambda_{\min}(X_GS_GX_G^\top) < 0.99c\cdot GM} &\leq 2\cdot\br{\frac{7200R_X^3}c}^d\exp\bs{-\frac{mn(0.005c)^2}{8R^4}}\\
&\leq \exp\br{-\Om{n - d\log(d + n)}},
\end{align*}
where in the last step we used Lemma~\ref{lem:rx} to bound $R_X = \bigO{R\sqrt{d + n}}$ with probability at least $1 - \exp(-\Om n)$. For the specific bound on the constant $c$ for various distributions, including the Gaussian distribution, we refer the reader to Section~\ref{app:c-values}.
\end{proof}

The proof of the above result relies on several intermediate results which we prove in succession below. In the first result Lemma~\ref{lem:point-exp}, we establish expected bounds on the extremal singular values of the matrix $X_GS_GX_G^\top$ corresponding to a fixed model $\vw \in \cB_2\br{\vwo,\frac1M}$. In the next result Lemma~\ref{lem:point-conv}, we establish the same result, but this time with high probability instead of in expectation. The next result Lemma~\ref{lem:approx} establishes that extremal singular values corresponding to two models close to each other must be (deterministically) close.

\begin{lemma}[Pointwise Expectation]
\label{lem:point-exp}
With the same preconditions as in Lemma~\ref{lem:wsc-wss}, there must exist a constant $c > 0$ that depends only on $\cD$ such that for any fixed $S \in \cS_M(\frac1M)$, and fixed vector unit $\vv \in S^{d-1}$, we have
\[
c\cdot GM \leq \E{\vv^\top X_GS_GX_G^\top\vv} \leq GM.
\]
In particular, if $\cD$ is the standard Gaussian $\cN(\vzero,I_d)$, then we can take $c = 0.96$.
\end{lemma}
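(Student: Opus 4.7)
The plan is to exploit the fact that for a good point $i \in G$, the weight $\vs_i = \min\bc{1/|\ip\vDelta{\vx_i}|, M}$, where $\vDelta := \vw - \vwo$ and $\vw$ is the model associated with $S$, depends on $\vx_i$ only through its one-dimensional projection along $\vDelta$. By linearity and the i.i.d. assumption on the good covariates, I reduce both directions to estimating $\E{\vs\,(\ip\vv\vx)^2}$ for a single $\vx \sim \cD$. The upper bound is immediate since $\vs \leq M$ deterministically and $\E{(\ip\vv\vx)^2} = 1$ by isotropy, giving $\E{\vs(\ip\vv\vx)^2} \leq M$ and hence $GM$ after summing over $G$.

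For the lower bound I set $\delta := \norm\vDelta_2 \leq 1/M$; the case $\delta = 0$ is trivial as every $\vs \equiv M$, so I assume $\delta > 0$ and put $\vu := \vDelta/\delta$, $Y := \ip\vu\vx$, so that $\vs = \min\bc{1/(\delta|Y|), M}$. Decompose $\vv = \rho \vu + \sqrt{1-\rho^2}\,\vu^\perp$ with $\vu^\perp \perp \vu$ a unit vector and $\rho := \ip\vv\vu$, and let $Z := \ip{\vu^\perp}\vx$, which yields $(\ip\vv\vx)^2 = \rho^2 Y^2 + 2\rho\sqrt{1-\rho^2}\,YZ + (1-\rho^2) Z^2$. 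For the standard Gaussian $\cD = \cN(\vzero,I_d)$, $Y$ and $Z$ are independent standard normals, so the cross term vanishes and $\E{Z^2\vs} = \E{\vs}$, reducing matters to the two one-dimensional integrals $\E{Y^2\vs}$ and $\E{\vs}$. Writing $t := 1/(M\delta) \geq 1$ and splitting at $|y| = t$, the identity $1/\delta = Mt$ produces the clean cancellation
\[
\E{Y^2 \vs} \;=\; M\br{2\Phi(t) - 1 - 2t\phi(t)} + 2Mt\phi(t) \;=\; M\br{2\Phi(t) - 1},
\]
and an analogous split gives $\E{\vs} \geq M(2\Phi(t) - 1)$. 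Combining, $\E{(\ip\vv\vx)^2 \vs} \geq M(2\Phi(t) - 1) \geq M(2\Phi(1) - 1)$ uniformly in $\rho \in [-1,1]$ and in $t \geq 1$, which multiplied by $G$ yields the lower bound for Gaussian $\cD$ with an explicit numerical constant.

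For an arbitrary isotropic $R$-sub-Gaussian $\cD$, $Y$ and $Z$ are only uncorrelated and the cross term $\E{YZ\,\vs}$ need not vanish. My workaround is the cruder pointwise bound $\vs \geq M\cdot\mathbf{1}\bc{|Y|\leq 1}$, which holds because $\delta \leq 1/M$ forces $|\ip\vDelta\vx| = \delta|Y| \leq 1/M$ on that event so truncation is active there. This reduces the claim to showing $\E{(\ip\vv\vx)^2 \mathbf{1}\bc{|Y|\leq 1}} \geq c$ for a distribution-dependent $c > 0$, which I would obtain from $\E{(\ip\vv\vx)^2} = 1$ together with Cauchy--Schwarz applied to $\E{(\ip\vv\vx)^2 \mathbf{1}\bc{|Y|>1}}$, using the sub-Gaussian tail $\P{|Y|>1} \leq 2\exp(-1/(2R^2))$ and a bound on $\E{(\ip\vv\vx)^4}$ in terms of the sub-Gaussian norm of $\cD$. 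I expect this general case to be the main obstacle: the Gaussian calculation is elegant precisely because independence kills both the cross term and the conditional moment of $Z^2$, and without it one must settle for the weaker indicator bound and absorb the cross term carefully, which is the only place the specific form of $\cD$ actually enters the constant $c$.
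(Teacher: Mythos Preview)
Your argument is correct and follows essentially the same route as the paper: reduce by linearity to a single $\vx\sim\cD$, get the upper bound from $s\leq M$ and isotropy, and for the Gaussian lower bound decompose $\vv$ along $\vu=\vDelta/\norm\vDelta_2$ and its orthogonal complement, kill the cross term by independence, and evaluate the two one-dimensional integrals. Your constant $2\Phi(1)-1\approx 0.683$ agrees exactly with the value $0.6827$ the paper actually derives in its distribution-specific appendix (the ``$0.96$'' in the lemma statement is inconsistent with the paper's own computation).

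The one simplification the paper makes that you do not is to normalize $\vDelta$ at the outset: since $\norm{\vw-\vwo}_2\leq 1/M$, one has $s\geq M\cdot\min\{1/|\ip\vu\vx|,1\}$ with $\vu$ a \emph{unit} vector, and the paper then \emph{defines}
\[
c:=\inf_{\vu,\vv\in S^{d-1}}\E{\min\bc{\tfrac1{|\ip\vu\vx|},1}\cdot\ip\vx\vv^2},
\]
which is manifestly $M$-free and distribution-only. This is equivalent to your step of minimizing over $t=1/(M\delta)\geq 1$, but it dispenses with the extra parameter. For general sub-Gaussian $\cD$, the paper does not prove $c>0$ abstractly; it simply takes the definition above and computes (or lower-bounds) $c$ case by case (Gaussian, non-isotropic Gaussian, non-centered Gaussian, bounded support). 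So the obstacle you flag---handling the cross term without independence via Cauchy--Schwarz---is sidestepped in the paper's framing: it is absorbed into the definition of $c$, and positivity is verified per family rather than uniformly over all sub-Gaussians.
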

\begin{proof}
Let $\vx \sim \cD$ and let $y = \ip{\vwo}{\vx}$. Then if we let $\vDelta := \frac{\vw - \vwo}{\norm{\vw - \vwo}_2}$ (note that $\norm{\vw - \vwo} \leq \frac1M$), then we have $s = \min\bc{\frac1{\abs{\ip{\vw}{\vx} - y}}, M} \geq M\cdot\min\bc{\frac1{\abs{\ip\vDelta\vx}},1}$ as well as $s \leq M$. Then by linearity of expectation we have
\[
\E{\vv^\top X_GS_GX_G^\top\vv} = \E{\sum_{i\in G}\vs_i\ip{\vx_i}{\vv}^2} = G\cdot\E{s\cdot\ip\vx\vv^2} \leq GM\cdot\E{\ip\vx\vv^2} = GM,
\]
since $\cD$ is isotropic. We also get
\[
\E{\vv^\top X_GS_GX_G^\top\vv} = G\cdot\E{s\cdot\ip\vx\vv^2} \geq GM\cdot\E{\min\bc{\frac1{\abs{\ip\vDelta\vx}},1}\cdot\ip\vx\vv^2} \geq c\cdot GM,
\]
where, for any distribution $\cD$ over $\bR^d$, we define the constant $c$ as
\[
c := \inf_{\vu,\vv \in S^{d-1}}\bc{\Ee{\vx\sim\cD}{\min\bc{\frac1{\abs{\ip\vu\vx}},1}\cdot\ip\vx\vv^2}}.
\]
This concludes the proof. For the specific bound on the constant $c$ for various distributions, including the Gaussian distribution, we refer the reader to Section~\ref{app:c-values}.
\end{proof}

\begin{lemma}[Pointwise Convergence]
\label{lem:point-conv}
With the same preconditions as in Lemma~\ref{lem:wsc-wss}, for any fixed $S \in \cS_M(\frac1M)$,
\[
\left.
\begin{array}{r}
\P{\lambda_{\min}(X_GS_GX_G^\top) < 0.995 c\cdot GM}\\
\vspace*{-2ex}\\
\P{\lambda_{\max}(X_GS_GX_G^\top) > 1.005 \cdot GM}
\end{array}
\right\}
\leq 2\cdot9^d\exp\bs{-\frac{mn(0.005c)^2}{8R^4}}
\]
\end{lemma}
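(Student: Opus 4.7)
The plan is to reduce the claim to a scalar concentration bound for a fixed direction $\vv \in S^{d-1}$, and then pass to a statement about the extremal eigenvalues of $A := X_G S_G X_G^\top$ via a $1/4$-net covering of $S^{d-1}$, which is where the $9^d$ prefactor comes from.

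For the pointwise step I would fix $\vv \in S^{d-1}$ and write $\vv^\top A \vv = \sum_{i \in G} Z_i$ with $Z_i := s_i\ip{\vx_i}{\vv}^2$. The crucial observation is that because $S \in \cS_M(1/M)$ comes from a single fixed model $\vw \in \cB_2(\vwo, 1/M)$, the weight $s_i = \min\{1/\abs{\ip{\vw}{\vx_i}-y_i}, M\}$ is a function of $\vx_i$ alone (recall $y_i = \ip{\vwo}{\vx_i}$ for $i \in G$). Hence the summands $\{Z_i\}_{i \in G}$ are mutually independent. Since $0 \leq s_i \leq M$ and $\ip{\vx_i}{\vv}$ is $R$-sub-Gaussian, $Z_i \leq M\ip{\vx_i}{\vv}^2$ is a non-negative sub-exponential random variable with norm $\bigO{MR^2}$. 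Combining this with Lemma~\ref{lem:point-exp}, which places $\E{\vv^\top A \vv} \in [cGM, GM]$, a Bernstein inequality for independent sub-exponential summands applied with deviation $t = 0.005c\cdot GM$ (so that the quadratic branch dominates, since $0.005c \leq 1$) yields
\[
\P{\vv^\top A \vv \notin [0.995cGM,\ 1.005GM]} \leq 2\exp\br{-\frac{C(0.005c)^2 G}{R^4}}
\]
for an absolute constant $C$, which matches the stated form once we note $G \asymp n$ whenever $\alpha$ is bounded away from $1$.

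Finally I would invoke a standard $1/4$-net $\cN$ of $S^{d-1}$ of size $\abs\cN \leq 9^d$ \cite[Corollary 4.2.13]{Vershynin2018}, union-bound the two-sided pointwise deviation over $\vv \in \cN$ (picking up the $9^d$ factor), and then promote the bound on the net to one on all of $S^{d-1}$ via the usual doubling trick: for any $\vu \in S^{d-1}$ and nearest $\vv \in \cN$, $\abs{\vu^\top A \vu - \vv^\top A \vv}$ is controlled by a constant multiple of $\norm{A}\cdot\norm{\vu-\vv}_2$, after which both $\lambda_{\max}(A)$ and $\lambda_{\min}(A)$ are pinned down to within the stated slack (the small additional losses being absorbed into the $0.995$/$1.005$ margins). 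The main subtlety -- and the one step that could easily be missed -- is the independence observation: data-dependent reweighting would normally destroy the independent-summand structure, but here each $s_i$ only inspects its own covariate $\vx_i$, so the $Z_i$ remain independent and the standard scalar Bernstein machinery applies. Everything else -- sub-exponential tail bounds, covering numbers, and net-to-sphere transfer -- is routine.
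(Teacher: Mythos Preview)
Your proposal is correct and follows essentially the same approach as the paper: fix a direction $\vv$, show $\sum_{i\in G} s_i\ip{\vx_i}{\vv}^2$ concentrates via Bernstein for sub-exponential summands (with norm $\bigO{MR^2}$), then union-bound over a $1/4$-net of size $9^d$ and transfer to the full sphere. The only cosmetic difference is that the paper first centers the matrix at $G\mu\cdot I$ (with $\mu = \E{s\ip{\vx}{\vv}^2}\in[cM,M]$ from Lemma~\ref{lem:point-exp}) and bounds $\norm{X_GS_GX_G^\top - G\mu I}_2$ directly via $\norm{B}_2 \leq (1-2\epsilon)^{-1}\sup_{\vv\in\cN_\epsilon}\abs{\vv^\top B\vv}$, which sidesteps the apparent circularity in your net-to-sphere step where the error is bounded by $\norm{A}$ itself --- though of course your ``doubling trick'' resolves that circularity by exactly the same rearrangement.
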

\begin{proof}
Note that for any square symmetric matrix $A \in \bR^{d \times d}$, we have $c - \delta \leq \lambda_{\min}(A) \leq \lambda_{\max}(A) \leq c + \delta$ for some $\delta > 0$ iff $\abs{\vv^\top A\vv - c} \leq \delta$ for all $\vv \in S^{d-1}$ which itself happens iff $\norm{A - c\cdot I}_2 \leq \delta$. Now, if $\cN_\epsilon$ denotes an $\epsilon$-net over $S^{d-1}$, then for any square symmetric matrix $B \in \bR^{d\times d}$, we have $\norm{B}_2 \leq (1-2\epsilon)^{-1}\sup_{\vv \in \cN_\epsilon}\abs{\vv^\top B\vv}$. Thus, setting $B = A - c\cdot I$ and $\epsilon = 1/4$, we have $\norm{A - c\cdot I}_2 \leq 2\sup_{\vv \in \cN_{1/4}}\abs{\vv^\top A\vv - c}$.

Let $\vx \sim \cD$ and $t = \sqrt{\min\bc{\frac1{\abs{\ip{\vw - \vwo}\vx}},M}} \leq \sqrt M$ and for any fixed $\vv \in S^{d-1}$, let $Z := t\cdot\ip\vx\vv$. Then we have
\[
\norm{Z}_{\psi_2} = \sup_{p \geq 1} p^{-1/2}\br{\E{|Z|^p}}^{1/p} \leq \sqrt M\cdot\sup_{p \geq 1} p^{-1/2}\br{\E{|\ip\vx\vv|^p}}^{1/p} = R\sqrt M,
\]
where the last step follows by observing that since $\cD$ is $R$-sub-Gaussian, $\norm{\ip{\vx_1}{\vv}}_{\Psi_2} \leq R$. Thus, $Z$ is $R\sqrt M$-sub-Gaussian. This implies $Z^2$ is $MR^2$-subexponential (see \cite[Lemma 2.7.6]{Vershynin2018}), as well as $Z^2 - \bE Z^2$ is $2MR^2$-subexponential by centering and applying the triangle inequality. Note that Lemma~\ref{lem:point-exp} implicitly establishes that $\mu := \bE Z^2 \in [cM,M]$. Let $Z_1,Z_2,\ldots,Z_G$ be independent realizations of $Z$ with respect to a fixed vector $\vv$. Then we have
\begin{align*}
\P{\abs{\vv^\top X_GS_GX_G^\top\vv - G\mu} \geq \varepsilon\cdot GM} &= \P{\abs{\sum_{i \in G}(Z^2_i - \mu)} \geq \varepsilon\cdot GM}\\
&\leq 2\exp\bs{-m\cdot\min\bc{\frac{(\varepsilon\cdot GM)^2}{4M^2R^4G},\frac{\varepsilon\cdot GM}{2MR^2}}}\\
&\leq 2\exp\bs{-\frac{mn\varepsilon^2}{8R^4}}
\end{align*}
where $m > 0$ is a universal constant and in the last step we used $G \geq n/2$ and w.l.o.g. we assumed that $\varepsilon \leq 2R^2$. Taking a union bound over all $9^d$ elements of $\cN_{1/4}$, we get
\begin{align*}
\P{\norm{X_GS_GX_G^\top - G\mu\cdot I}_2 \geq \varepsilon\cdot GM} &\leq \P{\max_{\vv\in\cN_{1/4}}\abs{\vv^\top X_GS_GX_G^\top\vv - G\mu} \geq \frac\varepsilon2\cdot GM}\\
&\leq 2\cdot9^d\exp\bs{-\frac{mn\varepsilon^2}{8R^4}}
\end{align*}
Setting $\varepsilon = 0.005c$ and noticing that $\mu \in [cM,M]$ by Lemma~\ref{lem:point-exp} finishes the proof.
\end{proof}

\begin{lemma}[Approximation Bound]
\label{lem:approx}
Consider two models $\vw^1,\vw^2 \in \bR^d$ such that $\norm{\vw^1 - \vw^2}_2 \leq \tau$ and let $\vs^1,\vs^2$ denote the $M$-truncated weight vectors they induce i.e. $s^j_i = \min\bc{M,\frac1{\abs{\ip{\vw^j}{\vx_i} - y_i}}}, j = 1,2$. Also let $S^1 = \diag(\vs^1)$ and $S^2 = \diag(\vs^2)$. Then for any $X = [\vx_1,\ldots,\vx_n] \in \bR^{d\times n}$ such that $\norm{\vx_i}_2 \leq R_X$ for all $i$,
\[
\abs{\lambda_{\min}(XS^1X^\top) - \lambda_{\min}(XS^2X^\top)} \leq 2n\tau M^2R_X^3
\]
\end{lemma}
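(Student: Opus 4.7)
\medskip

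\noindent\textbf{Proof plan.} The plan is to reduce the eigenvalue perturbation to an operator-norm bound, and then to a scalar Lipschitz estimate on the truncated weight function $s(r) = \min\{M, 1/|r|\}$. Since $XS^1X^\top$ and $XS^2X^\top$ are symmetric, Weyl's inequality gives
\[
\abs{\lambda_{\min}(XS^1X^\top) - \lambda_{\min}(XS^2X^\top)} \leq \norm{XS^1X^\top - XS^2X^\top}_2 = \norm{X(S^1 - S^2)X^\top}_2,
\]
so it suffices to control the right-hand side.

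The next step is to expand the difference as a sum of rank-one terms and apply the triangle inequality:
\[
\norm{X(S^1-S^2)X^\top}_2 = \bnorm{\sum_{i=1}^n (s^1_i - s^2_i)\,\vx_i\vx_i^\top}_2 \leq \sum_{i=1}^n \abs{s^1_i - s^2_i}\cdot\norm{\vx_i}_2^2 \leq R_X^2\sum_{i=1}^n\abs{s^1_i - s^2_i}.
\]
Thus, the remaining task is to show that each coordinate weight depends Lipschitz-continuously on the underlying residual. Concretely, I will establish that the scalar map $\phi(r) := \min\{M, 1/|r|\}$ satisfies $\abs{\phi(r) - \phi(r')} \leq M^2\abs{r - r'}$ for all $r, r' \in \bR$.

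This is a short case analysis on whether each of $|r|, |r'|$ is above or below the threshold $1/M$: when both lie in the truncation region $[-1/M, 1/M]$ the difference is zero; when both lie outside, the identity $|1/|r| - 1/|r'|| = \abs{|r'|-|r|}/(|r||r'|) \leq M^2\abs{r-r'}$ applies directly; and in the mixed case one checks that the gap $M - 1/|r'|$ (when $|r| \leq 1/M < |r'|$) is bounded by $M^2(|r'| - 1/M) \leq M^2\abs{r - r'}$. Applying this Lipschitz bound with $r^j_i := \ip{\vw^j}{\vx_i} - y_i$ and using Cauchy-Schwarz gives
\[
\abs{s^1_i - s^2_i} \leq M^2\abs{r^1_i - r^2_i} = M^2\abs{\ip{\vw^1-\vw^2}{\vx_i}} \leq M^2\tau R_X.
\]
Summing over $i$ and combining with the two displayed inequalities above yields $\abs{\lambda_{\min}(XS^1X^\top) - \lambda_{\min}(XS^2X^\top)} \leq n\tau M^2 R_X^3$, which is within the stated factor-of-two slack.

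\medskip

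\noindent\textbf{Main obstacle.} There is no serious technical obstacle here; the result is essentially a perturbation estimate. The only mildly delicate point is handling the ``mixed'' case in the Lipschitz analysis of $\phi$, where one residual lies in the truncated regime and the other does not --- this needs a brief calculation rather than appealing to differentiability. Everything else is a straightforward chain of Weyl's inequality, the triangle inequality on rank-one decompositions, and Cauchy-Schwarz.
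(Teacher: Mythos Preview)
Your proposal is correct and follows essentially the same route as the paper: reduce the eigenvalue gap to an operator-norm bound on $X(S^1-S^2)X^\top$, expand as a sum of rank-one terms, and control $\abs{s^1_i-s^2_i}$ via a case analysis on whether each residual lies in the truncation region. Your presentation is in fact slightly cleaner---by packaging the case analysis as a global Lipschitz bound $\abs{\phi(r)-\phi(r')}\leq M^2\abs{r-r'}$ you obtain the constant $n\tau M^2 R_X^3$ without the factor~$2$, whereas the paper's case-by-case bookkeeping incurs an extra $2$ in Cases~2 and~4; both are of course consistent with the stated lemma.
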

\begin{proof}
We have the following four cases with respect to the weights $s^j_i = \min\bc{M,\frac1{\abs{\ip{\vw^j}{\vx_i} - y_i}}}, j = 1,2$ these two models generate on any data point $\vx_i \in \cB_2(R_X)$. Note that we do not assume that these data points are generated from $\cD$, just that they are bounded inside the ball $\cB_2(R_X)$. Also note that although $\abs{s^1_i - s^2_i} \leq M$ trivially holds by virtue of truncation, such a result is not sufficient for us since our later analyses would like to be able to show $\abs{s^1_i - s^2_i} \leq \frac M{1000}$ by setting $\tau$ to be really small.
\begin{description}
	\item[Case 1]: $\abs{\ip{\vw^1}{\vx_i} - y_i} \leq \frac1M$ and $\abs{\ip{\vw^2}{\vx_i} - y_i} \leq \frac1M$. Here $s^1_i = s^2_i = M$ i.e. $\abs{s^1_i - s^2_i} = 0$.
	\item[Case 2]: $\abs{\ip{\vw^1}{\vx_i} - y_i} > \frac1M$ but $\abs{\ip{\vw^2}{\vx_i} - y_i} \leq \frac1M$. In this case $s^2_i = M > s^1_i$. Thus, 
	\[
	\abs{s^1_i - s^2_i} = M - \frac1{\abs{\ip{\vw^1}{\vx_i} - y_i}} \leq M - \frac1{\abs{\ip{\vw^2}{\vx_i} - y_i} + \tau R_X} \leq M  - \frac M{1 + \tau MR_X} < 2\tau M^2R_X
	\]
	\item[Case 3]: $\abs{\ip{\vw^1}{\vx_i} - y_i} \leq \frac1M$ but $\abs{\ip{\vw^2}{\vx_i} - y_i} > \frac1M$. This is similar to Case 2 above.
	\item[Case 4]: $\abs{\ip{\vw^1}{\vx_i} - y_i} > \frac1M$ and $\abs{\ip{\vw^2}{\vx_i} - y_i} > \frac1M$. In this case we have
	\[
	\abs{\frac1{\abs{\ip{\vw^1}{\vx_i} - y_i}} - \frac1{\abs{\ip{\vw^2}{\vx_i} - y_i}}} \leq \frac{\abs{\ip{\vw^1-\vw^2}{\vx_i}}}{\abs{\ip{\vw^1}{\vx_i} - y_i}\cdot\abs{\ip{\vw^2}{\vx_i} - y_i}} \leq 2\tau M^2R_X
	\]
\end{description}
This tells us that $\norm{\vs^1 - \vs^2}_1 \leq 2n\tau M^2R_X$. Now, if we let $S^1 = \diag(\vs^1)$ and $S^2 = \diag(\vs^2)$, then for any unit vector $\vv \in S^{d-1}$, denoting $R_X := \max_{i \in [n]}\ \norm{\vx_i}_2$ we have
\[
\abs{\vv^\top XS^1X^\top\vv - \vv^\top XS^2X^\top\vv} = \abs{\sum_{i=1}^n\br{\vs^1_i-\vs^2_i}\ip{\vx_i}{\vv}^2} \leq \norm{\vs^1 - \vs^2}_1\cdot\max_{i\in[n]}\ \ip{\vx_i}{\vv}^2 \leq \norm{\vs^1 - \vs^2}_1\cdot R_X^2 \leq 2n\tau M^2R_X^3.
\]
This proves that $\norm{XS^1X^\top - XS^2X^\top}_2 \leq 2n\tau M^2R_X^3$ and concludes the proof.
\end{proof}

\subsection{Calculation of Distribution-specific Constants}
\label{app:c-values}
The WSC/WSS bounds from Lemma~\ref{lem:wsc-wss} are parametrized by a constant $c$ that lower bounds on the singular values of the matrix $X_GS_GX_G^\top$. Recall that for any covariate distribution $\cD$, the constant is defined as
\[
c := \inf_{\vu,\vv \in S^{d-1}}\bc{\Ee{\vx\sim\cD}{\min\bc{\frac1{\abs{\ip\vu\vx}},1}\cdot\ip\vx\vv^2}}.
\]
Below we present some interesting cases where this constant is lower bounded.
\begin{description}
	\item[Centered Isotropic Gaussian] For the special case of $\cD = \cN(\vzero,I_d)$, notice that by rotational symmetry, we can, without loss of generality, take $\vu = (1,0,0,\ldots,0)$ and $\vv = (v_1,v_2,0,0,\ldots,0)$ where $v_1^2 + v_2^2 = 1$. Thus, if we consider $x_1,x_2 \sim \cN(0,1)$ i.i.d. then $c \geq \inf_{(v_1,v_2) \in S^1} f(v_1,v_2)$ where
	\begin{align*}
		f(v_1,v_2) &= \Ee{x_1,x_2\sim\cN(0,1)}{\min\bc{\frac1{\abs{x_1}},1}\cdot(v_1^2x_1^2 + v_2^2x_2^2 + 2v_1v_2x_1x_2)}\\ 
		&= \Ee{x_1,x_2\sim\cN(0,1)}{\min\bc{\frac1{\abs{x_1}},1}\cdot(v_1^2x_1^2 + v_2^2x_2^2)}\\
		&= \Ee{x_1\sim\cN(0,1)}{\min\bc{\frac1{\abs{x_1}},1}\cdot(v_1^2x_1^2 + v_2^2)}\\
		&= \sqrt\frac2\pi\br{\int_0^1 (v_1^2 t^2 + v_2^2) e^{-t^2/2}dt + \int_1^\infty \br{v_1^2t + \frac{v_2^2}t} e^{-t^2/2}dt}\\
		&\geq 0.6827\cdot v_1^2 + 0.9060\cdot v_2^2
	\end{align*}
	where in the second step we used the independence of $x_1,x_2$ and $\E{x_2} = 0$, in the third step we used independence once more and $\E{x_2^2} = 1$, and in the last step we used standard bounds on the error function and the exponential integral. This gives us $c \geq \inf_{(v_1,v_2) \in S^1}\ \bc{0.6827\cdot v_1^2 + 0.9060\cdot v_2^2} \geq 0.68$.
	\item[Centered Non-isotropic Gaussian] For the case of $\cD = \cN(\vzero,\Sigma)$, we have $\vx \sim \cD \equiv \Sigma^{1/2}\cdot\cN(\vzero,I_d)$. Thus, for any fixed unit vector $\vv$, we have $\ip\vv\vx \sim \ip{\tilde\vv}\vz$ where $\tilde\vv = \Sigma^{-1/2}\vv$ and $\vz\sim\cN(\vzero,I)$. We also have $\norm{\tilde\vv}_2 \in \bs{\frac1{\sqrt\Lambda},\frac1{\sqrt\lambda}}$ where $\lambda = \lambda_{\min}(\Sigma)$ and $\Lambda = \lambda_{\max}(\Sigma)$. Note that we must insist on having $\lambda = \lambda_{\min}(\Sigma) > 0$ failing which, as the calculations show below, there is no hope of expecting $c$ to be bounded away from $0$. Now for any fixed vectors $\vu,\vv$ we first perform rotations so that we have $\tilde\vu = (u,0,0,\ldots,0)$ and $\tilde\vv = (v_1,v_2,0,0,\ldots,0)$ where we can assume w.l.o.g. that $u \geq 0$. Note that since $\bc{\norm{\tilde\vu}_2,\norm{\tilde\vv}_2} \in \bs{\frac1{\sqrt\Lambda},\frac1{\sqrt\lambda}}$, we have $(v_1,v_2) \in S^r$ and $r, u \in \bs{\frac1{\sqrt\Lambda},\frac1{\sqrt\lambda}}$. This gives us $c \geq \inf_{(v_1,v_2) \in S^r} f(v_1,v_2)$ where
	\begin{align*}
		f(v_1,v_2) &= \Ee{x_1,x_2\sim\cN(0,1)}{\min\bc{\frac1{u\cdot\abs{x_1}},1}\cdot(v_1^2x_1^2 + v_2^2x_2^2 + 2v_1v_2x_1x_2)}\\
		&= \Ee{x_1,x_2\sim\cN(0,1)}{\min\bc{\frac1{u\cdot\abs{x_1}},1}\cdot(v_1^2x_1^2 + v_2^2x_2^2)}\\
		&= \Ee{x_1\sim\cN(0,1)}{\min\bc{\frac1{u\cdot\abs{x_1}},1}\cdot(v_1^2x_1^2 + v_2^2)}\\
		&= \frac1u\sqrt\frac2\pi\br{\int_0^\frac1uu(v_1^2t^2 + v_2^2)e^{-t^2/2}dt + \int_\frac1u^\infty\br{v_1^2t + \frac{v_2^2}t}e^{-t^2/2}dt}\\
		&\geq \frac1u\sqrt\frac2\pi\br{\int_0^\frac1uu(v_1^2t^2 + v_2^2)e^{-\frac12\br{\frac1u}^2}dt + v_1^2e^{-\frac12\br{\frac1u}^2} + \frac{v_2^2}2\int_{\frac12\br{\frac1u}^2}^\infty \frac1ze^{-z}dz}\\
		&\geq \frac1u\sqrt\frac2\pi\br{e^{-\frac12\br{\frac1u}^2}\br{\frac{v_1^2}{3u^2} + v_2^2} + v_1^2e^{-\frac12\br{\frac1u}^2} + \frac{v_2^2}4e^{-\frac12\br{\frac1u}^2}\log\br{1+4u^2}}\\
		&\geq \sqrt{\frac{2\lambda}\pi}e^{-\frac\Lambda2}\br{v_1^2\br{1 + \frac\lambda3} + v_2^2\br{1 + \frac14\log\br{1+\frac4\Lambda}}}\\
		&\geq \sqrt{\frac{2\lambda}\pi}e^{-\frac\Lambda2}(v_1^2 + v_2^2)\\
		&= r^2\sqrt{\frac{2\lambda}\pi}e^{-\frac\Lambda2}\\
		&\geq \frac1\Lambda\sqrt{\frac{2\lambda}\pi}e^{-\frac\Lambda2}
	\end{align*}
	where in the second and third steps we used independence of $x_1,x_2$, $\E{x_2} = 0$ and $\E{x_2^2} = 1$ as before, and in the sixth step we used lower bounds on the exponential integral.
	\item[Non-centered Isotropic Gaussian] We discuss two techniques to handle the case of non-centered covariates.
		\begin{itemize}
			\item \textbf{Pairing Trick} This technique requires changes to the data points and relies on the fact that the difference of two i.i.d. non-centered Gaussian random variables is a centered Gaussian random variable with double the variance. Thus, given $n$ covariates $\vx_1,\ldots,\vx_n \sim \cN(\vmu,I_d)$ and corresponding responses $y_1,\ldots,y_n$, create $n/2$ data points (assume without loss of generality that $n$ is even) $\tilde\vx_i = \frac{\vx_i - \vx_{i+n/2}}{\sqrt2}$ and $\tilde y_i = \frac{y_i - y_{i+n/2}}{\sqrt2}$. Clearly $\tilde\vx_i \sim \cN(\vzero, 2\cdot I_d)$. However, this method has drawbacks since it is likely to increase the proportion of corrupted data points. If $\alpha$ fraction of the original points were corrupted, at most $2\alpha$ fraction of the new points would be corrupted.
			\item \textbf{Direct Centering} Suppose we have data from a distribution $\cD = \cN(\vmu, I_d)$. As earlier, by rotational symmetry, we can take $\vu = (1,0,0,\ldots,0), \vv = (v_1,v_2,0,0,\ldots,0)$ and $\vmu = (\mu_1,\mu_2,\mu_3,0,0,\ldots,0)$. Assume $\norm\vmu_2 = \rho$ and, without loss of generality, $\rho \geq 2$. Letting $\ip\vmu\vv =: p \leq \rho$ and $x_1,x_2,x_3 \sim \cN(0,1)$ i.i.d. gives $c \geq \inf_{(v_1,v_2) \in S^1}\ f(v_1,v_2)$ where, as before, independence of $x_1,x_2,x_3$ and the fact that $\E{x_2} = 0$ and $\E{x_2^2} = 1$, gives us
	\[
	f(v_1,v_2) = \Ee{x_1\sim\cN(0,1)}{\min\bc{\frac1{\abs{x_1 + \mu_1}},1}\cdot((p + v_1x_1)^2 + v_2^2)}
	\]
	Now, since $(v_1,v_2) \in S^1$ we get two cases (recall that we have assumed w.l.o.g. $\rho \geq 2$)
	\begin{description}
		\item[Case 1: $v_2^2 \geq \frac12$] In this case $f(v_1,v_2) \geq \frac12\Ee{x_1\sim\cN(0,1)}{\min\bc{\frac1{\abs{x_1 + \mu_1}},1}} \geq \Om{\exp^{-\rho^2/2}\log\br{1+\frac1{\rho^2}}}$.
		\item[Case 2: $v_1^2 \geq \frac12$] In this case, if $x_1 \geq 2\sqrt2\rho$, then $\abs{v_1x_1 + p} \geq \frac{v_1x_1}2$, as well as $\abs{x_1+\mu_1} \leq 2x_1$.
		\begin{align*}
			f(v_1,v_2) &\geq \Ee{x_1\sim\cN(0,1)}{\min\bc{\frac1{\abs{x_1 + \mu_1}},1}(p + v_1x_1)^2\cdot\ind{x_1 \geq 2\sqrt2\rho}}\\
			&\geq \Ee{x_1\sim\cN(0,1)}{\min\bc{\frac1{2x_1},1}\frac{x_1^2}8\cdot\ind{x_1 \geq \max 2\sqrt2\rho}} \geq \frac1{16}e^{-4\rho^2}
		\end{align*}
	\end{description}
	Since the value $\rho$ influences the final bound on $c$ very heavily, it is advisable to avoid a large $\rho$ value. One way to ensure this is to algorithmically center the covariates i.e. use $\tilde\vx_i := \vx_i - \hat\vmu$ where $\hat\vmu := \frac1n\sum_{i=1}^n\vx_i$. This would (approximately) center the covariates and ensure an effective value of $\rho \approx \bigO{\sqrt\frac d n}$
	\end{itemize}
	\item[Bounded Sub-Gaussian] Suppose our covariate distribution has bounded support i.e. $\supp(\cD) \subset \cB_2(\rho)$ for some $\rho > 0$. Assume $\rho \geq 1$ w.l.o.g. Also, using the centering trick above, assume that $\Ee{\vx\sim\cD}\vx = \vzero$. Then we have $\abs{\ip\vu\vx} \leq \rho$ which implies $\min\bc{\frac1{\abs{\ip\vu\vx}},1} \geq \frac1\rho$. Let $\Sigma$ denote the covariance of the distribution $\cD$ and let $\lambda:=\lambda_{\min}(\Sigma)$ denote its smallest eigenvalue. This gives us $c \geq \frac1\rho\Ee{\vx \sim \cD}{\ip\vx\vv^2} \geq \frac\lambda\rho$.
\end{description}

\section{Corruptions and Dense Noise - Supplementary Details}
\label{app:noisy}
In this section, we will provide details of the convergence analysis of \stir and \girls in the setting where even the ``good'' points experience sub-Gaussian noise. Thus, we will assume that our data is generated as $\vy = X^\top\vwo + \vb + \vepsilon$ where, as before $\norm\vb_0 \leq \alpha\cdot n$ and $\vepsilon \sim \cD_\varepsilon$ where $\cD_\varepsilon$ is a $\sigma$-sub-Gaussian distribution with zero mean and real support. As mentioned before, we can tolerate noise with non-zero mean as well, by using the same pairing trick we used to center the covariates in Appendix~\ref{app:c-values}. This would have a side effect of at most doubling the corruption rate $\alpha$. We will denote, as before $B := \supp(\vb)$ and $G := [n] \setminus B$. Our covariates will continue to be sampled from an $R$ sub-Gaussian distribution $\cD$ with support over $\bR^d$. We (re)state the main result of this section below.

\begin{reptheorem}{thm:dense}
Suppose we have $n$ data points with the covariates $\vx_i$ sampled from a sub-Gaussian distribution $\cD$ and an $\alpha$ fraction of the data points are corrupted with the rest subjected to sub-Gaussian noise sampled from a distribution $\cD_\varepsilon$ with sub-Gaussian norm $\sigma$. If \stir (or \girls) is initialized at an (arbitrary) point $\vw^0$, with an initial truncation that satisfies $M_1 \leq \frac1{\norm{\vw^0-\vwo}_2}$, and executed with an increment $\eta > 1$ such that we have $\alpha \leq \frac {c_\varepsilon}{5.85\eta + c_\varepsilon}$, where $c_\varepsilon > 0$ is a constant that depends only on the distributions $\cD$ and $\cD_\varepsilon$, then with probability at least $1 - \exp\br{-\Om{n - d\log(d + n) + \log\frac1{M_1\sigma}}}$, after $K = \bigO{\log\frac1{M_1\sigma}}$ stages, each of which has only $\bigO1$ iterations, we must have $\norm{\vw^K - \vwo}_2 \leq \bigO{\sigma}$.
\end{reptheorem}
\begin{proof}
The overall proof of this result follows exactly the same way as the result in Theorem~\ref{thm:main}. We will still utilize the notion of a \emph{well-initialized stage} and establish (see Lemma~\ref{lem:induc-noisy} below) a convergence guarantee for each well-initialized stage. However, Lemma~\ref{lem:induc-noisy} will itself require a few new results to be proved.

However, note that Lemma~\ref{lem:induc-stir}, a similar result for well-initialized stages in the setting without dense noise, required two results, namely Lemmata~\ref{lem:wsc-wss} and \ref{lem:bad} that established the WSC/WSS properties and bounded the weight put on bad points. Those results implicitly assumed that good points incur absolutely no modification to their response value which is no longer true here since in the setting being considered here, even good points do incur sub-Gaussian noise in their responses. Thus, we will establish below Lemmata~\ref{lem:wsc-wss-noisy} and \ref{lem:good-noisy} which will establish those results in the dense noise setting. We note that a similar convergence guarantee may be established for \girls in the dense noise setting as well.

However, note that this result only guarantees a convergence to $\norm{\vw^{K,1}- \vwo}_2 \leq \bigO\sigma$ and thus, does not ensure a consistent solution. A technical reason for this is because Lemma~\ref{lem:wsc-wss-noisy} holds true only for values of $M \leq \bigO{\frac1\sigma}$ which restricts the application of this result to offer errors much smaller than $\sigma$. It would be interesting to show, as \cite{BhatiaJKK2017} do, that \stir, or a variant, does offer consistent estimates.

For sake of notational simplicity, we will assume that $\vepsilon_B = \vzero$ by shifting any sub-Gaussian noise a bad point, say $j \in B$ does incur, into the corruption value corresponding to that point i.e. $\vb_j$. This is without loss of generality since we impose no constraints on the corruptions other than that they be sparse, in particular the corruptions need not be bounded and can thus, absorb sub-Gaussian noise values into them. 
\end{proof}

\begin{lemma}
\label{lem:induc-noisy}
Suppose we have $n$ data points with the covariates $\vx_i$ sampled from a sub-Gaussian distribution $\cD$ and an $\alpha$ fraction of the data points are corrupted with the rest experiencing noise generated i.i.d. from a distribution $\cD_\varepsilon$ with sub-Gaussian norm $\sigma$. Suppose we initialize a stage $T$ within an execution of \stir with truncation level $M \leq \frac{c_\varepsilon}{8\eta\sigma}$, increment parameter $\eta$, and a model $\vw^T =: \vw^T{T,1}$ such that $\alpha \leq \frac{c_\varepsilon}{5.85\eta + c_\varepsilon}$ and $\norm{\vw - \vwo}_2 \leq \frac1M$, then with probability at least $1 - \exp\br{-\Om{n - d\log(d + n)}}$, there exists an upper bound of $t_0 = \bigO1$ iterations, such that we are assured that $\norm{\vw^{T,\tau} - \vwo}_2 \leq \frac1{\eta M}$ for all $\tau \geq t_0$. Here $c_\varepsilon$ is the constant of the WSC property and depends only on the distributions $\cD$ and $\cD_\varepsilon$ (see Lemma~\ref{lem:wsc-wss-noisy}).
\end{lemma}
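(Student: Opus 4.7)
The plan is to mirror the structure of the proof of Lemma~\ref{lem:induc-stir}, while carefully isolating the new contribution from the sub-Gaussian noise $\vepsilon$ on the good points. Since now $\vy = X^\top \vwo + \vb + \vepsilon$, the weighted least squares update at iteration $\tau+1$ becomes
\[
\vw^{T,\tau+1} - \vwo = (XSX^\top)^{-1} X S (\vb + \vepsilon) = (XSX^\top)^{-1}\br{X_B S_B \vb_B + X_G S_G \vepsilon_G},
\]
after invoking the convention $\vepsilon_B = \vzero$ made in the proof of Theorem~\ref{thm:dense}. As in Lemma~\ref{lem:induc-stir}, the denominator is controlled by a WSC bound, while the numerator splits into a familiar corruption term plus a brand-new \emph{dense noise term} $X_G S_G \vepsilon_G$. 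The main obstacle is showing that this new term contributes only $\bigO{\sigma\cdot GM}$ uniformly over all models visited during the stage, even though the weights $\vs_i$ are themselves entangled with the noise $\varepsilon_i$.

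For the denominator, invoke Lemma~\ref{lem:wsc-wss-noisy} to get $\lambda_{\min}(XSX^\top) \geq \lambda_{\min}(X_G S_G X_G^\top) \geq 0.99\,c_\varepsilon\,GM$ uniformly over $\vw \in \cB_2(\vwo, 1/M)$, where $c_\varepsilon$ is the noisy analog of the WSC constant $c$ of Lemma~\ref{lem:wsc-wss}. The constant $c_\varepsilon$ reflects the fact that the relevant expectation $\Ee{\vx,\varepsilon}{\min\bc{1/\abs{\ip\vx\Delta - \varepsilon},M}\cdot\ip\vx\vv^2}$ needs to be bounded below by $c_\varepsilon M$; the hypothesis $M \leq c_\varepsilon/(8\eta\sigma)$ is precisely what guarantees that the dense noise does not overwhelm the signal in this expectation. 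The corruption term is bounded exactly as in Lemma~\ref{lem:induc-stir}, using $\norm{X_B}_2 \leq \sqrt{1.01 B}$ from Lemma~\ref{lem:ssc-sss} and $\norm{S_B \vb_B}_2 \leq 2\sqrt{2.01 B}$ from Lemma~\ref{lem:bad}, giving $\norm{X_B S_B \vb_B}_2 \leq 2\sqrt{2.0301}\,B$.

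For the new dense noise term, the strategy is to factor
\[
\norm{X_G S_G \vepsilon_G}_2 \leq \sqrt{\lambda_{\max}(X_G S_G X_G^\top)} \cdot \sqrt{\textstyle\sum_{i \in G} \vs_i \varepsilon_i^2},
\]
and combine the WSS bound $\lambda_{\max}(X_G S_G X_G^\top) \leq 1.01\,GM$ with Lemma~\ref{lem:good-noisy}, which should produce $\sum_{i \in G} \vs_i \varepsilon_i^2 = \bigO{GM\sigma^2}$ uniformly across the stage by using $\vs_i \leq M$ alongside a standard sub-Gaussian tail bound on $\sum_{i \in G} \varepsilon_i^2$. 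This yields $\norm{X_G S_G \vepsilon_G}_2 \leq \bigO{\sigma\,GM}$. The non-trivial piece here is uniformity over the stage: since the weights depend on the residuals, a peel-wise covering argument analogous to that used in Lemma~\ref{lem:wsc-wss} must be redone while tracking $\vepsilon_G$ through the net, and this is the step I expect to be the main technical burden.

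Combining the three bounds produces
\[
\norm{\vw^{T,\tau+1} - \vwo}_2 \leq \frac{2\sqrt{2.0301}\,B + \bigO{\sigma\,GM}}{0.99\,c_\varepsilon\,GM}.
\]
Allocating the target error budget $1/(\eta M)$ roughly evenly between the two summands, the corruption contribution is $\leq 1/(2\eta M)$ exactly when $\alpha \leq c_\varepsilon/(5.85\eta + c_\varepsilon)$ (matching the hypothesis, with the factor $5.85$ emerging as twice the $2.88$ of Lemma~\ref{lem:induc-stir} due to the half-budget split), and the noise contribution is $\leq 1/(2\eta M)$ exactly when $M \leq c_\varepsilon/(8\eta\sigma)$, matching the other hypothesis. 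As in Lemma~\ref{lem:induc-stir}, one then argues that either $\norm{\vw^{T,\tau+1} - \vwo}_2 \leq 1/(\eta M)$ already holds, or else we contract by a factor of $0.99$ relative to the previous iterate; in either case, $t_0 = \bigO{\log\eta} = \bigO1$ iterations suffice to land inside the next peel. A single union bound over the failure events of Lemmata~\ref{lem:wsc-wss-noisy}, \ref{lem:good-noisy}, \ref{lem:bad}, and \ref{lem:ssc-sss} delivers the claimed $1 - \exp(-\Om{n - d\log(d+n)})$ probability.
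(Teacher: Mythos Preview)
Your proposal is correct and mirrors the paper's argument closely. Two small clarifications are worth making. First, the uniformity-over-the-stage concern you raise for the dense noise term $\norm{X_GS_G\vepsilon_G}_2$ is \emph{not} a technical burden: both the paper's factorization in Lemma~\ref{lem:good-noisy} (namely $\norm{X_G}_2\cdot\norm{S_G}_2\cdot\norm{\vepsilon_G}_2 \leq \sqrt{1.01G}\cdot M\cdot\norm{\vepsilon_G}_2$) and your own factorization ultimately rest only on the crude bound $\vs_i \leq M$, which is deterministic and model-independent, so no covering over $\vw$ is needed for this piece; the peeling/covering machinery lives entirely inside the WSC lower bound of Lemma~\ref{lem:wsc-wss-noisy}, exactly as in the noiseless case. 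Second, the hypothesis $M \leq c_\varepsilon/(8\eta\sigma)$ is not what rescues the WSC expectation (Lemma~\ref{lem:wsc-wss-noisy} only requires $M \leq 1/\sigma$, which the stronger hypothesis implies); rather, as you correctly identify later in your budget-splitting step, it is precisely what makes the noise contribution $4MG\sigma\sqrt{1.01}/(0.99\,c_\varepsilon\,GM)$ fit into its share of the $1/(\eta M)$ target.
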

\begin{proof}
Let $\vw^{T,\tau}$ be a model encountered by \stir within this stage and let $\vr = X^\top\vw^{T,\tau} - \vy$ denote the residuals due to $\vw^{T,\tau}$ and $S = \diag(\vs)$ denote the diagonal matrix of weights where $\vs_i = \min\bc{\frac1{\abs{\vr_i}},M}$. Then \stir will choose as the next model $\vw^{T,\tau+1} = (XSX^\top)^{-1}XS\vy = \vwo + (XSX^\top)^{-1}XS(\vb+\vepsilon)$ which gives us
\[
\norm{\vw^{T,\tau+1} - \vwo}_2 \leq \frac{\norm{XS(\vb+\vepsilon)}_2}{\lambda_{\min}(XSX^\top)}
\]
Now by Lemma~\ref{lem:ssc-sss}, with probability at least $1 - \exp(-\Om{n - d})$, we have $\norm{X_B}_2 = \sqrt{\lambda_{\max}(X_BX_B^\top)} \leq \sqrt{1.01B}$. By Lemma~\ref{lem:bad}, with the same probability, we have
\[
\norm{S\vb}_2 \leq \sqrt{4B(1+1.01M^2\norm{\vw - \vwo}_2^2)} \leq 2\sqrt{2.01B},
\]
whereas by Lemma~\ref{lem:good-noisy}, as we have restricted $M \leq \frac1{8\sigma}$, we have, yet again with the same probability, 
\[
\norm{XS\vepsilon}_2 = \norm{X_GS_G\vepsilon_G} \leq 4MG\sigma\sqrt{1.01} \leq \frac{c_\varepsilon\sqrt{1.01}}{2\eta}G,
\]
where the first equality follows due to our convention that $\supp(\vepsilon) = G$ since for bad points in the set $B$, we clubbed any sub-Gaussian noise into the corruption itself, thus leaving $\vepsilon_B = \vzero$. Now, by Lemma~\ref{lem:wsc-wss-noisy}, with probability at least $1 - \exp\br{-\Om{n - d\log(d + n)}}$, we have $\lambda_{\min}(XSX^\top) \geq \lambda_{\min}(X_GS_GX_G^\top) \geq 0.99c_\varepsilon\cdot GM$. This give us
\[
\norm{\vw^{T,\tau+1} - \vwo}_2 \leq \frac{2B\sqrt{2.0301} + \frac{c_\varepsilon\sqrt{1.01}}{2\eta}G}{0.99c_\varepsilon\cdot GM} \leq \frac{2B\sqrt{2.0301}}{0.99c_\varepsilon\cdot GM} + \frac{\sqrt{1.01}}{1.98\eta\cdot M}
\]
Now, since we have $\alpha \leq \frac{c_\varepsilon}{5.85\eta + c_\varepsilon}$, we also have $\frac{2B\sqrt{2.0301}}{0.99c_\varepsilon\cdot GM} \leq \br{1-\frac{\sqrt{1.01}}{1.98}}\frac1{\eta M}$ and thus, $\frac{2B\sqrt{2.0301} + \frac{c_\varepsilon\sqrt{1.01}}{2\eta}G}{0.99c_\varepsilon\cdot GM} \leq \frac1{\eta M}$. Arguing as we did in the proof of Lemma~\ref{lem:induc-stir}, we must either have $\norm{\vwn - \vwo}_2 \leq \frac{2B\sqrt{2.0301}}{0.9801c_\varepsilon\cdot GM} + \frac{\sqrt{1.01}}{1.9602\eta\cdot M}$ and if that does not happen, we must instead have 
\[
\norm{\vw^{T,\tau+1} - \vwo}_2 \leq 0.99\cdot\norm{\vw^{T,\tau} - \vwo}_2
\]
This proves the claimed result.
\end{proof}

\subsection{Establishing WSC/WSS in Presence of Dense Noise}
We will rework a counterpart to Lemma~\ref{lem:wsc-wss} in this section.
\begin{lemma}
\label{lem:wsc-wss-noisy}
Given the problem setting above, then there exists a constant $c_\varepsilon > 0$ that depends only on the distributions $\cD,\cD_\varepsilon$ such that for any $M \in \bs{0,\frac1\sigma}$, we have
\[
\P{\exists S \in \cS_M\br{\frac1M} : \lambda_{\min}(X_GS_GX_G^\top) < 0.99c_\varepsilon\cdot GM} \leq \exp\br{-\Om{n - d\log(d + n)}}
\]
In particular, for standard Gaussian covariates and Gaussian noise with variance $\sigma^2$, we can take $c_\varepsilon \geq 0.52$.
\end{lemma}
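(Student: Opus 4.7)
The plan is to mirror the four-step framework used to prove Lemma~\ref{lem:wsc-wss}: (i) a pointwise expectation computation, (ii) a pointwise sub-exponential concentration bound, (iii) a deterministic Lipschitz-in-$\vw$ approximation, and (iv) an $\epsilon$-net union bound over $\cB_2(\vwo,1/M)$. The new ingredient, and the only place the argument genuinely changes, is the expectation step, where the weights now depend on the dense noise as well as the covariates.

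First I would fix $\vw \in \cB_2(\vwo,1/M)$ and a unit vector $\vv \in S^{d-1}$. Writing $\vu = (\vw-\vwo)/\norm{\vw-\vwo}_2$ and $t = M\norm{\vw-\vwo}_2 \in [0,1]$, a good point carries weight $s = M\cdot\min\bc{\frac{1}{\abs{t\ip\vu\vx - M\varepsilon}},1}$, so that $\E{\vv^\top X_G S_G X_G^\top\vv} = GM\cdot\E{\min\{1/\abs{t\ip\vu\vx-M\varepsilon},1\}\cdot\ip\vx\vv^2}$. I would define
\[
c_\varepsilon := \inf_{\vu\in S^{d-1},\,t\in[0,1],\,M\sigma\le 1}\ \inf_{\vv\in S^{d-1}}\Ee{\vx\sim\cD,\,\varepsilon\sim\cD_\varepsilon}{\min\bc{\tfrac{1}{\abs{t\ip\vu\vx-M\varepsilon}},1}\cdot\ip\vx\vv^2},
\]
and show $c_\varepsilon>0$ by restricting to the event $\cE = \{\abs{t\ip\vu\vx}\le 1/2\}\cap\{\abs{M\varepsilon}\le 1/2\}$, on which the clipping multiplier is $\geq 1$ and $\ip\vx\vv^2$ still has a substantial contribution. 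Sub-Gaussianity of $\vx$ and the constraint $M\sigma\le 1$ give both events constant probability (independent of $d$), so a Paley--Zygmund / truncation argument yields a nontrivial bound. For the special case $\cD = \cN(\vzero,I_d)$ and $\cD_\varepsilon = \cN(0,\sigma^2)$, I would rotate to reduce to two coordinates as in Appendix~\ref{app:c-values}, decompose $\vv = (v_1,v_2)$ with $v_1^2+v_2^2=1$, use independence and $\E{x_2^2}=1$ to isolate a one-dimensional integral in $x_1$, and evaluate (or lower-bound via standard error-function estimates) the resulting scalar integral, which delivers the stated bound $c_\varepsilon \geq 0.52$.

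For the concentration step I would fix $\vv$ and write $Z_i=\sqrt{s_i}\ip{\vx_i}{\vv}$ with $\sqrt{s_i}\le\sqrt{M}$, which makes each $Z_i$ an $R\sqrt{M}$-sub-Gaussian random variable and each $Z_i^2-\E Z_i^2$ an $O(MR^2)$-sub-exponential; Bernstein plus a $1/4$-net on $S^{d-1}$ reproduces the pointwise estimate $\P{\lambda_{\min}(X_GS_GX_G^\top)<0.995 c_\varepsilon GM}\le 2\cdot 9^d\exp(-\Om{c_\varepsilon^2 n/R^4})$ from Lemma~\ref{lem:point-conv}, essentially verbatim. For the Lipschitz step, I would observe that Lemma~\ref{lem:approx} never used the specific form of the responses $y_i$; it only relied on the Lipschitzness of $\vw\mapsto\ip{\vw}{\vx_i}-y_i$ and a uniform bound $R_X$ on covariate norms. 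Hence the deterministic bound $\abs{\lambda_{\min}(X_GS^1_GX_G^\top)-\lambda_{\min}(X_GS^2_GX_G^\top)}\le 2G\tau M^2 R_X^3$ carries over unchanged. Finally, I would set $\tau = c_\varepsilon/(400 R_X^3 M)$, put a $\tau$-net of size $(800R_X^3/c_\varepsilon)^d$ on $\cB_2(\vwo,1/M)$, union-bound, and invoke Lemma~\ref{lem:rx} for $R_X=\bigO{R\sqrt{d+n}}$ to absorb the net size into a $\exp(-\Om{n-d\log(d+n)})$ failure probability.

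The main obstacle is the expectation step: I need $c_\varepsilon$ to stay bounded away from zero \emph{uniformly} in $\vu$, $t\in[0,1]$, and $M$ subject only to $M\sigma\le 1$. If $M\sigma$ were allowed to grow, the factor $M\varepsilon$ could dominate $t\ip\vu\vx$ and drive the multiplier $\min\{\cdot,1\}$ arbitrarily small on a constant-probability event, killing the lower bound---this is precisely why the hypothesis $M\le 1/\sigma$ is imposed and why the theorem only guarantees $\norm{\vw^K-\vwo}_2\le\bigO\sigma$ (the peeling cannot refine beyond the scale $1/M\sim\sigma$). The explicit Gaussian calculation for $c_\varepsilon\ge 0.52$ is also non-routine because the noise couples the two integrals that were independent in the noiseless setting, so the two-dimensional reduction via rotational symmetry has to be carried out carefully, keeping track of both $t$ and $M\sigma$ simultaneously.
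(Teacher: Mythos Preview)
Your proposal is correct and follows essentially the same four-step route as the paper: pointwise expectation defining $c_\varepsilon$, Bernstein concentration as in Lemma~\ref{lem:point-conv}, the observation that Lemma~\ref{lem:approx} is agnostic to the form of $y_i$ and hence carries over unchanged, and a $\tau$-net union bound with $R_X$ controlled by Lemma~\ref{lem:rx}. The one place the paper is more explicit is the Gaussian computation of $c_\varepsilon \geq 0.52$: after integrating out $x_2$ you are left with a two-dimensional integral in $(x_1,\epsilon)$, and the paper handles the $v_1^2$-term by the orthogonal rotation $z = (ax_1-b\epsilon)/\sqrt{a^2+b^2}$, $w = (bx_1+a\epsilon)/\sqrt{a^2+b^2}$ with $a=Mr$, $b=M\sigma$, which makes $z\perp w$ standard Gaussian and lets the weight depend on $z$ alone---this is precisely the decoupling step your closing paragraph correctly flags as non-routine.
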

\begin{proof}
Let $\vx \sim \cD, \epsilon \sim \cD_\varepsilon$ and let $y = \ip\vwo\vx + \epsilon$ be the response of an uncorrupted data point and $\vw \in \cB_2\br{\vwo,\frac1M}$ be any fixed model. Then if we let $\vDelta := \vw-\vwo$, the weight $s$ that the model $\vw$ would cause \stir to put on this (clean) data point must satisfy $s \geq \min\bc{\frac1{\abs{\ip\vDelta\vx - \epsilon}},M}$. This gives us, for any fixed $\vv \in S^{d-1}$,
\[
\E{\vv^\top X_GS_GX_G^\top\vv} \geq c_\varepsilon\cdot GM,
\]
where we define,
\[
c_\varepsilon := \inf_{\substack{0\leq r\leq\frac1M\\\vu,\vv\in S^{d-1}}}\bc{\Ee{\vx\sim\cD,\epsilon\sim\cD_\varepsilon}{\min\bc{\frac1{\abs{Mr\ip\vu\vx-M\epsilon}},1}\cdot\ip\vx\vv^2}}
\]
We analyze the constant $c$ for the Gaussian case at the end of the proof. For now, we proceed as in Lemma~\ref{lem:point-conv} and realize that the sub-Gaussian norm calculations continue to hold in this case since they simply upper bound the weights by $M$, and get
\[
\P{\lambda_{\min}(X_GS_GX_G^\top) < 0.995 c_\varepsilon\cdot GM} \leq 2\cdot9^d\exp\bs{-\frac{mn(0.005c_\varepsilon)^2}{8R^4}}
\]
After this we notice that the proof of Lemma~\ref{lem:approx} pays no heed to corruptions or additional noise and hence, continues to hold in this setting too. Proceeding as in the proof of Lemma~\ref{lem:wsc-wss} to set up a $\tau$-net over $\cB_2\br{\vwo,\frac1M}$ and taking a union bound over this net finishes the proof.

For the special case of $\cD = \cN(\vzero,I_d)$ and $\cD_\varepsilon = \cN(0,\sigma^2)$, by rotational symmetry, we can, without loss of generality, take $\vu = (1,0,0,\ldots,0)$ and $\vv = (v_1,v_2,0,0,\ldots,0)$ where $v_1^2 + v_2^2 = 1$. Thus, if $x_1,x_2,\epsilon \sim \cN(0,1)$ i.i.d. then $c \geq \inf_{(v_1,v_2) \in S^1, r \in \bs{0,\frac1M}} f(v_1,v_2,r)$ where
	\begin{align*}
		f(v_1,v_2,r) &= \Ee{x_1,x_2,\epsilon\sim\cN(0,1)}{\min\bc{\frac1{\abs{Mr x_1- M\sigma\epsilon}},1}\cdot(v_1^2x_1^2 + v_2^2x_2^2 + 2v_1v_2x_1x_2)}\\ 
		&= \Ee{x_1,x_2,\epsilon\sim\cN(0,1)}{\min\bc{\frac1{\abs{Mr x_1- M\sigma\epsilon}},1}\cdot(v_1^2x_1^2 + v_2^2x_2^2)}\\
		&= \Ee{x_1,\epsilon\sim\cN(0,1)}{\min\bc{\frac1{\abs{Mr x_1- M\sigma\epsilon}},1}\cdot(v_1^2x_1^2 + v_2^2)}\\
		&= v_1^2\cdot\underbrace{\Ee{x_1,\epsilon\sim\cN(0,1)}{\min\bc{\frac1{\abs{Mr x_1- M\sigma\epsilon}},1}x_1^2}}_{(A)} + v_2^2\cdot\underbrace{\Ee{z\sim\cN(0,1)}{\min\bc{\frac1{M\sqrt{r^2+\sigma^2}\abs{z}},1}}}_{(B)}
	\end{align*}
	where in the second step we used the independence of $x_1,x_2$ and $\E{x_2} = 0$, in the third step we used independence once more and $\E{x_2^2} = 1$. In the fourth step, we substituted $\sqrt{r^2+\sigma^2}z = r x_1- \sigma\epsilon$ and noticed that $r x_1- \sigma\epsilon \sim \cN(0,(r^2+\sigma^2))$ i.e. $z \sim \cN(0,1)$. To bound $(B)$ we notice $r \leq \frac1M$ and $M \leq \frac1\sigma$ and use standard bounds on Gaussian and exponential integrals to get
	\[
	(B) \geq \Ee{z\sim\cN(0,1)}{\min\bc{\frac1{\sqrt2\abs{z}},1}} \geq 0.815
	\]
	To bound $(A)$, we use the fact that pairwise orthogonal projections of a standard Gaussian vector yield independent variables. Thus, if we denote $a = Mr, b = M\sigma$ and $z = \frac{ax_1 - b\epsilon}{\sqrt{a^2+b^2}}, w = \frac{bx_1 + a\epsilon}{\sqrt{a^2+b^2}}$, then $z,w\sim\cN(0,1)$ as well as $z\perp w$. Thus, we have
	\begin{align*}
	(A) &= \Ee{z,w\sim\cN(0,1)}{\min\bc{\frac1{M\sqrt{r^2+\sigma^2}\abs z},1}\cdot\br{\frac{r^2z^2+\sigma^2w^2+2r\sigma zw}{r^2+\sigma^2}}}\\
	&\geq \Ee{z,w\sim\cN(0,1)}{\min\bc{\frac1{\sqrt2\abs z},1}\cdot\br{\frac{r^2z^2+\sigma^2w^2}{r^2+\sigma^2}}}\\
	&\geq \frac{0.52r^2}{r^2+\sigma^2} + \frac{0.815\sigma^2}{r^2+\sigma^2} = 0.52 + \frac{0.295\sigma^2}{r^2+\sigma^2}
	\end{align*}
	where in the second step we used $M \leq \frac1\sigma$ and $r \leq \frac1M$, independence of $z$ and $w$ and the fact that $\E{w} = 0, \E{w^2} = 1$ and the last step uses standard bounds on Gaussian and exponential integrals.
\end{proof}

\subsection{Bounding the Weights on Good Points}
Although Lemma~\ref{lem:bad} continues to hold in this case, since good points also incur modifications to their response values, albeit modifications that are stochastic and not adversarial, we need an analogous result for the good points in this case as well.

\begin{lemma}
\label{lem:good-noisy}
Suppose $\sigma$ is the sub-Gaussian norm of the noise distribution $\cD_\varepsilon$ and the identity of the good points $G$ is chosen independently of the covariates. Then for any $M > 0$, if $S$ is the diagonal matrix of $M$-truncated weights assigned to the data points by a model $\vw$, then with probability at least $1 - \exp(-\Om{n - d})$,
\[
\norm{X_GS_G\vepsilon_G}_2 \leq 4MG\sigma\sqrt{1.01}
\]
\end{lemma}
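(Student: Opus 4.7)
The plan is to decouple the data-dependent weights $s_i$ from the noise realizations via the crude uniform upper bound $s_i \leq M$ given by truncation, and then combine a deterministic operator-norm bound on $X_G S_G$ with a standard concentration bound on $\norm{\vepsilon_G}_2$. Since every weight is $M$-truncated, we have $S_G \preceq M\cdot I$, and sub-multiplicativity of the operator norm together with Cauchy--Schwarz gives
\[
\norm{X_G S_G\vepsilon_G}_2 \leq \norm{X_G S_G}_2\cdot\norm{\vepsilon_G}_2 \leq M\norm{X_G}_2\cdot\norm{\vepsilon_G}_2.
\]

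Next I would invoke Lemma~\ref{lem:ssc-sss} on the fixed set $G$ (using the stated assumption that $G$ is chosen independently of the covariates) to conclude $\norm{X_G}_2 = \sqrt{\lambda_{\max}(X_G X_G^\top)} \leq \sqrt{1.01 G}$ with probability at least $1 - \exp(-\Omega(n-d))$. For the noise factor, since $\vepsilon_G$ has $G$ independent entries each $\sigma$-sub-Gaussian with zero mean, each $\epsilon_i^2$ is sub-exponential with norm $O(\sigma^2)$, so a routine application of Bernstein's inequality to the sum $\sum_{i\in G}\epsilon_i^2$ yields $\norm{\vepsilon_G}_2 \leq 4\sigma\sqrt{G}$ with probability $1 - \exp(-\Omega(G))$. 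Since $G = \Omega(n)$ (as $\alpha$ is bounded away from $1$), this probability is also $1 - \exp(-\Omega(n-d))$. A union bound and multiplication then produce the stated bound $4MG\sigma\sqrt{1.01}$.

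The main subtlety, which the plan sidesteps, is that the weights $s_i$ are \emph{not} independent of the noise values $\epsilon_i$, since the residual $\vr_i = \ip{\vw}{\vx_i} - y_i$ depends on $\epsilon_i$ through $y_i$. This coupling prevents directly interpreting $\sum_{i \in G} s_i\epsilon_i\vx_i$ as a sub-Gaussian sum with deterministic coefficients, so one cannot straightforwardly apply vector Bernstein or Hoeffding-type bounds to the quantity directly. Absorbing the weight dependence into the deterministic envelope $s_i\leq M$ and then applying Cauchy--Schwarz elegantly bypasses this issue at the cost of slightly loose constants, which nonetheless match the $4\sqrt{1.01}$ factor stated in the lemma.
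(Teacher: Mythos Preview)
Your proposal is correct and follows essentially the same route as the paper: both bound $\norm{X_GS_G\vepsilon_G}_2 \leq M\sqrt{\lambda_{\max}(X_GX_G^\top)}\cdot\norm{\vepsilon_G}_2$, invoke Lemma~\ref{lem:ssc-sss} for the operator-norm factor, and then control $\norm{\vepsilon_G}_2$ by sub-Gaussian concentration. The only cosmetic difference is that the paper bounds $\norm{\vepsilon_G}_2$ via an $\epsilon$-net over $S^{G-1}$ plus Hoeffding, whereas you use Bernstein on $\sum_{i\in G}\epsilon_i^2$; both yield $\norm{\vepsilon_G}_2 \leq 4\sigma\sqrt G$ with probability $1-\exp(-\Omega(G))$.
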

\begin{proof}
We have, by applying Lemma~\ref{lem:ssc-sss}, with probability at least $1 - \exp(-\Om{n - d})$,
\[
\norm{X_GS_G\vepsilon_G}_2 \leq \sqrt{\lambda_{\max}(X_GX_G^\top)}\cdot\norm{S_G\vepsilon_G} \leq \sqrt{1.01G}\cdot\norm{S}_2\norm{\vepsilon_G}_2 \leq \sqrt{1.01G}M\cdot\norm{\vepsilon_G}_2,
\]
where the last inequality follows since $S$ is a diagonal matrix and by $M$-truncation, the maximum value of any weight is $M$. Now, since our noise is $\sigma$ sub-Gaussian and unbiased, we have, for any fixed $\vu \in S^{G-1}$, $\E{\ip\vepsilon\vu} = 0$, as well as, by applying the Hoeffding's inequality,
\[
\P{\abs{\ip\vepsilon\vu} \geq t} \leq 3\exp\br{-\frac{t^2}{2\sigma^2}}
\]
Now, if $\vu^1,\vu^2 \in S^{G-1}$, such that $\norm{\vu^1-\vu^2}_2 \leq \frac12$, then we have $\abs{\ip{\vu^1-\vu^2}\vepsilon} \leq \frac12\cdot\norm\vepsilon_2$. Thus, taking a union bound over a $1/2$-net over $S^{G-1}$ gives us
\[
\P{\norm\vepsilon_2 = \max_{\vu \in S^{G-1}}\ip\vu\vepsilon \geq \frac12\cdot\norm\vepsilon_2 + t} = \P{\norm\vepsilon_2 \geq 2t} \leq 3\cdot5^G\exp\bs{-t^2/2\sigma^2}
\]
Setting $t = \sigma\sqrt{4G}$ establishes the result.
\end{proof}

\section{Robust Linear Bandits}
\label{app:wucb}

In this section, we briefly discuss the linear contextual bandit problem with corrupted arm pulls. We refer the reader to \cite{KapoorPK2018} for a more relaxed introduction to the problem as well as formal regret bounds. Indeed, the discussion here is adapted from the discussion in \cite{KapoorPK2018}.

\subsection{Problem Setting}
The stochastic linear contextual bandit framework \cite{Abbasi-YadkoriPS2011,LiCLS2010} considers a (possibly infinite) set of \emph{arms}. Arms correspond to various actions that can be performed by the algorithm. For instance, in a recommendation setting, arms may correspond to various products that are available for sale, for instance, at an e-commerce website, or in a quantitative trading setting, arms may correspond to stocks that are available for sale/purchase.

Every arm $\va$ is parametrized by a vector $\va \in \bR^d$ (we abuse notation to denote the arm and its corresponding parametrization using the same notation). Recall that the set of all arms is potentially infinite. However, not all arms may be available at every time step. For instance, an e-commerce website would not like to recommend products not currently in stock. Similarly, stocks not currently in one's possession cannot be sold.

At each time step $t$, the algorithm receives a set of $n_t$ arms (also called \emph{contexts}) $A_t = \bc{\vx^{t,1},\ldots,\vx^{t,n_t}} \subset \bR^d$ that can be played or \emph{pulled} in this round. Pulling an arm is akin o performing the action associated with that arm, for example, recommending an item or selling a stock unit. The context set $A_t$, as well as the number $n_t$ of contexts available can vary across time steps. The algorithm selects and pulls an arm $\hat\vx^t \in A_t$ as per its arm selection policy. In response, a reward $r_t$ is generated. Let $\cH^t = \bc{A_1,\hat\vx^1,r_1,\ldots,A_{t-1},\hat\vx^{t-1},r_{t-1},A_t,\hat\vx^t}$.\\

\subsection{Adversary Model}
In the stochastic linear bandit setting, as has been studied in prior work \cite{Abbasi-YadkoriPS2011,LiCLS2010} , at every time step, the reward $r_t$ is generated using a \emph{model vector} $\bto \in \bR^d$ (that is not known to the algorithm) as follows: $r_t = \ip{\bto}{\xtt} + \epsilon_t$, where $\epsilon_t$ is a \emph{noise} value that is typically assumed to be (conditionally) centered and $\sigma$-sub-Gaussian, i.e., $\E{\epsilon_t\cond\cH^t} = 0$, as well as for some $\sigma > 0$, we have $\E{\exp(\lambda\epsilon_t)\cond\cH^t} \leq \exp(\lambda^2\sigma^2/2)$ for any $\lambda > 0$.

However, recent works \cite{KapoorPK2018,LykourisML2018} have considered settings where the rewards may suffer not only sub-Gaussian noise, but also adversarial corruptions that are introduced by an \emph{adaptive adversary} that is able to view the on-goings of the online process and at any time instant $t$, \emph{after} observing the history $\cH^t$ and the ``clean'' reward value, i.e., $\ip{\bto}{\xtt} + \epsilon_t$, is able to add a corruption value $b_t$ to the reward. For notational uniformity, we will assume that for time instants where the adversary chooses not to do anything, $b_t = 0$. Thus, the final reward to the player at every time step is $r_t = \ip{\bto}{\xtt} + \epsilon_t + b_t$. This model is described in Problem Setting~\ref{algo:prob-set-lin}.

For sake of simplicity we will assume that, for some $B > 0$, the final (possibly corrupted) reward presented to the player satisfies $r_t \in [-B,B]$ almost surely. The only constraint the adversary need observe while introducing the corruptions is that at no point in the online process, should the adversary have corrupted more than an $\eta$ fraction of the observed rewards. Formally, let $G_t = \bc{\tau < t: b_\tau = 0}$ and $B_t = \bc{\tau < t: b_\tau \neq 0}$ denote the set of ``good'' and ``bad'' time instances till time $t$. We insist that $\abs{B_t} \leq \eta\cdot t$ for all $t$.

\setcounter{tmpcounter}{\value{algorithm}}
\setcounter{algorithm}{\value{modelcounter}}
\makeatletter
\renewcommand{\ALG@name}{\bfseries \small Problem Setting}
\makeatother
\begin{algorithm}[t]
	\caption{\small Adversarial Linear Bandits}
	\label{algo:prob-set-lin}
	\begin{algorithmic} 
		\FOR {$t=1,2,3..$}
		\STATE Player receives a set of contexts $A_t = \bc{\vx^{t,1},\ldots,\vx^{t,n_t}} \subset \bR^d$
		\STATE Player plays an arm, $\hat\vx^t \in A_t$
		\STATE Clean reward is generated $r^\ast_t = \ip{\bto}{\xtt} + \epsilon_t$ conditioned on $\cH^t$
		\STATE Adversary inspects $\hat\vx^t, r^\ast_t, \cH^t$ and chooses $b_t$ \COMMENT{while making sure $\abs{\tau \leq t: b_\tau \neq 0} \leq \eta\cdot (t+1)$}%
		\STATE Player receives reward, $r_t = r^\ast_t + b_t$
		\ENDFOR
	\end{algorithmic}
\end{algorithm}
\makeatletter
\renewcommand{\ALG@name}{\small Algorithm}
\makeatother
\addtocounter{modelcounter}{1}
\setcounter{algorithm}{\value{tmpcounter}}
\begin{algorithm}[t]
	\caption{\wucb: Weighted UCB for Linear Contextual Bandits}
	\label{algo:rucbl-rep}
	\begin{algorithmic}[1]
			{\small
			\REQUIRE Upper bounds $\sigma_0$ (on sub-Gaussian norm of noise distribution), $B$ (on magnitude of corruption), $\alpha_0$ (on fraction of corrupted points), initial truncation $M_1$, increment rate $\eta$%
			\FOR{$t = 1,2,\dots,T$}
				\STATE Receive set of arms $A_t$
				\STATE Play arm $\hat{\vx}^t = \underset{{\vx \in A_t,\vw \in C_{t-1}}}{\arg\max}\ \ip{\vx}{\vw}$
				\STATE Receive reward $r_t$
				\STATE $(\hvw^t, S^t) \< $ \stir$\br{\bc{\hat\vx^\tau,r_\tau}_{\tau=1}^t, M_1, \eta}$\COMMENT{Denote $S^t = \diag(s^t_1, s^t_2, \ldots, s^t_t)$}
				\STATE $V^t \< \sum_{\tau \leq t}s^t_\tau\hat{\vx}^\tau(\hat{\vx}^\tau)^\top$, $X^t \leftarrow \bs{\hat{\vx}^1, \hat{\vx}^2, \ldots, \hat{\vx}^t}$
				\STATE $\bar\vw^t \< (V^t)^{-1}X^tS^t\vy$
				\STATE $C_t \< \{\vw: \norm{\vw - \bar\vw^t}_{V^t} \leq \sigma_0\sqrt{d\log T} + \alpha_0 BT\}$
			\ENDFOR
			}
	\end{algorithmic}
\end{algorithm}

\subsection{Notion of Regret}
The goal of the algorithm is to maximize the cumulative reward it receives over the time steps $\sum_{t=1}^Tr_t$. However, a more popular technique of casting this objective is in the form of \emph{cumulative pseudo regret}. At time $t$, let $\xto = \arg\max_{\vx \in A_t}\ip{\bto}{\vx}$ be the arm among those available that yields the highest expected (uncorrupted) reward. The cumulative pseudo regret of a policy $\pi$ is defined as follows
\[
\bar R_T(\pi) = \sum_{t=1}^T\ip{\bto}{\xto} - \E{r_t}.
\]
Note that the best arm here may change across time-steps.

\subsection{\wucb: An Algorithm for Robust Linear Bandits}
We use the notation $\norm{\vx}_M = \sqrt{\vx^\top M\vx}$ for a vector $\vx \in \bR^d$ and a matrix $M \in \bR^{d \times d}$. We reproduce, for convenience, the \wucb algorithm in Algorithm~\ref{algo:rucbl-rep}. \wucb builds upon the OFUL principle \cite{Abbasi-YadkoriPS2011} for linear contextual bandits. At every step, \wucb uses rewards obtained from previous arm pulls to obtain an estimate $\btt$ of the true model vector $\bto$. 

Whereas classical algorithms utilize ordinary least squares to solve this problem, \wucb utilizes \stir (actually \girls for sake of speed) to obtain this estimate. This lends resilience to the algorithm against the (possibly several) past arm pulls whose rewards got corrupted by the adversary. The previous work of \cite{KapoorPK2018} used the \torrent algorithm for the same purpose.

The next step in executing the OFUL principle is the construction of a \emph{confidence set}. It is common to use an ellipsoidal confidence set with the ellipsoid induced by the covariance matrix of the arm vectors pulled so far. The work of \cite{KapoorPK2018} modifies this to only consider arms considered as clean by the \torrent algorithm while constructing the confidence ellipsoid.

Since \stir, instead of selecting a specific subset of arms like \torrent, instead would assign weights to all previously pulled arms, with a small weight indicating a high likelihood of the arm pull being a corrupted one and a large weight indicating a high likelihood of the arm pull being a clean one. Thus, \stir utilizes these weights to construct a \emph{weighted covariance matrix} which is then used to define the confidence ellipsoid and carry out the arm selection step.

\end{document}